  \tikzset{%
    circle dotted/.style={dash pattern=on 0.05mm off 2mm,line cap=round}
    pics/fake box/.style args={#1 with dimensions #2 and #3 and #4}
    {code={\draw[gray,ultra thin,fill=#1]  
      (0,0,0) coordinate(-front-bottom-left) to
       ++ (0,#3,0) coordinate(-front-top-right) --
       ++ (#2,0,0) coordinate(-front-top-right) --
       ++ (0,-#3,0) coordinate(-front-bottom-right) -- cycle;
      \draw[gray,ultra thin,fill=#1] (0,#3,0)  --
       ++ (0,0,#4) coordinate(-back-top-left) --
       ++ (#2,0,0) coordinate(-back-top-right) --
       ++ (0,0,-#4)  -- cycle;
      \draw[gray,ultra thin,fill=#1!80!black] (#2,0,0) --
       ++ (0,0,#4) coordinate(-back-bottom-right) --
       ++ (0,#3,0) -- ++ (0,0,-#4) -- cycle;
      \path[gray,decorate,decoration={text effects along path,text={CONV}}] 
        (#2/2,{2+(#3-2)/2},0) -- (#2/2,0,0);
      }
    }
  } 
  \newcommand{\esssuppp}{\mathrm{ess\, supp}}
  \newcommand{\operator}[1]{\mathcal{#1}}
  \newcommand{\vecoperator}[1]{\mathsf{#1}} 
  \DeclareMathOperator{\vecForwardOp}{\vecoperator{A}}
  \DeclareMathOperator{\Radon}{\operator{A}}
  \DeclareMathOperator{\vecRadon}{\vecoperator{A}}
  \DeclareMathOperator{\Loss}{\operator{L}} 
  \DeclareMathOperator{\loss}{\ell}
  \DeclareMathOperator{\DataDiscrep}{\operator{L}}
  \DeclareMathOperator{\Regulariser}{\operator{S}}
  \DeclareMathOperator{\RecOp}{\operator{R}}
  \DeclareMathOperator{\vecRecOp}{\vecoperator{R}}
  \DeclareMathOperator{\vecConvAffine}{\vecoperator{W}}
  \newcommand{\datadomain}{\Xi}
  \newcommand{\random}[1]{\mathbbm{#1}}
  \newcommand{\signal}{f}
  \newcommand{\signaltrue}{\signal_{\mathrm{true}}}
  \newcommand{\stsignal}{\random{\signal}}
  \newcommand{\vecsignal}{\boldsymbol{\signal}}
  \newcommand{\data}{g}
  \newcommand{\stdata}{\random{\data}}
  \newcommand{\vecdata}{\boldsymbol{\data}}
  \newcommand{\datanoise}{e}
  \newcommand{\stdatanoise}{\random{\datanoise}}
  \newcommand{\dualvar}{h}
  \newcommand{\vecdualvar}{\boldsymbol{h}}
  \newcommand{\vecWF}{\vecoperator{DWF}}
  \newcommand{\visible}{\mathrm{vis}}
  \newcommand{\WFvis}{\WF^{\visible}}
  \DeclareMathOperator{\ReLU}{\vecoperator{ReLU}}
  \DeclareMathOperator{\ReLUOp}{ReLU}
  \DeclareMathOperator{\ResNet}{ResNet}
  \DeclareMathOperator{\vecResNet}{\vecoperator{ResNet}}
  \DeclareMathOperator{\Heav}{\vecoperator{H}}
  \DeclareMathOperator{\HeavOp}{\operator{H}}
\title{Deep Microlocal Reconstruction for Limited-Angle Tomography}
\author{%
  H\'ector Andrade-Loarca$^1$
  \and Gitta Kutyniok$^{1,2}$
  \and Ozan \"Oktem$^{3,4}$
  \and Philipp Petersen$^5$
}
\date{}
\begin{document}
\maketitle

\begin{abstract}
We present a deep learning-based algorithm to jointly solve a reconstruction problem and a wavefront set extraction problem in tomographic imaging. The algorithm is based on a recently developed digital wavefront set extractor as well as the well-known microlocal canonical relation for the Radon transform. We use the wavefront set information about x-ray data to improve the reconstruction by requiring that the underlying neural networks simultaneously extract the correct ground truth wavefront set and ground truth image. As a necessary theoretical step, we identify the digital microlocal canonical relations for deep convolutional residual neural networks. We find strong numerical evidence for the effectiveness of this approach.
\end{abstract}

\par\medskip\noindent
\textbf{Keywords:} Inverse problems, deep learning, tomography, microlocal analysis, wavefront set.
\par\smallskip\noindent
\textbf{Mathematics Subject Classification:} 35A18, 65T60, 68T10.


\footnotetext[1]{Department of Mathematics, LMU Munich, 80333 Munich, Germany, 
\texttt{$\{$kutyniok,andrade$\}$@math.lmu.de}}
\footnotetext[2]{Department of Physics and Technology, University of Troms\o, 9019 Troms\o, Norway}
\footnotetext[3]{Department of Mathematics, KTH - Royal Institute of Technology, SE-100 44 Stockholm, Sweden, \texttt{ozan@kth.se}}
\footnotetext[4]{Department of Information Technology, Division of Scientific Computing, Uppsala University, SE-751 05 Uppsala, Sweden, \texttt{ozan.oktem@it.uu.se}}
\footnotetext[5]{Faculty of Mathematics and Research Network Data Science, University of Vienna, 1090 Vienna, Austria \texttt{philipp.petersen@univie.ac.at}}

\section{Introduction}\label{sec:Intro}
Tomographic imaging aims at uncovering the interior 2D/3D structure of an object from a \emph{sinogram}, which is data obtained by repeatedly exposing the object to a particle or wave from different directions.  
This is a key example of an \emph{inverse problem} where one computationally attempts recover an unknown signal from data given as indirect observations.
A reconstruction method refers to an algorithm performing the recovery.

Inverse problems, like those that arise in tomographic imaging, are often \emph{ill-posed} which means that there can be multiple solutions consistent with the data or solution procedures that maximise consistency against measured data are sensitive to variations in data.
Such high sensitivity is referred to as instability, and it appears when the \emph{forward operator}, which models how a signal gives rise to corresponding noise-free data, is not continuously invertible.
Stability properties are further degraded for sparse-view data, which is when data is under-sampled, and for limited-angle data, which refers to unevenly sampled data.
\emph{Regularisation} refers to mathematical theory and methods for stabilising the solution procedure of an ill-posed inverse problem.
Many regularisers enforce stability by requiring consistency against a \emph{prior model}.
This prior should ideally encode known properties of the unknown signal one seeks to recover, and choosing an appropriate prior is an essential part of regularisation. 

Most reconstruction methods in tomography assume that measurements are collected from views that are evenly distributed around the object.
\emph{Limited-angle tomography} refers to a case when this is not fulfilled. 
Such problems arise naturally in many applications, like digital breast tomosynthesis \cite{Mall:2017aa,Baker:2011aa}, dental tomography \cite{Hyvonen:2010aa,Kalke:2014aa}, electron tomography \cite{quinto2008local,Oktem:2015aa}, transmission x-ray microscopy \cite{Huang:2020aa}, nondestructive testing \cite{Quinto:1998aa,Riis:2018aa}, geophysical prospecting \cite{hoop2009genRadon}, etc. 
This missing data significantly amplifies the instability in the corresponding reconstruction problem \cite{Davison:1983aa,Louis:1986aa}.
Hence, traditional reconstruction methods, like filtered back-projection (FBP) \cite{Bracewell:1967aa,Shepp:1974aa,Gullberg:1979aa}, that implicitly assume missing data is zero do not perform well in such situations.
Overall, it has been very challenging to develop regularisation methods that handle this instability as these methods somehow need to fill in the missing data without imposing too strong assumptions on the signal, see the brief survey in Subsection~\ref{subsec:SOTA}.
As a consequence, it has been challenging to develop practically useful reconstruction methods for limited-angle tomography that provide sufficient improvement over traditional reconstruction methods. 
In this context, we consider a reconstruction method practically useful if it is computationally feasible and does not require a user to set multiple hyperparameters.

\subsection{Main contributions}\label{subsec:introresults}
In this paper, we develop theory and algorithms for reconstruction in severely ill-posed inverse problems that arise in tomographic imaging with limited data.
In particular, we develop a data-driven reconstruction method for limited-angle tomography that is microlocally consistent, which means `filling in' missing data in a way that is consistent with how singularities in data are related to those in the signal.

Our approach relies on deep neural networks (DNNs) that integrate a handcrafted forward operator and a theoretical characterisation of how singularities in data are related to those in the signal. In this context, we model singularities through the concept of the \emph{wavefront set}, that will be introduced in detail in Section \ref{sec:PlanarTomo} below. For the Radon transform, which is the forward operator underlying the tomography problem, it is known exactly how an application of it affects the wavefront set of underlying data. Indeed, the wavefront set of an image and its Radon transform are linked through so-called \emph{canonical relations}. To use this a priori information on the wavefront set in the reconstruction problem, we set up the following method: Our algorithm consists of two pipelines that act in parallel. First, a \emph{reconstruction line} that maps the data to the (unknown) signal. This is a specific deep neural network using the \emph{Learned Primal-Dual architecture}, \cite{adler2018lpd}. Second, a \emph{micro-local line} that identifies the wavefront set of the signal by using the following three steps: \begin{inparaenum}[(a)] \item A neural network that extracts the wavefront set of the data. This neural network has been established earlier in \cite{andrade2019wfset} under the name DeNSE and is based on the interaction of the shearlet transform with the wavefront set of a function. \item An analytical computation relating the wavefront set in the data domain to the image domain in a way that mirrors the action of the Learned Primal-Dual architecture of the reconstruction line. The two lines are coupled via this step. \item A neural network with the U-Net architecture \cite{ronnenberger2015unet} that inpaints the wavefront set in the image domain that was inferred from the incomplete data. \end{inparaenum} 

The neural networks in the microlocal line and the reconstruction line are now trained in parallel on an artificial training set consisting of 2D phantoms (signals) with corresponding noisy and incomplete sinogram data. The phantoms are made up of random shapes that are demarcated by piecewise smooth curves given by splines of degree at most four. Several examples from the data set are shown in Figure \ref{fig:dataSet}.

The combined procedure, which we coin the \emph{joint reconstruction algorithm}, now attempts to satisfy two objectives on that data set: \begin{inparaenum}[(a)] \item The reconstruction returned from the reconstruction line should agree as closely as possible with the ground truth. \item The wavefront set returned by the microlocal line should resemble the ground truth wavefront set of the data accurately.
\end{inparaenum}

We refer to Section~\ref{sec:DigitalMicrolocal} for the complete description of the joint reconstruction algorithm and in particular Figure~\ref{fig:task-adapt} for an illustration of the underlying DNN architecture. 

Note that, due to the simplicity of the training data set, we can analytically compute the true wavefront set for those signals, which in turn is needed for the aforementioned joint training of the DNNs for wavefront set inpainting and reconstruction. In contrast, our test data consists of images of brains and associated noisy incomplete sinograms. Therefore, the \emph{training data is substantially different from the test data}. As such, the empirical numerical study also shows the transfer learning properties of our approach.

We wish to emphasise that this approach applies in principle to any inverse problem where the forward operator is a Fourier integral operator, as is the case for most inverse problems arising in imaging applications.
However, for simplicity, we chose to work out the theoretical results with associated algorithms and numerical examples only for the specific case of planar limited-angle tomography.

A central part of the proposed algorithm consists of the theoretical analysis of how the Learned Primal-Dual architecture affects the extracted wavefront set. At this point, the reader may justifiably wonder if it would not instead be possible to estimate the wavefront set of the output of this architecture directly via the wavefront set extractor DeNSE. This is theoretically possible but turned out to be practically infeasible. While the wavefront sets of the data set can be precomputed, this cannot be done for the outputs of the Learned Primal-Dual network as the Learned Primal-Dual network varies during the training. This means that during training a full application of DeNSE would have to be performed in every training step. On a modern machine, an application of DeNSE to a single image in the data set takes approximately $20$ seconds, which shows that this approach would slow down the training process dramatically.

\begin{figure}[H]
\centering
\begin{minipage}[t]{0.25\textwidth}
\centering
  \vspace{0pt} 
  \begin{tikzpicture}[spy using outlines={
      rectangle, 
      red, 
      magnification=3,
      size=0.3\linewidth, 
      connect spies}]
    \node{\includegraphics[width = \linewidth]{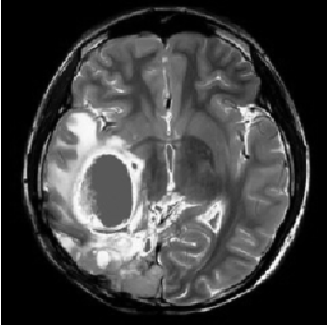}};
    \spy on (1.05,1.45) in node [left] at (-0.195\linewidth,0.345\linewidth);
    \spy on (0.6,-1.2) in node [left] at (-0.195\linewidth,-0.345\linewidth);
  \end{tikzpicture}
  \\
  {\small Phantom (ground truth signal)}
\end{minipage}
\hspace{3pt}
\begin{minipage}[t]{0.25\textwidth}
\centering
  \vspace{0pt} 
  \begin{tikzpicture}[spy using outlines={
      rectangle, 
      red, 
      magnification=3,
      size=0.3\linewidth, 
      connect spies}]
    \node{\includegraphics[width = \linewidth]{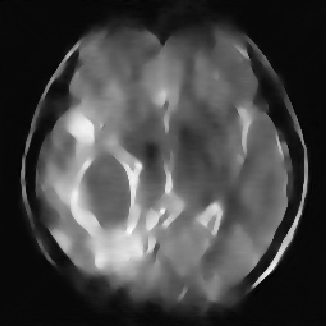}};
    \spy on (1.05,1.45) in node [left] at (-0.195\linewidth,0.345\linewidth);
    \spy on (0.6,-1.2) in node [left] at (-0.195\linewidth,-0.345\linewidth);
  \end{tikzpicture}  
  \\
  {\small Learned Primal-Dual network\\ (PSNR 24.90)}
\end{minipage}
\hspace{3pt}
\begin{minipage}[t]{0.25\textwidth}
\centering
  \vspace{0pt} 
  \begin{tikzpicture}[spy using outlines={
      rectangle, 
      red, 
      magnification=3,
      size=0.3\linewidth, 
      connect spies}]
    \node{\includegraphics[width = \linewidth]{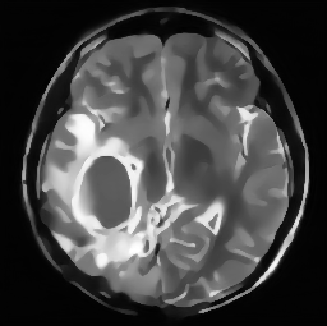}};
    \spy on (1.05,1.45) in node [left] at (-0.195\linewidth,0.345\linewidth);
    \spy on (0.6,-1.2) in node [left] at (-0.195\linewidth,-0.345\linewidth);
  \end{tikzpicture}  
  \\
  {\small Joint approach\\ (PSNR 30.20)}
\end{minipage}
\\[1em]
\begin{minipage}[t]{0.25\textwidth}
\centering
  \vspace{0pt} 
  \includegraphics[width = \linewidth]{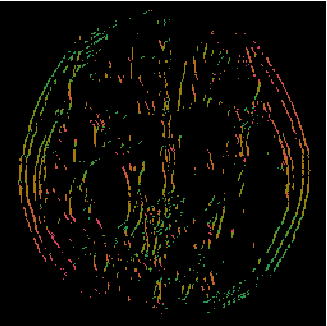}
  \\
  {\small Visible wavefront set of the ground truth extracted by DeNSE}
\end{minipage}
\hspace{3pt}
\begin{minipage}[t]{0.25\textwidth}
\centering
  \vspace{0pt} 
  \includegraphics[width = \linewidth]{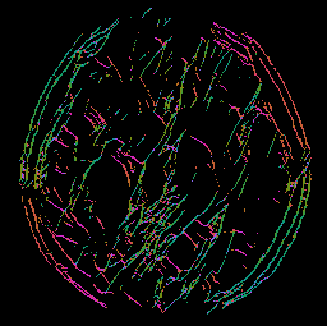}
  \\
  {\small Wavefront set inpainted as pre-processing using a trained U-Net}
\end{minipage}
\hspace{3pt}
\begin{minipage}[t]{0.25\textwidth}
\centering
  \vspace{0pt} 
  \includegraphics[width = \linewidth]{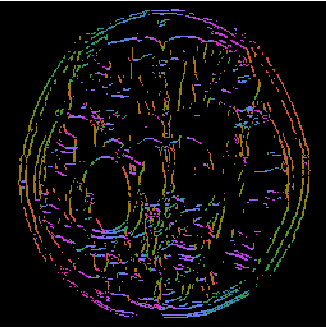}
  \\
  {\small Reconstructed wavefront set from joint approach}
\end{minipage}
\caption{Limited-angle parallel beam tomography with a realistic brain phantom.}
\label{fig:results}
\end{figure}
As a small appetiser, we conclude this overview of the main contributions by illustrating the superior performance of our approach.
Figure~\ref{fig:results} compares reconstruction performed by a DNN, here the learned primal-dual network, against our joint approach that combines this Learned Primal-Dual network with a DNN for inpainting the wavefront set. In this figure  the tomographic data is given in the form of highly noisy samples of the Radon transform with $40^\circ$ missing angular wedge. Reconstructions shown the in top row are from two supervised deep learning approaches, namely, Learned Primal-Dual (middle) and our joint approach (right). The latter is essentially the Learned Primal-Dual network combined with a learned wavefront set inpainting that  complements missing data in a microlocally consistent manner.

Both the Learned Primal-Dual network and the joint DNN for reconstruction and wavefront set inpainting were trained against the same training data with the same total number of training steps. The joint approach clearly shows the benefit of complementing missing data as it is able to recover essential features that are lost using the Learned Primal-Dual network. Moreover, the joint approach also seems to recover reasonably well singularities (edges) in parts of the image that are not in the visible wavefront set. This is remarkable as these singularities leave, according to microlocal theory, no measurable footprint in data.
Furthermore, applying wavefront set inpainting to data separately as a pre-processing step prior to reconstruction yields a significantly worse estimator for the wavefront set. 

This shows the benefit of performing these steps simultaneously as in the joint approach.
Figures~\ref{fig:F12} and \ref{fig:F13} present a more extensive comparison that also includes sparse-view tomography and other model based and data-driven methods.

\subsection{Outline of the paper}\label{subsec:Outline} 

Sections~\ref{sec:Intro}, \ref{sec:PlanarTomo} and \ref{subsec:InvProb} along with Appendix~\ref{app:DistrTheory} primarly serve as background material, whereas the main scientific contribution is contained in Sections~\ref{sec:ResNetMicroLocal} and \ref{sec:DigitalMicrolocal}. 
Finally, Section~\ref{sec:numresults} provides numerical examples that showcase the performance of the suggested approach and compares it to other methods for image reconstruction in limited-angle tomography. 
We present a more detailed outline below:

Section~\ref{sec:Intro} provides background motivation from applications along with an overview of the main scientific contributions (Subsection~\ref{subsec:introresults}) and a survey of current state-of-the-art for limited-angle tomography (Subsection~\ref{subsec:SOTA}).
This is followed by Section~\ref{subsec:InvProb} that mathematically formalises an inverse problem, both as an operator equation as in \eqref{eq:InvProbFreq} as well as a statistical inference problem as in \eqref{eq:InvProbBayes}. 
This section also introduces the notions of ill-posedness and regularisation (Subsection~\ref{subsec:IllPosedReg}). Emphasis is next on the Learned Primal-Dual network introduced in Subsection~\ref{subsec:LPDArchitecture}. 
This is a DNN with an architecture that incorporates the forward operator which is to be inverted.

\begin{figure}
    \centering
    \includegraphics[width = 0.25\textwidth]{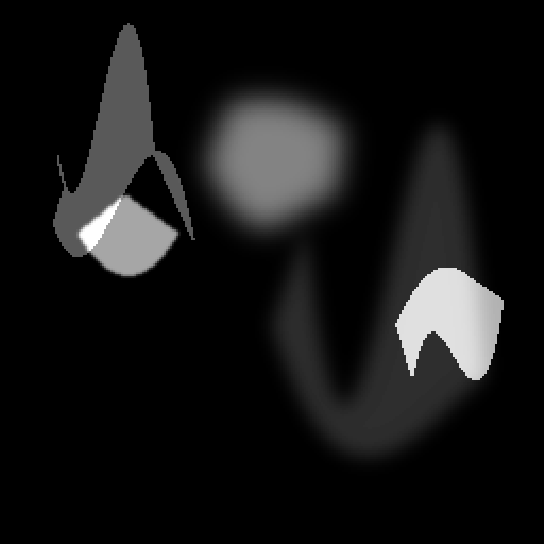} \includegraphics[width = 0.25\textwidth]{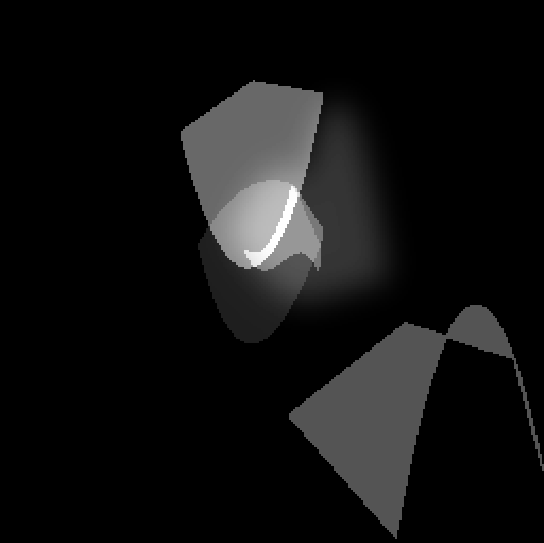}
    \includegraphics[width = 0.25\textwidth]{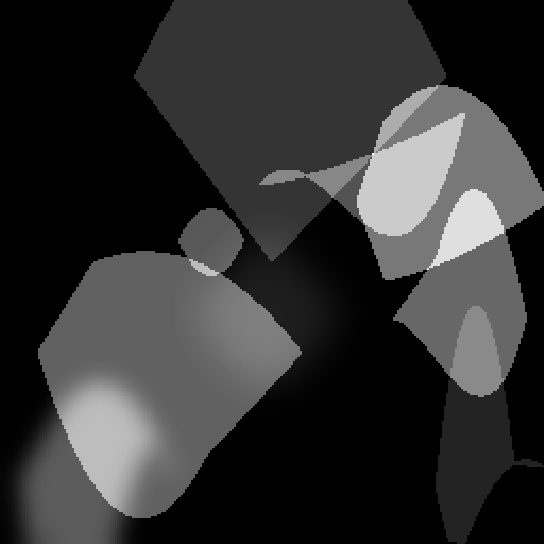}
    
    \smallskip
    
    \includegraphics[width = 0.25\textwidth]{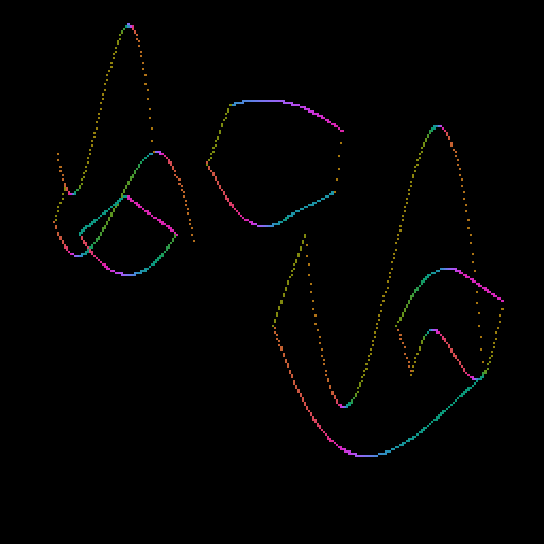} \includegraphics[width = 0.25\textwidth]{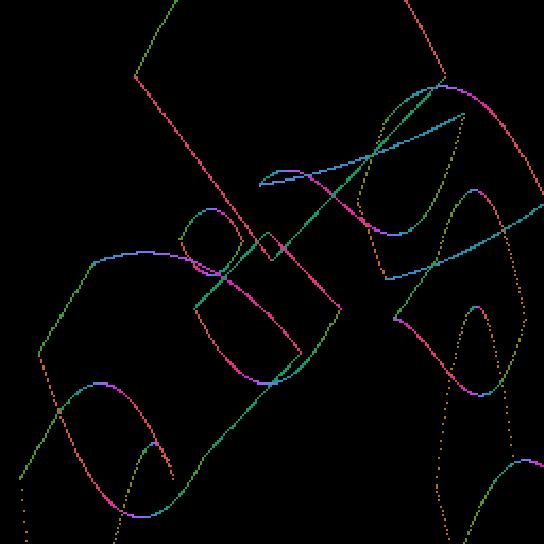}
    \includegraphics[width = 0.25\textwidth]{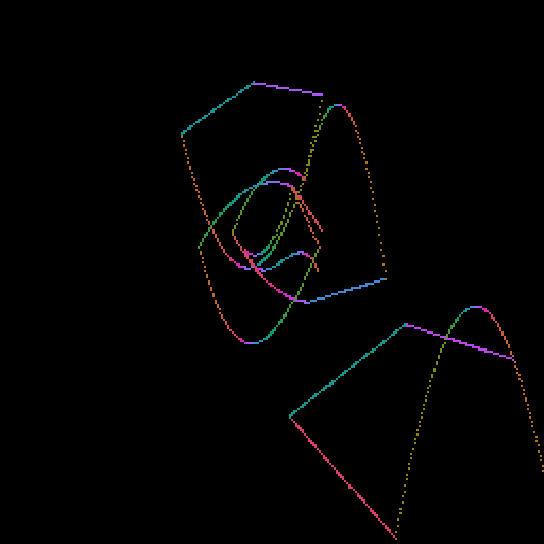}
    
    \smallskip
    
    \includegraphics[width = 0.25\textwidth]{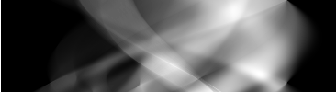}  \includegraphics[width = 0.25\textwidth]{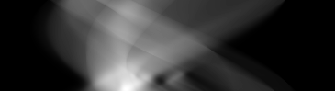} 
    \includegraphics[width = 0.25\textwidth]{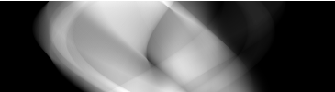}
    
    \smallskip
    
    \includegraphics[width = 0.25\textwidth]{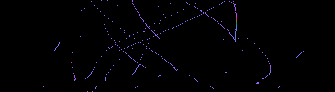}  \includegraphics[width = 0.25\textwidth]{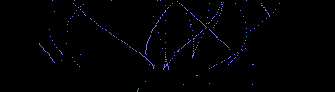} 
    \includegraphics[width = 0.25\textwidth]{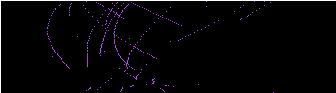}
    \caption{Some examples of the piecewise smooth data set comprised of functions which are piecewise polynomial and where the interfaces are given by splines of degree at most four. In the second row, we depict the associated analytically found wavefront sets. The third and fourth rows show, respectively, the corresponding sinograms and their wavefront set, both in the limited-angle setting with $40^{\circ}$ wedge. }
    \label{fig:dataSet}
\end{figure}

Background on the mathematics of tomographic imaging is provided in Section~\ref{sec:PlanarTomo}. There, we define the Radon transform arising in planar tomography (Definition~\ref{def:radontrans}) and recall some of its properties (Subsection~\ref{subsec:RadonProp}). 
Focus here is on extending the Radon transform and its corresponding back-projection (Definition~\ref{def:BackProj}) to tempered distributions. 
This uses basic notions from distribution theory provided in Appendix~\ref{app:DistrTheory} (some of the material in this appendix is also used later in Section~\ref{sec:ResNetMicroLocal}).
Subsection~\ref{subsec:RadonProp} also introduces the restricted Radon transform arising in limited-angle tomography along with its corresponding restricted back-projection defined in \eqref{eq:LimitedAngBP}.
Finally, Subsection~\ref{subsec:wavefrontset} formally defines the wavefront set (Definition~\ref{def:S1D4}) and then states the canonical relation for the Radon transform given in \eqref{eq:RadonMCR}. 
In \eqref{eq:RadonWFvis} we also provide a characterisation from microlocal analysis of the visible (and invisible) parts of the wavefront set for the Radon transform.

The main scientific contributions are contained in Sections~\ref{sec:ResNetMicroLocal} and \ref{sec:DigitalMicrolocal}.
More precisely, Section~\ref{sec:ResNetMicroLocal} provides a theoretical analysis of the propagation of the wavefront set through the distinct layers in a continuum version of the Learned Primal-Dual architecture with the Radon transform as forward operator. 
Section~\ref{sec:DigitalMicrolocal} then presents, in the digital setting, how the digital wavefront set is propagated through the layers of the Learned Primal-Dual architecture. 
This propagated wavefront set, along with the characterisation of the visible wavefront set, is used for setting up the DNN for wavefront set inpainting, which recovers the invisible part of the wavefront set of a signal from the visible part. 
This DNN for wavefront set inpainting is then combined with the Learned Primal-Dual network for reconstruction, and both DNNs are trained jointly following the task-adapted reconstruction paradigm outlined in \cite{adler2018taskadapt}, see Section~\ref{sec:jointreconinpaint} for further details. 

Finally, Section~\ref{sec:numresults} provides numerical evidence and benchmarks related to image reconstruction in limited-angle tomography. 

\subsection{Related work}\label{subsec:SOTA}
The severe ill-posedness associated with limited data in inverse problems has attracted much interest within the research community.
This survey will mainly focus on the development of theory and methods for inversion of the Radon transform from limited-angle data, which is an archetype of a severely ill-posed inverse problem.


\subsubsection{Analytic approaches}
Lambda tomography is one of the first examples of specifically designing a reconstruction method for limited data. 
It is an analytic approach that was initially developed for `inverting' the Radon transform from region of interest data \cite{Vainberg:1981aa,Faridani:1992aa,Faridani:1997aa,Katsevich:1997aa}. 
The idea is to replace the standard filter in filtered back-projection \cite{Bracewell:1967aa,Shepp:1974aa,Gullberg:1979aa} (that has infinite support) with a filter that takes a second derivative in the detector variable. 
This numerical derivative filter has small support, just near the line being evaluated, hence the reconstruction method is local and thus it applies to region of interest data. 
Lambda tomography was later applied to limited-angle data \cite{Katsevich:1997ab}, where it recovers the visible wavefront set \cite{quinto1993singularities,quinto2008local,Frikel:2013ab,Quinto:2017aa}. 
The recovery is only mildly ill-posed \cite{quinto2008local,krishnan2015microlocal}, which is in stark contrast to the severe ill-posedness that is associated with attempts at reconstructing a function from limited-angle Radon data.

Lambda tomography is computationally efficient and offers an improvement over the traditional filtered back-projection method. 
Its robustness and usefulness was successfully demonstrated when Lambda tomography was applied for solving the limited-angle problem with highly noisy data in electron tomography \cite{Quinto:2009aa}. 
The analytic nature of the method also means the method is feasible for time-critical and/or large-scale problems.
The drawbacks of Lambda tomography are similar to those of filtered back-projection, namely that its filter needs to be specifically designed for the acquisition geometry and that the prior implicitly contained for regularising the problem has limited power.

\subsubsection{Variational models for reconstruction}
Much effort has been spent on adapting variational models of the form in \eqref{eq:C4S2E2} to limited-angle tomography. 
Methods cited here aim to design regularisers that are specifically adapted for limited-angle tomography, e.g., by 
accounting for the anisotropic resolution due to limited-angle data.

Many variational models for limited-angle tomography build on modifying the total variation (TV) regulariser, like various anisotropic versions that were  introduced in \cite{Esedoglu:2004aa,Berkels:2006aa}, see also \cite[Section~2.4]{Burger:2013aa}.
These can account for the scanning configuration bearing in mind the missing angular region in limited-angle data, which was also  done in \cite{Chen:2013aa, Zhang:2021aa}.
This idea was further elaborated in \cite{Wang:2017ab}, where an iteratively re-weighted anisotropic TV regularisation method was introduced to approximate the sparsity of $\ell_0$ semi-norm.
A further development came in \cite{Xu:2019aa} which set-up an alternating (directional) edge-preserving diffusion based on the one-dimensional $\ell_0$ semi-norm and a (directional) smoothing model based on a one-dimensional Dirichlet energy. The aforementioned directions are dictated by the missing angular region, and the regularisation strategy takes into consideration the fact that the $\ell_0$ norm is better at preserving edges, while the $\ell_1$ norm is better at suppressing noise. 
A further refinement to better balance the need for edge-preservation against noise-suppression was presented in 
\cite{Deng:2021aa}. 
Based on the theory of visible and invisible wavefront sets developed in \cite{quinto1993singularities}, the authors of \cite{Deng:2021aa} propose a reconstruction method that encodes the visible singularities as priors to recover the invisible ones. The model utilises generalised $p$-shrinkage operators introduced in \cite{Chartrand:2014aa} as regularisers to perform edge-preserving smoothing while using visible edges as anchors to recover piecewise constant or piecewise smooth reconstructions, while noises and artefacts are suppressed or removed. 
Another similar approach minimises an $L_1/L_2$ term on the gradient with additional box constraint that is reasonable for imaging applications \cite{Wang:2021aa}.

Other attempts at sparsity promoting variational models for limited-angle tomography rely on dictionary learning \cite{Tan:2015aa} or use a regulariser that promotes sparse solutions with respect to wavelets \cite{Wang:2017aa,Zhang:2018aa} or curvelets/shearlets \cite{Frikel:2013aa,Riis:2018aa}.
One can further constrain a sparse solution against a given prior image as shown in \cite{Chen:2008aa,Wang:2017aa,Zhang:2018aa,Gong:2019aa}.

Variational models tend to preserve edges better and reduce streak artefacts that are common in limited-angle tomography.
However, the improvements are most notable for the recovery of simplistic images, like piecewise constant or piecewise smooth images. 
Furthermore, this improvement is only realised when (multiple) hyper-parameters are properly tuned.
Finally, variational models are computationally demanding regarding run time and memory footprint, which in turn limits their usefulness in time-critical and/or large-scale imaging studies. 
In summary, the above cited variational models offer surprisingly little, if any, benefit in many cases where images have more complex features that are essential for their interpretation.

\subsubsection{Deep learning based methods for reconstruction}
There are several attempts at performing tomographic reconstruction by deep neural networks that are extensively surveyed in \cite{Arridge:2019aa}.
Most can be viewed as using techniques from deep learning to approximate different estimators for the statistical formulation of the reconstruction problem in \eqref{eq:InvProbBayes}. 

A popular approach for using deep learning in tomographic image reconstruction is to use a trained DNN as a  post-processing step for improving an initial imperfect reconstruction.
Such an approach was used for limited-angle tomography in \cite{Huang:2020aa}, which trains a U-Net against synthetic ellipsoid data and multi-category data to reduce artefacts from images obtained by filtered back-projection.

A more domain-adapted approach that outperforms post-processing is to consider DNN architectures for reconstruction that are obtained from unrolling a suitable iterative scheme.
One example is the Learned Primal-Dual network in \cite{adler2018lpd} outlined in Subsection~\ref{subsec:LPDArchitecture} that incorporates a handcrafted forward operator and the adjoint of its derivative along with the acquisition geometry. 
This DNN was recently further adapted to limited-angle tomography in breast tomosynthesis \cite{Teuwen:2021aa} by incorporating additional prior information about the geometry in the form of the thickness measurement of the breast under compression. 

A different deep learning based approach for limited-angle tomography is presented in \cite{bubba2018learning}.
Here, one learns the invisible part of the image using the visible part of its shearlet coefficients. This amounts to learning an anisotropic regulariser in a variational model.

\subsubsection{Sinogram inpainting}
An entirely different approach to limited-angle tomography is to fill in the missing angular data by some extrapolation scheme (sinogram inpainting or sinogram-recovery).
This pre-processing step needs to be done in a stable manner and \cite{Defrise:1983aa,Defrise:1984aa} uses a regularised iterative scheme for this purpose.
Another approach uses projection onto convex sets to ensure the extrapolated sinogram is indeed valid by making sure it lies in the range of the Radon transform (Helgason-Ludwig consistency condition) \cite{Kudo:1991aa}.

As to be expected, there have also been attempts at using deep learning for sinogram inpainting.
Here, much work has been inspired by the success that generative adversarial networks (GANs) have had in restoring missing parts of an image (image inpainting). 
In particular, \cite{nazeri2019edgeconnect} develops a deep learning-based image inpainting where one jointly trains the DNN that inpaints the edges and the image, using the philosophy `lines first, colour next'. 
The DNN for wavefront set inpainting (Subsection~\ref{sec:jointreconinpaint}) that we use in this paper is strongly inspired by \cite{nazeri2019edgeconnect}. In our case, the missing parts are in the sinogram domain. Consequently, we need to use the reconstruction method to map the singularities to the image domain.

Most approaches for deep learning based sinogram inpainting are based on setting up and training a GAN to generate the missing sinogram data in order to suppress the streak artefacts from the truncated sinogram in limited-angle data.
An example of such an approach is \cite{Li:2019aa} that uses a U-Net generator and patch-design discriminator in the GAN to make the network suitable for standard medical tomography images. 
The GAN is trained against paired limited-angle and full sinogram data using a joint projection domain and image domain loss function where the weighted image domain loss can be added by back-projection.
In this regime, we also refer to  \cite{Wang:2021ab,Podgorsak:2021aa} for similar approach based on GANs.

\subsubsection{Joint sinogram inpainting and reconstruction}
The final series of methods jointly perform the two tasks of sinogram inpainting and reconstruction. 
A variational model for doing this is presented in \cite{Tovey:2019aa} where the resulting non-convex and non-smooth minimisation is solved using an alternating (block) descent approach. 

A deep learning based approach is developed in \cite{Zhao:2018aa}. This approach combines a sinogram inpainting network and an image processing network. A key step is to use layers that correspond to Radon transform and its inverse inserted into existing convolutional network architectures. These allow one to go between sinogram and image domains and one can use the image processing network to reduce the artefacts caused by inconsistencies in the inpainted sinogram generated by the sinogram inpainting network. The three parts form an end-to-end network from sinogram domain to image domain, with benefits of taking both image error and sinogram error into account in the sync process in supervised training, i.e., with limited-angle/full-view sinogram pairs.
To tackle this training data bottleneck, we develop an unsupervised train method with only limited-angle projection on our proposed network. Inspired by the observation: reconstruction and projection can form a closed loop, we can derive a fake projection from the reconstructed image, and the disparity between the fake projection and real projection provides feedback signals to train the proposed network unsupervised.

Finally, we also mention \cite{Zhou:2021aa} that develops an approach with a transformer-based DNN architecture instead of convolutional DNNs. 
Streak artefacts in limited-angle tomography are non-local. Hence removing these with convolutional DNNs is challenging. 
One approach to encode such long-range dependencies is to use unrolling based DNN architectures, like the Learned Primal-Dual network, that couple many convolutional neural networks with the forward operator and the derivative of its adjoint.  
Another approach is to consider transformer-based architectures that are better suited than convolutional DNNs due to their non-local nature. 
As shown in \cite{Zhou:2021aa}, such transformer-based DNNs have excellent performance for reconstruction in limited angle tomography.
A downside of transformer DNNs is that they are very demanding to train and require massive amounts of training data, so this approach will scale poorly. 
Some of the issues related to memory footprint could perhaps be addressed by using more domain adapted transformer architectures, like Fourier Image Transformer \cite{Buchholz:2021aa}.
Furthermore, there are still no guarantees that \cite{Zhou:2021aa} extrapolates the missing data in a microlocally consistent manner.

\section{Inverse problems, ill-posedness and regularisation}\label{subsec:InvProb}
To mathematically formalise the notion of an abstract inverse problem, we introduce the separable Banach spaces $\RecSpace$ (\emph{reconstruction space}) and $\DataSpace$ (\emph{data space}) whose elements represent possible signals and data, respectively. 
In many applications, the reconstruction and data spaces are also Hilbert spaces.
Next, the mapping $\ForwardOp \colon \RecSpace \to \DataSpace$ (\emph{forward operator}) represents a model for how a signal generates noise free data.

The classical functional analytic/frequentist formalisation views an inverse problem as an operator equation. 
More precisely, an inverse problem is the task of recovering an unknown signal $\signaltrue \in \RecSpace$ from data $\data \in \DataSpace$ that is a single sample of the $\DataSpace$-valued random variable $\stdata$ defined as 
\begin{equation}\label{eq:InvProbFreq}
    \stdata \coloneqq \ForwardOp(\signaltrue)+\stdatanoise
    \quad\text{where $\stdatanoise$ is a $\DataSpace$-valued random variable representing observation noise.}
\end{equation}
The statistical formulation further generalises the above by first assuming one can equip $\RecSpace$ and $\DataSpace$ with Borel $\sigma$-algebrae. 
One furthermore assumes that the signal and corresponding data are generated by some $(\RecSpace \times \DataSpace)$-valued random variable $(\stsignal,\stdata)$ \cite{stuart2010invprob}. 
The inverse problem is now to recover a suitable estimator of the conditional random variable $(\stsignal \mid \stdata= \data)$ from data $\data \in \DataSpace$ that is a single sample of the $\DataSpace$-valued conditional random variable $(\stdata \mid \stsignal=\signaltrue)$, where $\signaltrue \in \RecSpace$ is the true unknown signal and  
\begin{equation}\label{eq:InvProbBayes}
  \stdata = \ForwardOp(\stsignal) + \stdatanoise
  \quad\text{where $\stdatanoise$ is a $\DataSpace$-valued random variable representing observation noise.}  
\end{equation}

\subsection{Ill-posedness and regularisation}\label{subsec:IllPosedReg}
A reconstruction method is formally a mapping $\RecOp \colon \DataSpace \to \RecSpace$ that approximates the inverse of the forward operator. 
An inverse problem is said to be (intrinsically) unstable, i.e., ill-posed, whenever the forward operator $\ForwardOp$ is not continuously invertible with respect to the topologies of $\RecSpace$ and $\DataSpace$.
As already indicated, such inverse problems cannot be reliably solved by merely enforcing consistency against data. 
Regularisation theory addresses this by balancing the need for data consistency against consistency with respect to a \emph{prior model}. 
Defining an appropriate prior model and how to balance this against data consistency are key topics in regularisation theory.

Most regularisation methods are based on the functional analytic/frequentist formalisation in \eqref{eq:InvProbFreq} of an inverse problem.
Variational models offer a powerful framework for regularisation. The reconstruction method $\widehat{\RecOp} \colon \DataSpace \to \RecSpace$ is here defined as solving a variational problem: 
\begin{equation}\label{eq:C4S2E2}
  \widehat{\RecOp}(\data) \in 
  \argmin_{\signal \in \RecSpace} \Bigl\{ \DataDiscrep\bigl( \ForwardOp(\signal),\data \bigr)
    +\Regulariser_{\theta}(\signal)
  \Bigr\}
  \quad \text{for given data $\data \in \DataSpace$.}
\end{equation}
In the above, $\DataDiscrep \colon \DataSpace \times \DataSpace \to \Real$ is the data discrepancy functional that quantifies the data consistency. It is usually taken as a suitable affine transformation of the data log-likelihood \cite{bertero2008iterative}. 
For an ill-posed problem, merely minimising $\signal \mapsto \DataDiscrep\bigl( \ForwardOp(\signal),\data \bigr)$, which translates to seeking the solution that is maximally consistent with data, is an unstable procedure. 
The \emph{regularisation functional} $\Regulariser_{\theta} \colon \RecSpace \to \Real$ stabilises the reconstruction by penalising those candidate solutions that are not consistent with respect to some prior model. The latter is typically given in terms of a-priori information about the (unknown) ground truth signal $\signaltrue \in \RecSpace$ such as sparsity or some type of regularity, see \cite{CTmodel3,Burger:2013aa,Benning:2018aa} for an extensive survey of various options.
The parameter $\theta$ (\emph{regularisation parameter}) governs the balance between data consistency and having a solution consistent against the prior. It is low-dimensional, in many cases a scalar, and its choice depends on the magnitude of the noise in data.

The statistical formalisation of the inverse problem in \eqref{eq:InvProbBayes} contains many of the variational models.
More precisely, if the regularisation functional is proportional to the negative log of a density on $\RecSpace$, then one can often interpret \eqref{eq:C4S2E2} as computing a maximum a posteriori estimator.
An advantage with the statistical formalisation is that it opens up for other reconstruction methods (estimators) such as the posterior mean estimator that is known to be stable in most cases \cite{Latz:2020aa}. 
This estimator is given as $\Expect[ \stsignal \mid \stdata = \data ]$, a formulation that requires access to the (posterior) distribution of $( \stsignal \mid \stdata = \data)$.
Using Bayes' theorem, one can in principle express this posterior distribution in terms of the data likelihood $( \stdata \mid \stsignal = \signal)$, which is given by the physics of data acquisition, and the true (prior) distribution of $\stsignal$, which among others generates the true unknown signal $\signaltrue \in \RecSpace$. 
An alternative formulation is to phrase the posterior mean as a Bayes estimator with respect to the squared $\Lp^2$-risk. 
Stated more precisely, given a parametrised family $\{ \RecOp_{\theta} \}_{\theta \in \Theta}$ of admissible estimators $\RecOp_{\theta} \colon \DataSpace \to \RecSpace$, we consider the reconstruction operator 
\begin{equation}\label{eq:PostMean2}
  \RecOp_{\hat{\theta}} \colon \DataSpace \to \RecSpace
  \quad\text{where}\quad
  \hat{\theta}
    \in \argmin_{\theta \in \Theta} \Expect_{(\stsignal,\stdata)}\Bigl[ \bigl\Vert \RecOp_{\theta}(\stsignal) - \stdata \bigr\Vert_2^2 \Bigr].
\end{equation}
The above approximates the posterior mean, i.e.,
$\RecOp_{\hat{\theta}}(\data) = \Expect[ \stsignal \mid \stdata = \data ]$ whenever $\{ \RecOp_{\theta} \}_{{\theta \in \Theta}}$ has sufficient expressive power. 
Note also that the expectation in \eqref{eq:PostMean2} is w.r.t.\@ the joint law for $(\stsignal, \stdata)$ in $\RecSpace \times \DataSpace$.
Many supervised learning approaches for reconstruction are based on replacing the expectation in \eqref{eq:PostMean2} with its empirical counterpart given by supervised training data, see \cite[Section~5]{Arridge:2019aa} for an extensive survey.
In particular, it is common to use a deep neural network (DNN) with an appropriate architecture to parametrise the family of estimators $\{ \RecOp_{\theta} \}_{{\theta \in \Theta}}$.
One can use unrolling to define such an architecture as outlined in Subsection~\ref{subsec:LPDArchitecture}. 

Finally, besides unrolling, there exist also DNN architectures that are specifically tailored for approximating Fourier integral operators. Examples are \cite{Fan:2019aa,Feliu-Faba:2020aa} which develop a DNN architecture based on operator splitting techniques derived from multiscale numerical analysis.
Another example is \cite{Kothari:2020aa}, which builds an interpretable DNN inspired by Fourier integral operators that approximate the wave physics. Its main focus is on using a loss based on optimal transport to learning the geometry of wave propagation captured by Fourier integral operators, which is implicit in the training data.

\subsection{The Learned Primal-Dual network}\label{subsec:LPDArchitecture}
A particular challenge that arises in deep learning based approaches for solving inverse problems in imaging is to handle the large scale nature of the problem as both images and data typically result in high dimensional arrays once digitised.
Many of these applications also lack sufficient amount of training data, hence using a generic DNN architecture will result in a learned reconstruction operator that does not generalise well.   
For such applications, one needs to use a DNN architecture that is domain-adapted. 

If the trained DNN $\RecOp_{\hat{\theta}} \colon \DataSpace \to \RecSpace$ should correspond to a (learned) reconstruction operator for an inverse problem of the form in \eqref{eq:InvProbBayes}, then a natural domain adaptation is to account for the requirement that $\RecOp_{\hat{\theta}} \approx \ForwardOp^{-1}$ where $\ForwardOp \colon \RecSpace \to \DataSpace$ (forward operator) is handcrafted (not learned).  
Such domain adaptation can be achieved by choosing a DNN architecture that is given by unrolling a suitable iterative scheme, see \cite[Section~4]{Arridge:2019aa}. 
The highly successful Learned Primal-Dual network introduced in \cite{adler2018lpd} provides state-of-the-art results for tomographic imaging. A similar unrolling technique is used in \cite{Bubba:2021aa} to construct a novel  convolutional DNN architecture, called $\Psi$DONet, for learning pseudodifferential operators. This is applied to reconstruction in limited-angle tomography. 

It is based on unrolling the non-linear primal-dual hybrid gradient method \cite{chambolle2010primal}.
Stated in a general setting, the Learned Primal-Dual network is a DNN with an architecture specifically adapted for solving an inverse problem of the form in \eqref{eq:InvProbBayes}.
The Learned Primal-Dual network is here a reconstruction operator $\RecOp_{\theta} \colon \DataSpace \to \RecSpace$ with $\theta = (\theta_{0}^d,\theta_{0}^p, \ldots, \theta_{N}^d,\theta_{N}^p)$ that is defined as $\RecOp_{\theta}(\data) \coloneqq \signal_N$ where $\signal_N$ is given by the following finite recursive scheme initialised by $(\signal_0,\dualvar_0) \in \RecSpace \times \DataSpace$:
\begin{equation}\label{eq:LPDiterates}
\begin{cases}
  \dualvar_{i} \coloneqq \Lambda^{\mathrm{dual}}_{\theta_{i}^d}(\dualvar_{i-1},\ForwardOp(\signal_{i-1}),\data), &
  \\[1em]
  \signal_{i} \coloneqq \Lambda^{\mathrm{primal}}_{\theta_{i}^p} 
    \bigl(\signal_{i-1},[\partial\!\ForwardOp(\signal_{i-1})]^*(\dualvar_{i})\bigr), &
\end{cases}
\quad\text{for $i=1,\ldots, N$.}
\end{equation}
A typical initialisation $(\signal_0,\dualvar_0) \in \RecSpace \times \DataSpace$ is $\signal_0= 0$ and $ \dualvar_0 = \data$.
Next, $\ForwardOp \colon \RecSpace \to \DataSpace$ is the (handcrafted) forward operator in \eqref{eq:InvProbBayes} and $\partial\!\ForwardOp \colon \RecSpace \to L(\RecSpace,\DataSpace)$ is its Fr\'echet derivative with $*$ denoting the dual.
Finally, the operators 
\begin{equation}\label{eq:LPDResNets} 
\Lambda^{\mathrm{dual}}_{\theta_{i}^d} \colon \DataSpace \times \DataSpace \times \DataSpace \to \DataSpace 
   \quad\text{and}\quad
   \Lambda^{\mathrm{primal}}_{\theta_{i}^p} \colon \RecSpace \times \RecSpace \to \RecSpace
\end{equation}
are neural networks with suitable architectures. 

The Learned Primal-Dual network introduced in \cite{adler2018lpd} for tomographic image reconstruction operates directly on arrays, which here represent discretised functions. 
As outlined in Subsection~\ref{subsec:TomoDisc}, discretised images in tomography are represented by arrays in $\Real^{n_1 \times n_2}$, whereas arrays in $\Real^{m_1 \times m_2}$ are sinograms, which are discretisations of a real-valued function on $\datadomain \subset \Real \times (0,\pi)$. 
The Learned Primal-Dual network is then a mapping $\vecRecOp_{\theta} \colon \Real^{m_1 \times m_2} \to \Real^{n_1 \times n_2}$ given as in \eqref{eq:LPDiterates}, where the operators in \eqref{eq:LPDResNets} are convolutional residual neural networks (ResNets) \cite{he2016resnet}.
The precise architecture used in \cite{adler2018lpd} for these ResNets is given next, see also Figure~\ref{fig:learnedPrimalDualDiscrete} for an illustration of these ResNets alongside the particular architecture for the corresponding Learned Primal-Dual network.
We will in Subsection~\ref{subsec:CLDP} extend these ResNets, and consequently the Learned Primal-Dual network, to operators that act on functions.
\begin{definition}[Discrete two-dimensional convolutional ResNet]\label{def:ResNetDiscrete}
Assume arrays in $\Real^{n_1 \times n_2}$ represent real-valued functions on $\Real^2$ that are discretised at $n_1 \times n_2$ sample points.
Let $\vecConvAffine_{\boldsymbol{\theta}_j,b_j} \colon (\Real^{n_1 \times n_2})^{k_{j-1}} \to (\Real^{n_1 \times n_2})^{k_j}$ for $j=1,\ldots, 4$ denote the following (discretised) convolutional affine operator: 
\begin{equation}
\label{eq:discconvresent}
    \vecConvAffine_{\boldsymbol{\theta}_j,b_j}(\vecsignal)(i_1,i_2,\tau) 
    \coloneqq
    b_j^{\tau}(i_1,i_2) + \sum_{l = 1}^{k_{j-1}} (\boldsymbol{\theta}_j^{l,\tau} * \vecsignal)(i_1,i_2) 
    \quad\text{for $\tau \in \{1,\ldots, k_j\}$ and $ 
    \vecsignal\in (\Real^{n_1 \times n_2})^{k_{j-1}}$.}
\end{equation}
In the above, $k_j \in \Natural$ for $j=0,\ldots, 4$ is the \emph{numbers of channels in the $j$:th layer} with $k_4 = 1$, i.e., the final layer is always a single channel. 
Next, 
\[ \boldsymbol{\theta}_j
   \coloneqq 
   (\boldsymbol{\theta}_j^{l,\tau})^{k_{j-1},k_j}_{l=1,\tau=1}\in (\Real^{3 \times 3})^{k_{j-1} \times k_{j-1}}
   \quad\text{for $j = 1,\dots, 4$}
\]
are the channel-wise $3 \times 3$ convolutional filters with $b_j \in (\Real^{n_1 \times n_2})^{k_j}$ as corresponding channel-wise bias terms. 
The \emph{residual convolutional network (ResNet)} operator is a mapping $\vecResNet \colon (\Real^{n_1 \times n_2})^{n_0} \to \Real^{n_1 \times n_2}$ defined as 
\[
    \vecResNet(\vecsignal_1, \dots, \vecsignal_{n_0})
    \coloneqq 
    \vecsignal_1 +\vecoperator{F}(\vecsignal_1, \dots , \vecsignal_{n_0}) \quad \text{ for } \vecsignal_1,\dots, \vecsignal_{n_0}\in \Real^{n_1 \times n_2},
\]
where $\vecoperator{F} \colon (\Real^{n_1 \times n_2})^{n_0} \to \Real^{n_1 \times n_2}$ is the operator
\[
    \vecoperator{F}(\vecsignal_1, \dots, \vecsignal_{n_0}) 
    \coloneqq 
    \bigl(\vecConvAffine_{\boldsymbol{\theta}_4, b_4} \circ \ReLU
  \circ 
  \vecConvAffine_{\boldsymbol{\theta}_3, b_3} \circ \ReLU
  \circ 
  \vecConvAffine_{\boldsymbol{\theta}_2, b_2} \circ \ReLU
  \circ 
  \vecConvAffine_{\boldsymbol{\theta}_1, b_1}
  \bigr)(\vecsignal_1, \dots, \vecsignal_{n_0}).
\]
In the above, $\ReLU(x) \coloneqq \max\{x,0\}$ is applied component-wise.
\end{definition}
\begin{figure}[htb]
\centering
\includegraphics[width = 0.5\textwidth]{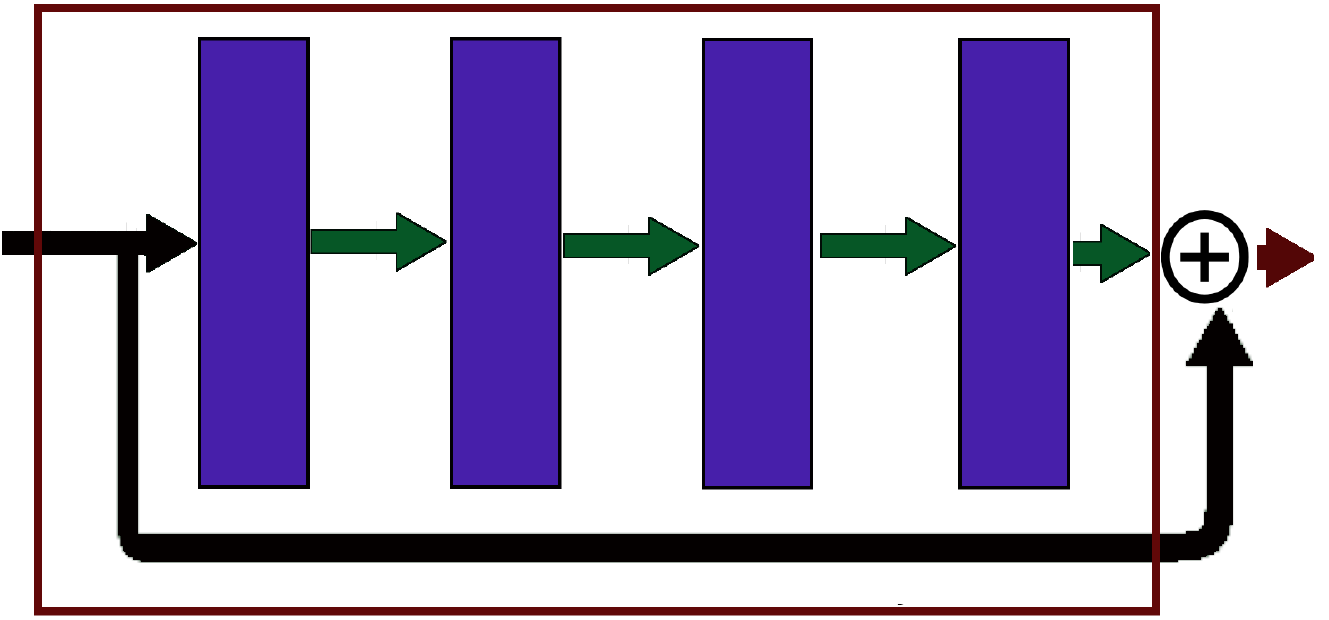}
\begin{tikzpicture}[remember picture,overlay]
  \node at (-11.5,0.55) {\includegraphics[scale=0.25]{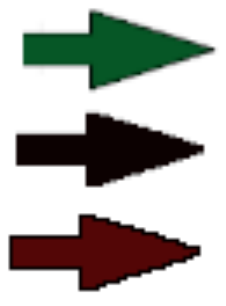}};
  \node[anchor=west, draw=none, fill=none] at (-11.3,0.8)
    {{\scriptsize \scriptsize $3 \times 3$-conv + ReLU}};
  \node[anchor=west, draw=none, fill=none] at (-11.3,0.53)
    {{\scriptsize \scriptsize Copy}};
  \node[anchor=west, draw=none, fill=none] at (-11.3,0.3)
    {{\scriptsize \scriptsize ReLU}};
  \node[anchor=west, draw=none, fill=none] at (-8.8,2.35)
    {{$\vecsignal$}};
  \node[anchor=west, draw=none, fill=none] at (-7.05,0.67)
    {{\scriptsize 32}};
  \node[anchor=west, draw=none, fill=none] at (-5.5,0.67)
    {{\scriptsize 32}};
  \node[anchor=west, draw=none, fill=none] at (-3.95,0.67)
    {{\scriptsize 32}};
  \node[anchor=west, draw=none, fill=none] at (-2.32,0.67)
    {{\scriptsize 32}};
  \node[anchor=west, draw=none, fill=none] at (-5,0.22)
    {{\scriptsize Identity}};
  \node[anchor=north, draw=none, fill=none] at (0.25,2.55)
    {{$\vecoperator{F}(\vecsignal)$}};
  \node[anchor=north, draw=none, fill=none] at (-4.5,4.35) {{$\vecoperator{F}$}};
\end{tikzpicture}
\par{\textbf{(i)}}
\par\bigskip
\includegraphics[width = 0.8\textwidth]{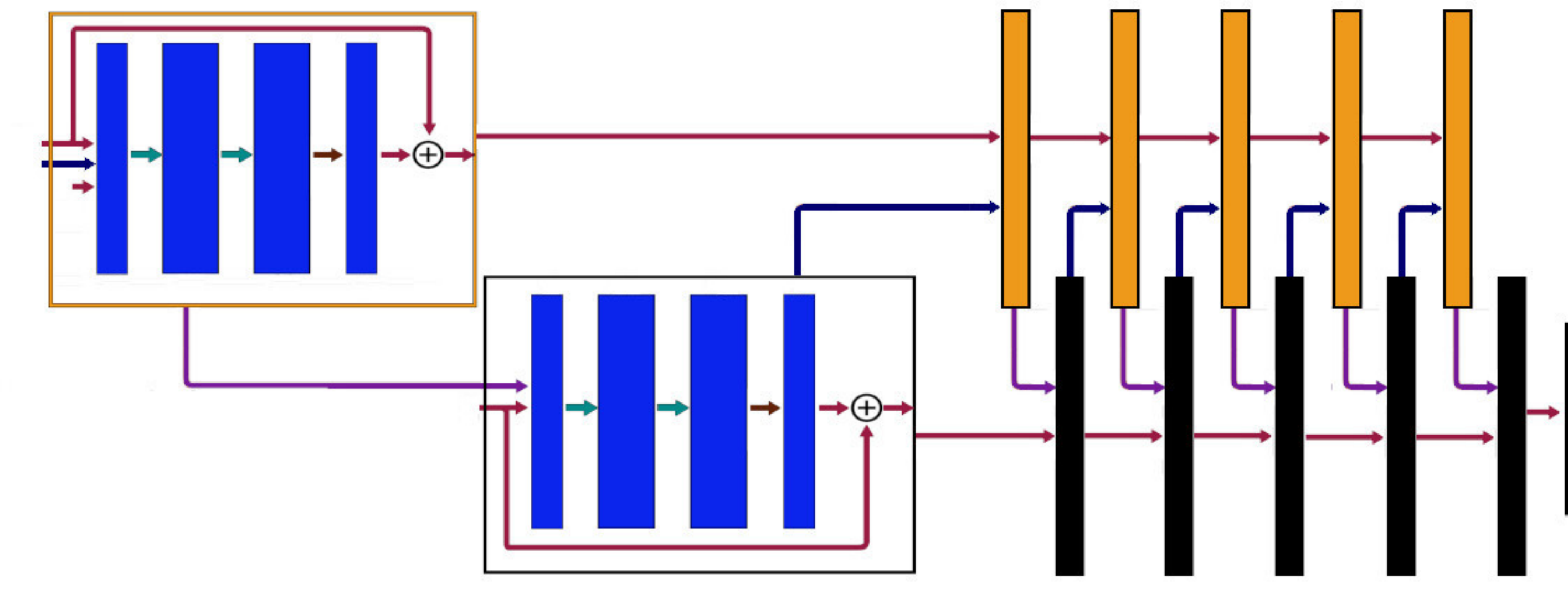}
\begin{tikzpicture}[remember picture,overlay]
  \node at (-14,0.75) {\includegraphics[scale=1.1]{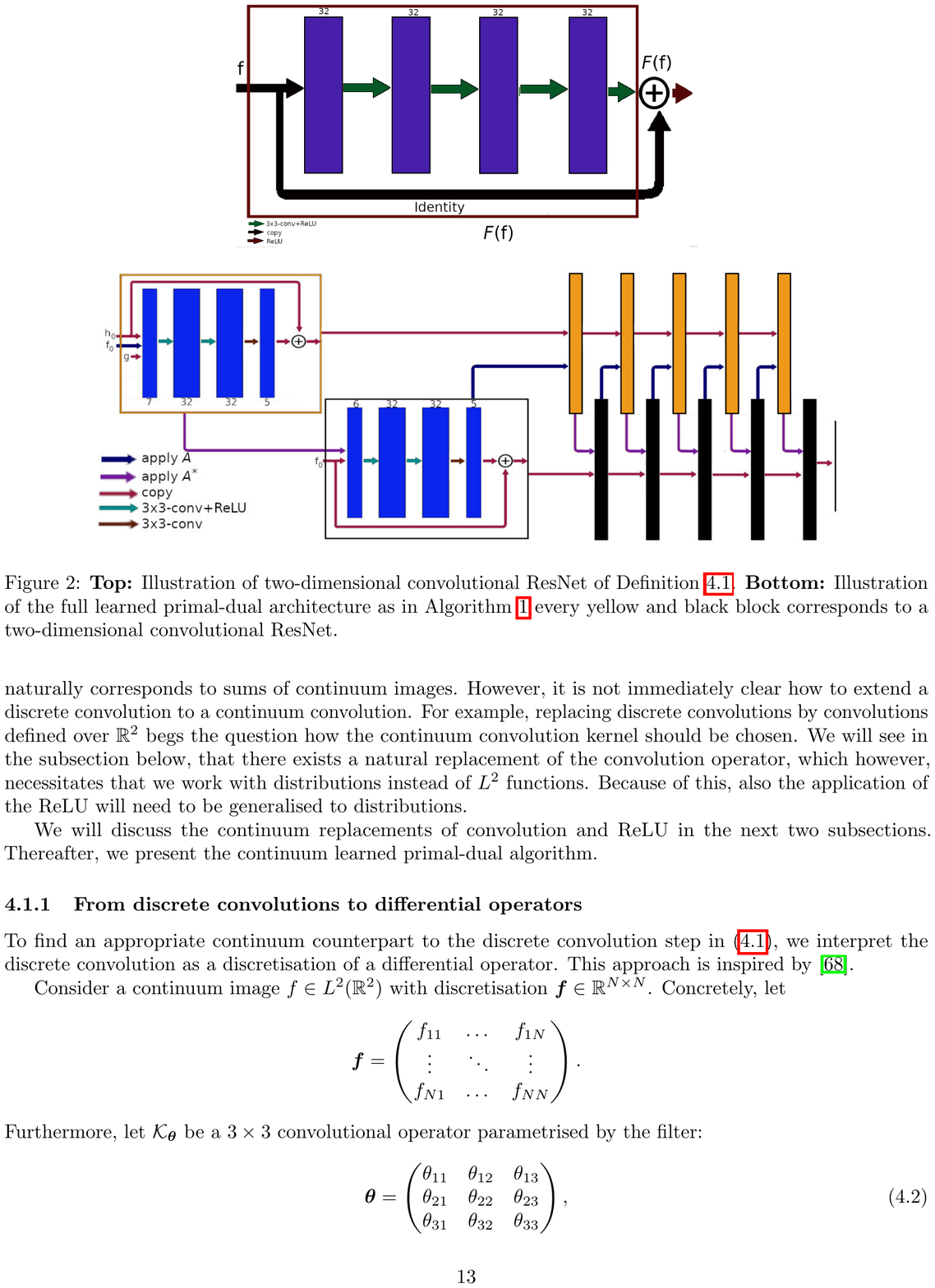}};
  \node[anchor=west, draw=none, fill=none] at (-13.6,1.375)
    {{\scriptsize Apply $\vecForwardOp$}};
  \node[anchor=west, draw=none, fill=none] at (-13.6,1.05)
    {{\scriptsize Apply $\vecForwardOp^{\ast}$}};
  \node[anchor=west, draw=none, fill=none] at (-13.6,0.7)
    {{\scriptsize Copy}};
  \node[anchor=west, draw=none, fill=none] at (-13.6,0.4)
    {{\scriptsize $3 \times 3$-conv + ReLU}};
  \node[anchor=west, draw=none, fill=none] at (-13.6,0.1)
    {{\scriptsize $3 \times 3$-conv}};
  \node[anchor=west, draw=none, fill=none] at (-13.45,3.8)
    {{\scriptsize $\vecdualvar_0$}};
  \node[anchor=west, draw=none, fill=none] at (-13.45,3.55)
    {{\scriptsize $\vecsignal_{\!0}$}};
  \node[anchor=west, draw=none, fill=none] at (-12.99,3.35)
    {{\scriptsize $\vecdata$}};
  \node[anchor=west, draw=none, fill=none] at (-12.55,2.52) {{\tiny $7$}};    
  \node[anchor=west, draw=none, fill=none] at (-11.95,2.52) {{\tiny $32$}};    
  \node[anchor=west, draw=none, fill=none] at (-11.2,2.52) {{\tiny $32$}};    
  \node[anchor=west, draw=none, fill=none] at (-10.45,2.52) {{\tiny $5$}};    
  \node[anchor=west, draw=none, fill=none] at (-9.7,1.5) 
      {{\scriptsize $\vecsignal_{\!0}$}}; 
  \node[anchor=west, draw=none, fill=none] at (-8.9,2.545) {{\tiny $6$}};    
  \node[anchor=west, draw=none, fill=none] at (-8.3,2.545) {{\tiny $32$}};    
  \node[anchor=west, draw=none, fill=none] at (-7.5,2.545) {{\tiny $32$}};    
  \node[anchor=west, draw=none, fill=none] at (-6.8,2.545) {{\tiny $5$}};     
\end{tikzpicture}
\par{\textbf{(ii)}}
\caption{\textbf{(i)} The architecture for the two-dimensional convolutional ResNet in Definition~\ref{def:ResNetDiscrete}.
\textbf{(ii)} The full Learned Primal-Dual architecture, every yellow and black block corresponds to a two-dimensional convolutional ResNet.}
\label{fig:learnedPrimalDualDiscrete}
\end{figure}

\section{Tomographic imaging and the Radon transform}\label{sec:PlanarTomo}
Planar tomographic imaging aims to recover a 2D image (signal) of the interior of an object from corresponding tomographic data.
Thus, elements in $\RecSpace$ are real-valued functions on a fixed domain $\domain \subset \Real^2$ that represent possible 2D images, and $\RecSpace$ itself is some suitable function space, e.g., $\RecSpace \subset \Lp^2(\Real^2)$.

Next, many tomographic imaging modalities rely on probing the object with x-rays.
In the planar setting, these rays are all contained in a cross sectional hyperplane through the object.  
After adopting a simplified physics model for the interaction between x-rays and the object, measured data (after appropriate pre-processing) can be viewed as noisy digitised samples of the Radon transform of the aforementioned signal. 
\begin{definition}[Radon transform]\label{def:radontrans}
The \emph{(planar) Radon transform} of $\signal \colon \Real^2 \to \Real$ is defined as
\begin{equation}\label{eq:S1E3}
  \Radon(\signal)(s,\theta) \coloneqq \int_{-\infty}^{\infty} \signal\bigl(s\omega(\theta)+t\omega(\theta)^{\perp}\bigr)dt, 
  \quad \text{for $(s,\theta)\in \Real\times (0,\pi)$.}
\end{equation}
In the above, $\omega(\theta) \coloneqq (\cos\theta,\sin\theta)$ is the unitary vector with orientation described by the angle $\theta$ with respect to the $x_1$-axis and $\omega(\theta)^{\perp} \coloneqq (-\sin\theta,\cos\theta)$. 
\end{definition}
\begin{remark}
There do exist integrable functions $\signal$ for which there is a direction $\omega(\theta) \in \mathbb{S}^1$ and a point $s \in \Real$ such that $\Radon(\signal)(s,\theta)$ does not exist. Nonetheless, it is possible to show that $\Radon(\signal)(s,\theta)$ exists almost everywhere when $\signal$ is an integrable function \cite[Proposition~2.38]{markoe2006tomo}.
\end{remark}
Note that $(s,\theta)$ represents the line $t \mapsto s\omega(\theta)+t\omega(\theta)^{\perp}$ that is orthogonal to $\omega(\theta) \in \mathbb{S}^1$ with (signed) distance $s \in \Real$ to the origin, so $\Radon(\signal)$ is a function on lines in $\Real^2$.
Since data in tomographic imaging can be seen as noisy samples of $\Radon(\signal)$, the data space $\DataSpace$ is an appropriate space of real-valued functions on lines in $\Real^2$ representing non-digitised data.
The angle $\theta \in (0,\pi)$ governs the direction of the x-ray that probes the object. It typically varies during the tomographic data acquisition and limited-angle tomography is the case when $\theta$ is restricted to some interval $I \subset (0,\pi)$.
\begin{definition}[Tomographic reconstruction]\label{def:tomrecon}
\emph{Tomographic (image) reconstruction} is the inverse problem of recovering a \emph{ground truth} image $\signaltrue\in \Lp^2(\Real^2)$ from noisy measurements $\data \in \Lp^2(\datadomain)$ for some open set $\datadomain \subset \Real\times [0,\pi]$. 
Here, $\data$ is a single sample of the $\Lp^2(\datadomain)$-valued random variable $\stdata$ in \eqref{eq:InvProbFreq} (or \eqref{eq:InvProbBayes} for the statistical formulation) with $\Radon$ denoting the Radon transform (Definition~\ref{def:radontrans}).
\end{definition}
 
\subsection{Basic properties of the Radon transform}\label{subsec:RadonProp}
From a functional analytic viewpoint, the Radon transform is a linear operator that maps functions on $\Real^2$ to functions on the open set $\Real\times (0,\pi)$ (representing lines in $\Real^2$). 
Continuity of the Radon transform depends on the domain of the functions chosen. 
As an example, the Radon transform is a bounded map on $\Lp^1(\Real^2)$, implying that it is a continuous linear operator on $\Lp^1(\Real^2)$ \cite[Corollary 3.25]{markoe2006tomo}, it is however unbounded on $\Lp^2(\Real^2)$. 
The Radon transform is also invertible but its inverse is not continuous on $\Lp^1\bigl(\Real\times(0,\pi)\bigr)$ \cite{Hertle:1983aa}, yielding that the tomographic image reconstruction problem in Definition~\ref{def:tomrecon} is an ill-posed inverse problem. 

We will consider the Radon transform on Schwartz functions $\SchwartzFunc(\Real^2)$, i.e., rapidly decreasing and smooth functions with the typical locally convex topology. It is well-known that the Fourier transform maps $\SchwartzFunc(\Real^2)$ onto itself. 
An analogous result holds for the Radon transform, namely that $\Radon \colon \SchwartzFunc(\Real^2) \to \SchwartzFunc\bigl(\Real\times(0,\pi)\bigr)$ is a linear one-to-one mapping \cite[Theorem~2.4]{Helgason:1999aa}.
\begin{remark}
$\SchwartzFunc(\domain)$ for some open set $\domain \subset \Real^2$ is defined as the set of functions on $\domain$ such that their extension by $0$ to all of $\Real^2$ is a Schwartz function. 
In particular, this defines the space of Schwartz functions on any open set $\datadomain \subset \Real\times(0,\pi)$.
\end{remark}

Next, we introduce the back-projection as the dual to the Radon transform in a sense analogous to the way the adjoint of a linear transformation on Euclidean space is dual to the original transformation.
\begin{definition}[Back-projection]\label{def:BackProj}
The \emph{back-projection} of $\data \colon \Real\times (0,\pi) \to \Real$ is the function $\Radon^*(\data) \colon \Real^2 \to \Real$ defined as  
\begin{equation}\label{eq:BackProj}
  \Radon^*(\data)(x) \coloneqq  \int_{0}^{\pi} \data(x \cdot \omega(\theta),\theta)\,d\theta
  \quad\text{for $x \in \Real^2$.}
\end{equation}
\end{definition}
The back-projection maps a function $\data$ on lines in $\Real^2$ to a function on points in $x \in \Real^2$ by simply averaging $\data$ over all lines that pass through $x$. 
A simple calculation shows that the back-projection is the dual to the Radon transform \cite[Theorem~2.75]{markoe2006tomo}:
\begin{equation}\label{eq:FormalAdjointL2}
 \bigl\langle \Radon(\signal), \data \bigr\rangle = \bigl\langle \signal, \Radon^*(\data) \bigr\rangle.
\end{equation} 
The inner product on the right-hand side refers to the natural inner product on $\Lp^2(\Real^2)$, whereas the inner product on the left-hand side is the natural inner product on $\Lp^2\bigl(\Real \times (0,\pi) \bigr)$. This product is well defined in tempered distributions since $\SchwartzFunc(\Real^2) \subset \Lp^2(\Real^2)$ (respectively  $\SchwartzFunc(\Real \times (0,\pi) \bigr) \subset \Lp^2(\Real \times (0,\pi) \bigr)$), \cite{schwartz1954multdistr}.

The duality in \eqref{eq:FormalAdjointL2} can be used to extend the Radon transform to various classes of distributions, like compactly supported distributions \cite{Helgason:1999aa} and tempered distributions \cite{Gelfand:1966aa,Ludwig:1966aa}.

In particular, one can define the Radon transform on a tempered distribution $\signal \in \SchwartzFunc'(\Real^2)$ as
\begin{equation*}
    \Radon(\signal)(\phi) \coloneqq \signal\bigl(\Radon^*(\phi)\bigr) 
    \quad\text{for all $\phi \in \SchwartzFunc\bigl(\Real \times (0,\pi) \bigr)$.}
\end{equation*}
The extension of $\Radon^*$ to $\SchwartzFunc\bigl(\Real \times (0,\pi) \bigr)$ is defined analogously and (following \cite[Section~2.9.3.2 and 4.3.1]{markoe2006tomo}) one can in addition show that, 
\begin{equation}\label{eq:RadonOnTempDistr}
\begin{split}
  \Radon \colon \SchwartzFunc'(\Real^2) \to \SchwartzFunc'\bigl(\Real\times(0,\pi)\bigr)
  & \quad\text{is a topological isomorphism,}
\\  
  \Radon^* \colon \SchwartzFunc'\bigl(\Real\times(0,\pi)\bigr) \to \SchwartzFunc'(\Real^2) 
  & \quad\text{is surjective.} 
\end{split}
\end{equation}

In limited-angle tomography, data is given on lines contained in some open subset $\datadomain \subset \Real \times [0, \pi)$.
The \emph{partial} Radon transform is then defined as 
\[ \Radon_\datadomain \coloneqq P_{\SchwartzFunc'(\datadomain)} \circ \Radon, \]
where $P_{\SchwartzFunc'(\datadomain)}$ is the restriction of $\SchwartzFunc'(\Real^2)$ to $\SchwartzFunc'(\datadomain)$. 
Note that the restriction of a tempered distribution $\SchwartzFunc'(\Real^2)$ to $\SchwartzFunc(\datadomain)$, where $\datadomain \subset \Real^2$ is open, is well defined and it corresponds to a tempered distribution in $\SchwartzFunc'(\datadomain)$.
The corresponding (restricted) back-projection is simply defined as in \eqref{eq:BackProj} but by setting $\data$ to 0 on lines not contained in $\datadomain$. In particular, if $\datadomain \coloneqq \Real\times I \subset \Real\times (0,\pi)$ for some open interval $I \subset  (0,\pi)$, then 
\begin{equation}\label{eq:LimitedAngBP}
  \Radon_{\datadomain}^*(\data)(x) =  \int_{I} \data(x \cdot \omega(\theta),\theta)\,d\theta
  \quad\text{for $x \in \Real^2$.}
\end{equation}

\subsection{Discretisation}\label{subsec:TomoDisc}
An image (signal) $\signal \colon \domain \to \Real$ defined on a domain $\domain \subset \Real^2$ is in tomographic imaging typically digitised as $\vecsignal=( \signal_{i,j} )_{i,j=1}^{n_1, n_2}$ where $\signal_{i,j} \coloneqq \signal(x_{i,j})$ for $x_{i,j} \in \domain$.
Similarly, a sinogram $\data \in \DataSpace$ is digitised by an array $\vecdata\in \Real^{m_1 \times m_2}$ by evaluating $\data$ on $m_1 \times m_2$ sample points in $\datadomain \subset \Real \times (0,\pi)$ that are given by the data acquisition protocol used in the actual scanner. 
The $m_1$ sample points in $\Real$ would simply correspond to detector elements in the scanner, whereas the $m_2$ sample points in $(0,\pi)$ correspond to directions used in the actual scanning.

The (discretised) Radon transform and back-projection are then mappings 
\begin{equation}\label{eq:DiscRadon} 
\vecRadon \colon \Real^{n_1 \times n_2} \to \Real^{m_1 \times m_2}
\quad\text{and}\quad
\vecRadon^* \colon \Real^{m_1 \times m_2} \to \Real^{n_1 \times n_2}.
\end{equation}
We wish to remark that there exist sophisticated numerical schemes for evaluating these mappings in an accurate, yet computationally feasible manner, see, e.g., \cite{Kim:2012aa,Long:2010aa,Aarle:2015aa,Karimi:2016aa,Aarle:2016aa}.

It is now possible to formulate the tomographic reconstruction problem entirely in the discrete setting.
This is essentially a digitised version of the tomographic reconstruction problem in Definition~\ref{def:tomrecon}, i.e., to recover the digital image $\vecsignal_{\mathrm{true}}\in \Real^{n_1 \times n_2}$ from noisy measurements $\vecdata \in \Real^{m_1 \times m_2}$ where 
\[
  \vecdata = \vecRadon(\vecsignal_{\mathrm{true}}) + \boldsymbol{\datanoise}
\quad\text{for some (unknown) observation error $\boldsymbol{\datanoise} \in \Real^{m_1 \times m_2}$.}
\]
One could then proceed to develop regularised reconstruction methods for the above finite dimensional inverse problem.
This `discretise first then reconstruct' approach was common in the 1970's with the advent of iterative schemes with early stopping, like
algebraic reconstruction techniques (ART) \cite{Gordon:1970aa}, simultaneous ART (SART) \cite{Andersen:1984aa} or simultaneous iterative reconstruction technique (SIRT) \cite{Gilbert:1972aa}.
The success of FBP and subsequent variational models did however point to the advantage of developing reconstruction methods for the continuum formulation, which then are discretised.
This is also the approach we take in our work, since the microlocal analysis that we will rely on is naturally formulated in the continuum setting.

\subsection{Microlocal analysis in tomography}\label{subsec:wavefrontset}
A central part of microlocal analysis is the study of how singularities are transformed by specific operators.
This is not possible given only the singular support that describes the location of the singularities.
Instead, an appropriate notion of singularity needs to include information about directions in the Fourier domain that causes the singularity.
This leads to the wavefront set that we define below.  

\begin{definition}[Wavefront set]\label{def:S1D4}
Let $u\in \SchwartzFunc'(\Real^n)$ and $N\in \mathbb{N}$. A point $(x,\xi)\in \Real^n\times \mathbb{S}^{n-1}$ is an \emph{$N$-regular directed point} of $u$ if there exist an open neighborhood $U_x$ of $x$, a conical neighborhood $V_{\xi}$ of $\xi$, and a smooth cut-off function $\psi\in \SchwartzFunc(\Real^n)$ with $\supp \psi \subset U_x$ and $\psi(x) = 1$ such that the following holds for some $C_N>0$:
\[
  \bigl| \widehat{\psi u}(\eta) \bigr| \leq C_N \bigl(1+|\eta| \bigr)^{-N} 
  \quad \text{ for all $\eta \in \Real^n$ with $\eta/|\eta|\in V_{\xi}$.}
\]
The \emph{$N$-wavefront set} $\WF_N(u)$ is the complement of the set of all $N$-regular directed points.
Finally, the \emph{wavefront set} $\WF(u)$ of $u$ is defined as 
\[
\WF(u) \coloneqq \bigcup_{N\in\mathbb{N}}\WF_N(u).
\]
\end{definition}
From the above, we see that the wavefront set $\WF(u)$ is the set of all position-orientation pairs in $\domain \times \mathbb{S}^1$ along which $u$ is non-smooth.
\begin{remark}
The general setting in \cite[Section~8.1]{AnLinPDOHoermander}, where functions/distributions are defined on manifolds, views the wavefront set as a conical subset of the cotangent bundle.
This is equivalent to Definition~\ref{def:S1D4} when the manifold is an open domain in $\Real^2$.
\end{remark}

\subsubsection{The canonical relation for the Radon transform and its back-projection}
\label{subsec:ResNetMicroLocalS4}
For many operators arising in applications, it is possible to describe how they transform the wavefront set. 
This description is called  \emph{(microlocal) canonical relation}. It plays an important role in inverse problems, since it allows one to relate singularities in data to those in the unknown signal.

The canonical relation for the Radon transform in  Definition~\ref{def:radontrans} at tempered distribution $\signal \in \SchwartzFunc'(\Real^2)$ is a precise relationship between $\WF(\Radon(\signal))$ and $\WF(\signal)$. 
If $\mathcal{P}$ denotes taking the power set, then this can be expressed as a map 
\begin{equation}\label{eq:RadonMCR}
K \colon \mathcal{P}\Bigl( \bigl(\Real \times (0,\pi) \bigr) \times \mathbb{S}^1 \Bigr) 
     \to \mathcal{P}(\Real^2 \times \mathbb{S}^1),
\quad\text{where}\quad 
K\Bigl(\WF\bigl(\Radon(\signal)\bigr)\Bigr) 
  = \WF(\signal)
\quad\text{for $\signal \in \SchwartzFunc'(\Real^2)$.}
\end{equation}  
In limited-angle tomography we only have access to the Radon transform on an open subset $\datadomain \subset \Real \times (0, \pi)$. The canonical relation then holds for the so-called \emph{visible} wavefront set of a function/distribution $\signal$ that is given by 
\begin{equation}\label{eq:RadonWFvis}
 \WFvis(\signal) \coloneqq \WF(\signal) \cap K(\datadomain). 
\end{equation}  
Following \cite{krishnan2015microlocal}, we next provide a more precise characterisation of the canonical relation in terms of a mapping between wavefront sets in image and sinogram, respectively.

\begin{theorem}\label{thm:microlocalrad}
The canonical relation for the Radon transform $\Radon \colon \SchwartzFunc'(\Real^2)\to \SchwartzFunc'\bigr(\Real\times(0,\pi)\bigl)$ at $\signal \in \SchwartzFunc'(\Real^2)$ can be represented by the mapping 
\[
\Can_{\Radon(\signal)} \colon \WF(\signal) \to \WF\bigl(\Radon(\signal)\bigr)
\]
defined as 
\begin{equation}\label{eq:S1E8}
\Can_{\Radon(\signal)}\bigl(x;\omega(\theta)\bigr) 
\coloneqq 
\Bigl(\bigl(x\cdot\omega^\perp(\theta),\theta+\pi/2\bigr);   
  \omega\Bigl(\arctan\bigl(-x\cdot\omega(\theta)\bigr)\Bigr)
\Bigr) 
\quad \text{for $\bigl(x;\omega(\theta)\bigr)\in\WF(\signal)$,}
\end{equation}
with $\omega(\theta) \coloneqq (\cos\theta,\sin\theta)$ and $\omega(\theta)^\perp \coloneqq (-\sin\theta,\cos\theta)$. 
This means, 
\[
\bigl(x;\omega(\theta)\bigr) \in \WF(\signal)
\iff
\Can_{\Radon(\signal)}\bigl(x;\omega(\theta)\bigr) \in \WF\bigl(\Radon(\signal)\bigr).
\]
\end{theorem}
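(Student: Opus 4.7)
The plan is to view the Radon transform as a Fourier integral operator (FIO) and extract its canonical relation from the phase function of its Schwartz kernel.

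First, I would represent $\Radon$ via its Schwartz kernel $K(s,\theta,x) = \delta(s - x\cdot\omega(\theta))$ and rewrite this as an oscillatory integral using the Fourier representation of the Dirac delta, $K(s,\theta,x) = (2\pi)^{-1}\int_{\Real} e^{i\sigma(s - x\cdot\omega(\theta))}\,d\sigma$. This exhibits $\Radon$ as an FIO with phase $\Phi(s,\theta,x,\sigma) = \sigma(s - x\cdot\omega(\theta))$ and amplitude $1/(2\pi)$. The phase is non-degenerate in the H\"ormander sense, since the differential of $\partial_\sigma \Phi$ is non-vanishing on the critical set, so the standard FIO machinery applies.

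Next, I would read the canonical relation off the critical set $\{\partial_\sigma \Phi = 0\}$. The stationary condition $\partial_\sigma \Phi = 0$ gives the incidence relation $s = x\cdot\omega(\theta)$, asserting that $x$ lies on the line parameterised by $(s,\theta)$. The remaining derivatives are $\partial_x \Phi = -\sigma\,\omega(\theta)$ on the source side and $(\partial_s\Phi,\partial_\theta\Phi) = (\sigma,\,-\sigma\, x\cdot\omega^\perp(\theta))$ on the target side, where I use $\partial_\theta\omega(\theta) = \omega^\perp(\theta)$. This matches each source covector $(x;\sigma\omega(\theta))$ to the target covector $\big((x\cdot\omega(\theta),\theta);\,\sigma(1,-x\cdot\omega^\perp(\theta))\big)$, which is the canonical relation of the Radon transform in these coordinates. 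I would then pass to the $\mathbb{S}^1$-normalised description of the wavefront set used in Definition~\ref{def:S1D4}: the source unit covector is simply $\omega(\theta)$, while the target unit covector normalises to $\omega(\alpha)$ with $\tan\alpha = -x\cdot\omega^\perp(\theta)$, producing the $\arctan$ factor that appears in~\eqref{eq:S1E8}. A reparameterisation $\theta\mapsto\theta+\pi/2$, which swaps $\omega$ and $\omega^\perp$ and shifts the sinogram angle, reconciles the computed expression with the paper's convention and yields the stated closed form for $\Can_{\Radon(\signal)}$.

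The principal obstacle I anticipate is the ``if and only if'' claim rather than the mere pushforward. The FIO calculus naturally delivers only $\WF(\Radon(\signal)) \subset \Can_{\Radon(\signal)}(\WF(\signal))$, while the reverse inclusion requires a non-degeneracy (Bolker-type) condition, i.e.\ that the canonical relation project injectively onto each cotangent factor, together with ellipticity of the FIO symbol so that no interior cancellation can make a singularity invisible. In the planar case injectivity can fail at tangential/caustic configurations, so one must verify that the linearisation of $\Can$ has full rank on the relevant cotangent directions; this is the step where I expect genuine work, and where one typically appeals to the existing microlocal analyses of the Radon transform (as in Quinto and Krishnan--Quinto). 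A secondary, largely bookkeeping difficulty is to track the sign conventions in H\"ormander's definition of the canonical relation, including the covector inversion on the source side, and to match the paper's choice of conormal vs.\ normal parameterisation so that the final formula comes out exactly as written in~\eqref{eq:S1E8}.
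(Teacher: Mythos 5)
Your proposal follows essentially the same route as the paper's proof: both exhibit $\Radon$ as a Fourier integral operator with phase $(s-x\cdot\omega(\theta))\xi$ and amplitude $1/(2\pi)$ (the paper gets this oscillatory-integral form from the Fourier slice theorem, you from the Fourier representation of the delta kernel, which is the same identity), both read the canonical relation off the critical set and the derivatives $\partial_x\phi=-\xi\omega(\theta)$, $\partial_{(s,\theta)}\phi=\xi(1,-x\cdot\omega^\perp(\theta))$, and both settle the ``if and only if'' by ellipticity plus an appeal to Quinto's microlocal results (the paper cites \cite[Theorem~6.3]{quinto2008local}), exactly where you anticipated the genuine work to lie. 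The argument is correct and matches the paper in substance.
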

\begin{proof}
Following {\cite[Definition~7]{krishnan2015microlocal}} a Fourier integral operator, $\mathcal{P}:\SchwartzFunc'(\Rtwo)\rightarrow \SchwartzFunc(\Real\times (0,\pi)$, has the form
\begin{equation}
\label{eq:fioform}
\mathcal{P}f(y)=\int_{\xi\in\Rtwo}\int_{x\in \Rtwo} e^{i\phi(y,x,\xi)}p(y,x,\xi)dxd\xi \quad \text{for $y\in\SchwartzFunc(\Real\times(0,\pi))$},
\end{equation}
where $p,\phi \in C^{\infty}(\Rtwo\times\Real\times(0,\pi)\times\Rtwo)$ are the amplitude and the phase functions, respectively (see {\cite[Definition~4 and Definition~6]{krishnan2015microlocal}}). Using \cite[Definition~7]{krishnan2015microlocal} and duality gives the following expression for the canonical relation of $\mathcal{P}\colon \SchwartzFunc'(\Real^2)\to \SchwartzFunc'\bigl(\Real\times (0,\pi)\bigr)$, given by
\begin{equation}\label{eq:microcanonradon}
\WF\bigl(\Radon(\signal)\bigr)\subset C_\phi\circ\WF(\signal) 
\quad \text{for $\signal\in\SchwartzFunc'(\Real^2)$,}
\end{equation}
where
\[
C_{\phi}\coloneqq 
\Bigl\{ 
  \Bigl((y;\partial_y\phi(y,x,\xi)), \bigl(x,-\partial_x\phi(y,x,\xi)\bigr)\Bigr) :
  (y,x,\xi)\in\Sigma_{\phi}
\Bigr\}
\]
with
\[
\Sigma_{\phi} \coloneqq 
\Bigl\{ 
  \bigl((s,\theta),x,\xi \bigr)\in \bigl(\Real\times [0,2\pi)\bigr)\times \Real^2 \times \Real\setminus\{0\} 
  :
  \partial_{\xi}\phi\bigl((s,\theta),x,\xi\bigr)=0
\Bigr\}.
\]
We now aim to write the Radon transform $\mathcal{A}$ in the form~\eqref{eq:fioform}. By the Fourier slice theorem \cite[Theorem~2]{krishnan2015microlocal}, the Radon transform $\Radon \colon \SchwartzFunc'(\Real^2)\to \SchwartzFunc'\bigr(\Real\times(0,\pi)\bigl)$ can be written in the integral form as
\begin{equation}\label{eq:S1E6}
\Radon(\signal)(s,\theta) = \frac{1}{2\pi} \int_{\xi\in\Real}\int_{x\in\Real^2}e^{i(s-(x\cdot\omega(\theta)))\xi}f(x)dx d\xi
\quad\text{for $s \in \Real$ and $\theta \in (0, \pi)$,}
\end{equation}
where $\omega(\theta) \coloneqq (\cos\theta,\sin\theta)$.
Hence, the Radon transform is a Fourier integral operator with phase function $\phi((s,\theta),x,\xi) \coloneqq \bigl(s-\bigl(x\cdot\omega(\theta)\bigr)\bigr)\xi$ and amplitude $p(y,x,\xi) \coloneqq 1/(2\pi)$. Thus the canonical relation of $\mathcal{A}:\SchwartzFunc'(\Rtwo)\rightarrow \SchwartzFunc'(\Real\times (0,\pi))$ is given by~\eqref{eq:microcanonradon}.

We then derive the exact form of $C_{\phi}$ by computing the derivatives of $\phi$, which are 
\begin{equation}
\label{eq:S1E7}
\begin{aligned}
\partial_x\phi\bigl((s,\theta),x,\xi\bigr) 
  &= -\xi \omega(\theta),
\\
\partial_{(s,\theta)}\phi\bigl((s,\theta),x,\xi\bigr) 
  &= \xi\bigl(1, -x\cdot\omega^{\perp}(\theta)\bigr),
\\
\partial_{\xi}\phi\bigl((s,\theta),x,\xi\bigr) 
  &= \bigl(s-x\cdot\omega(\theta),0\bigr).
\end{aligned}
\end{equation}
Notice that $\partial_x\phi$ and $\partial_{(s,\theta)}\phi$ are not zero for $\xi\neq 0$, which means that the phase function is non-degenerate. This confirms that $\Radon$ is a Fourier integral operator. 

Next, $\Radon$ is of order $m=-1/2$ with an amplitude function $p((s,\theta),x,\xi) = 1/(2\pi)$ that is homogeneous of degree zero, implying that $\Radon$ is elliptic. 
Using the derivatives~\eqref{eq:S1E7}, $\Sigma_{\phi}$ is given by
\[
    \Sigma_{\phi} = 
    \Bigl\{
      \bigl((s,\theta),x,\xi \bigr)\in \bigl(\Real\times [0,2\pi)\bigr)\times \Real^2 \times \Real\setminus\{0\} 
      \colon 
      s-x\cdot\omega(\theta)=0
    \Bigr\}.
\]
Therefore, the canonical relation can be represented by the coordinate mapping
\begin{equation}\label{eq:mappingradon}
\begin{aligned}
\bigl((s,\theta),x,\xi\bigr) 
&\mapsto \Bigl(
    \Bigl(\bigl(x\cdot\omega(\theta),\theta\bigr);
      \partial_{(s,\theta)\phi}\Bigr), 
    (x,-\partial_x\theta)
\Bigr)
\\
&= \Bigl(
     \Bigl(
       \bigl(x\cdot\omega(\theta),\theta\bigr);
       \xi \bigl(1, -x\cdot\omega(\theta)^{\perp} \bigr)
     \Bigr), 
     \bigl(x,\xi \omega(\theta)\bigr)
   \Bigr).
\end{aligned}
\end{equation}
Now, let $\bigl(x;\omega(\theta)\bigr)\in\WF(\signal)$ be an oriented singular point of $\signal$. 
By \eqref{eq:mappingradon} we obtain that 
\[
\bigl(x;\omega(\theta)\bigr)\in\WF(\signal)
\to 
\Bigl(
  \bigr(x\cdot\omega(\theta)^\perp,\theta+\pi/2\bigr); 
  \omega\Bigl(\arctan\bigl(-x\cdot\omega(\theta)\bigr)\Bigr)
\Bigr) 
\in \WF\bigl(\Radon(\signal)\bigr).
\]
Finally, \cite[Theorem~6.3]{quinto2008local} gives 
\begin{equation}\label{eq:microlocalradonmap}
\bigl(x;\omega(\theta)\bigr)\in\WF(\signal)
\iff 
\Bigl(
  \bigl(x\cdot\omega(\theta)^\perp,\theta+\pi/2\bigr); \omega\Bigl(\arctan\bigl(-x\cdot\omega(\theta)\bigr)\Bigr)
\Bigr) 
\in \WF\bigl(\Radon(\signal)\bigr).
\end{equation}
This concludes the proof.
\end{proof}

We next focus on the propagation of singularities performed by the adjoint Fr\'echet derivative of the Radon transform, which is the back-projections operator $\Radon^*$ in Definition~\ref{def:BackProj}.
Using {\cite[Theorem~13]{krishnan2015microlocal}} we know that $\Radon^*$ is also a Fourier integral operator and
\begin{equation}\label{eq:dualradonpseudo}
\Radon^*\Radon(\signal)(x)
  =\int_{\Real^2} e^{ix\cdot\xi}\frac{2}{\|\xi\|}\widehat{\signal}(\xi)d\xi 
  = \frac{1}{\pi}\int_{\Real^2}\int_{\Real^2} 
      e^{i(x-y)\cdot \xi}\frac{1}{\|\xi\|}\signal(y) dy d\xi, 
\quad 
\text{for $\signal \in \SchwartzFunc(\Real^2)$ and $x\in \Real^2$.} 
\end{equation}
In the next proposition, we use \eqref{eq:dualradonpseudo} to introduce the corresponding mapping associated with the canonical relation for $\Radon^*$ in a similar fashion to Theorem~\ref{thm:microlocalrad}.
\begin{proposition}
\label{prop:mircolocalbackproj}
The canonical relation for the back-projection operator $\Radon^* \colon \SchwartzFunc'\bigl(\Real\times (0,\pi)\bigr)\to \SchwartzFunc'(\Real^2)$ in \eqref{eq:BackProj} at $\data \in \SchwartzFunc'\bigl(\Real\times (0,\pi)\bigr)$ can be represented by the mapping 
\[
\Can_{\Radon^*(\data)} \colon \WF(\data) \to \WF\bigl(\Radon^{\ast}(\data)\bigr),
\]
which is defined at $\bigl((s,\theta);\omega(\vartheta)\bigr)\in\WF(\data)$ as  
\begin{equation}\label{eq:microcanonbackproj}
\Can_{\Radon^{\ast}(\data)}\bigl((s,\theta);\omega(\vartheta)\bigr) \coloneqq 
\bigl( 
  (s\cos\theta-\tan\vartheta\sin\theta, s\sin\theta+\tan\vartheta\cos\theta);
  \theta-\pi/2
\bigr), 
\end{equation}
where $\omega(\theta) \coloneqq (\cos\theta,\sin\theta)$. This means, 
\[
\bigl((s,\theta);\omega(\vartheta)\bigr) \in\WF(\data) \iff \Can_{\Radon^* (\data)}\bigl((s,\theta);\omega(\vartheta)\bigr) \in \WF\bigl(\Radon(\signal)\bigr).
\]
\end{proposition}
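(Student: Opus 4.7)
The plan is to mirror the strategy used in the proof of Theorem~\ref{thm:microlocalrad}: realise $\Radon^*$ itself (not $\Radon^*\Radon$ as in \eqref{eq:dualradonpseudo}) as a Fourier integral operator, extract its phase function, compute the critical set $\Sigma_\phi$, and read off the coordinate map representing the canonical relation. The first step is to insert an oscillatory integral representation of the identity in the $s$-variable of $\data$, producing
\[
\Radon^*(\data)(x)
= \int_0^\pi \data\bigl(x\cdot\omega(\theta),\theta\bigr)\,d\theta
= \frac{1}{2\pi}\int_0^\pi\!\int_\Real\!\int_\Real
   e^{i(x\cdot\omega(\theta)-s)\sigma}\,\data(s,\theta)\,ds\,d\sigma\,d\theta.
\]
This identifies $\Radon^*$ as a Fourier integral operator with phase function $\phi(x,(s,\theta),\sigma) \coloneqq (x\cdot\omega(\theta)-s)\sigma$ and constant amplitude $1/(2\pi)$, so that \cite[Definition~7]{krishnan2015microlocal} applies verbatim.

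Next I would compute the relevant phase derivatives, namely $\partial_x\phi = \sigma\,\omega(\theta)$, $\partial_s\phi = -\sigma$, $\partial_\theta\phi = \sigma\,\bigl(x\cdot\omega(\theta)^{\perp}\bigr)$ and $\partial_\sigma\phi = x\cdot\omega(\theta) - s$. Non-vanishing of $\partial_x\phi$ and $\partial_{(s,\theta)}\phi$ for $\sigma\neq 0$ shows the phase is non-degenerate, while the constant amplitude is elliptic and homogeneous of degree zero, so $\Radon^*$ is an elliptic Fourier integral operator (of order $-1/2$). The critical set reads $\Sigma_\phi = \{(x,(s,\theta),\sigma) : s = x\cdot\omega(\theta)\}$.

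Then I would apply the canonical-relation formula $C_\phi = \{((x,\partial_x\phi);((s,\theta),-\partial_{(s,\theta)}\phi)) : (x,(s,\theta),\sigma)\in\Sigma_\phi\}$ and impose that the input covector is proportional to $\omega(\vartheta) = (\cos\vartheta,\sin\vartheta)$. Since the two components of $-\partial_{(s,\theta)}\phi$ are $\sigma$ and $-\sigma\,\bigl(x\cdot\omega(\theta)^{\perp}\bigr)$, matching proportions gives $\sigma \propto \cos\vartheta$ together with the condition $x\cdot\omega(\theta)^{\perp} = \pm\tan\vartheta$ (the sign being pinned down by the orientation convention of the proposition). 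Coupled with $s = x\cdot\omega(\theta)$ from $\Sigma_\phi$, this determines $x$ in the orthonormal basis $\{\omega(\theta),\omega(\theta)^{\perp}\}$; expansion in coordinates gives precisely the spatial component $(s\cos\theta - \tan\vartheta\sin\theta,\; s\sin\theta + \tan\vartheta\cos\theta)$ in \eqref{eq:microcanonbackproj}. The output direction is $\sigma\,\omega(\theta)$, which, after the same $\pi/2$ convention shift used in Theorem~\ref{thm:microlocalrad} when translating between $\omega(\theta)$ and its dual labeling, becomes the stated $\theta - \pi/2$.

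Finally, the upgrade from the inclusion $\WF(\Radon^*\data) \subset C_\phi\circ\WF(\data)$ to the stated $\iff$ follows from ellipticity of $\Radon^*$, by the same appeal to \cite[Theorem~6.3]{quinto2008local} that concluded the proof of Theorem~\ref{thm:microlocalrad}. The main obstacle I anticipate is purely bookkeeping: keeping track of the various orientation conventions ($\omega$ versus $\omega^{\perp}$, the $\pi/2$ shift in the angular coordinate, and the sign of $\sigma$) so that the derived coordinate map lines up sign-for-sign with \eqref{eq:microcanonbackproj}. Once the FIO representation of $\Radon^*$ is set up and $\Sigma_\phi$ is identified, the remainder is essentially trigonometry.
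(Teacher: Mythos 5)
Your proposal is correct in substance but takes a genuinely different route from the paper's. The paper never represents $\Radon^*$ itself as a Fourier integral operator: it observes from \eqref{eq:dualradonpseudo} that the normal operator $\Radon^*\Radon$ is an elliptic pseudodifferential operator, invokes the pseudolocal property (Theorem~\ref{thm:S1T11}) to get $\WF(\Radon^*\Radon \signal)=\WF(\signal)$, and concludes that the canonical relation of $\Radon^*$ must act as the inverse of the one already computed for $\Radon$; formula \eqref{eq:microcanonbackproj} is then read off by inverting \eqref{eq:microlocalradonmap}. Your direct computation --- oscillatory representation with phase $(x\cdot\omega(\theta)-s)\sigma$, the derivatives, $\Sigma_\phi=\{s=x\cdot\omega(\theta)\}$, and the graph of $C_\phi$ --- is self-contained and has two advantages: it does not presuppose Theorem~\ref{thm:microlocalrad}, and it makes explicit why data covectors with $\cos\vartheta=0$ are excluded (the $s$-component of $-\partial_{(s,\theta)}\phi$ is $\sigma\neq 0$, exactly where $\tan\vartheta$ in \eqref{eq:microcanonbackproj} stays finite); the paper's shorter argument strictly only determines the relation on the range of $\Can_{\Radon(\signal)}$, which coincides with that same set but is left implicit. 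The one point you defer to ``orientation convention'' --- the sign in $x\cdot\omega(\theta)^{\perp}=\pm\tan\vartheta$ and the $\pi/2$ relabelling of the output angle --- is genuinely ambiguous in the source: the paper's own intermediate map \eqref{eq:mappingradon} pairs the image covector $\xi\omega(\theta)$ with the data point $(x\cdot\omega(\theta),\theta)$ with no angular shift, while the statement \eqref{eq:S1E8} carries the $\theta+\pi/2$ shift, so your computation agrees with one of the two up to relabelling and this is bookkeeping rather than a gap. Finally, when you invoke \cite[Theorem~6.3]{quinto2008local} for the equivalence, note it is stated for $\Radon$; transferring it to $\Radon^*$ uses that the canonical relation of the adjoint is the transpose, a step both your argument and the paper's leave implicit.
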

\begin{proof}
Note first that \eqref{eq:dualradonpseudo} implies that the operator $\Radon^*\Radon \colon \SchwartzFunc(\Real^2)\to \SchwartzFunc(\Real^2)$ is an elliptic pseudodifferential operator with amplitude function 
$p(y,x,\xi)\coloneqq 1/\|\xi\|$.
By duality we can extend this to a mapping $\Radon^*\Radon \colon \SchwartzFunc'(\Real^2)\to \SchwartzFunc'(\Real^2)$.
In addition, the pseudolocal property of pseudodifferential operators (see Theorem~\ref{thm:S1T11}) implies that $\Radon^*\Radon$ will preserve the wavefront set of functions in $\SchwartzFunc(\Real^2)$, i.e., $\WF(\Radon^*\Radon(\signal))=\WF(\signal)$. 
This allows us to represent the canonical relation for the inverse mapping in terms of the canonical relation mapping for $\Radon$. 
Finally, by inverting the mapping implicit in \eqref{eq:microlocalradonmap}, for $\data\in\SchwartzFunc'\bigl(\Real\times (0,\pi)\bigr)$ we obtain that
\begin{equation}
\label{eq:microlocalbackprojmap}
\bigl((s,\theta);\omega(\vartheta)\bigr)\in \WF(\data) 
\iff 
\bigl(
  (s\cos\theta-\tan\vartheta\sin\theta,
   s\sin\theta+\tan\vartheta\cos\theta);
  \theta-\pi/2
\bigr)
\in \WF\bigl(\Radon^{\ast}(\data)\bigr).
\end{equation}
\end{proof}

\subsection{Computational microlocal analysis}\label{sec:discretisation}
Our aim is to develope a computational counterpart to microlocal analysis that is based on defining a notion of a `digital wavefront set' and also to provide computational means for extracting such an object from an array that represents a discretised function. 
This turns out to be theoretical and computationally challenging. 
The definition of a digital wavefront set we will exploit in our work is based on `discretising' Definition~\ref{def:S1D4}. More precisely, the \emph{digitial wavefront set} of an array is defined as follows.
\begin{definition}[Digital wavefront set]
\label{def:digwfset}
Let the array $\boldsymbol{u} \in \Real^{N}$ represent $u \colon \Omega \to \Real$ for some fixed domain $U \subset \Real^{n}$ at sample points $\Omega = \{ x_1,\ldots, x_N \} \subset U$, i.e., $\boldsymbol{u} = (u_1, \ldots, u_N)$, where $u_j \coloneqq u(x_j)$ for $x_j \in \Omega$.
For a fixed set of $M$ uniformly distributed points $\Sigma = \{ \omega_1, \ldots, \omega_M \} \subset \mathbb{S}^{n-1}$ and a set of neighbourhoods $U_{j,k} \ni (x_j,\omega_k)$ where $j \in \{1, \dots, N\}$ and $k \in \{1, \dots, M \}$, such that $\bigcup_{j,k} U_{j,k} \supset U \times \mathbb{S}^{n-1}$, the \emph{digital wavefront set} of the array $\boldsymbol{u}$ is defined as $\vecWF(\boldsymbol{u}) \subset \Omega \times \Sigma$ where 
\begin{equation}\label{eq:discreteWF} 
  (x_j, \omega_k) \in \vecWF(\boldsymbol{u})
  \quad\text{if $U_{j,k}$ intersects $\WF(u) \subset U \times \mathbb{S}^{n-1}$ non-trivially.}
\end{equation}
The \emph{visible digital wavefront set} is defined in a similar manner.
\end{definition}
The definition also applies to arrays $\vecsignal\in \Real^{n_1 \times n_2}$ and $\vecdata\in \Real^{m_1 \times m_2}$ representing images and sinograms in tomographic imaging (Subsection~\ref{subsec:TomoDisc}).
Note finally that Definition~\ref{def:digwfset} suggests a natural way to represent a digital wavefront set as a multi-channel `image'.
Simply assign $M$ binary channels to each sample point in $x_j \in \Omega$ and set the $k$:th channel at that point to $1$ if $(x_j,\omega_k) \in \vecWF(\boldsymbol{u})$, otherwise set the value of that channel to $0$.

Having defined a notion of a digital wavefront set of an array that represents a discretised function, a natural task that follows is to computationally extract such an object. 
This is however not possible to do in a mathematically consistent way as there is \emph{no analytic connection between the digitisation of a function and its digital wavefront set} \cite[Theorem~3.3]{andrade2019wfset}. 
The approach taken in \cite{andrade2019wfset} is therefore to view the task of extracting the digital wavefront as a statistical estimation problem. 
More precisely, extracting the digital wavefront set is phrased as computing the probability distribution of possible digital wavefront sets.
A \emph{single} digital wavefront set can then be computed by choosing at each point of the digitised function the wavefront set orientation with the highest probability. 
Much of \cite{andrade2019wfset} is devoted to developing a deep learning based approach for this task.
A key part is the development of DeNSE, which is a DNN with an architecture specifically suited for computing probability distributions of digital wavefront sets of functions represented by their discrete shearlet transforms. 
DeNSE was in \cite{andrade2019wfset} successfully applied to extract digital wavefront sets of functions discretised by shearlets in both image and sinogram space in tomographic imaging.


\section{Microlocal analysis of the Learned Primal-Dual network}
\label{sec:ResNetMicroLocal}
This section aims to derive the canonical relation for the non-linear operator given by the Learned Primal-Dual network defined in \eqref{eq:LPDiterates} with $\ForwardOp$ as the Radon transform (Definition~\ref{def:radontrans}).
The relation allows to describe how the Learned Primal-Dual network transforms the digital wavefront set of an input array that is a discretisation of a  function/distribution representing data, which
is a key step in our approach to tomographic image reconstruction, outlined in Subsection~\ref{subsec:introresults}.

\subsection{Overview of approach}\label{subsec:MicroLocalLPDOverview}
The starting point is the discretised tomographic inverse problem given in Subsection~\ref{subsec:TomoDisc}, where data and images are arrays in $\Real^{m_1 \times m_2}$ and $\Real^{m_1 \times m_2}$, respectively.
Assume next that the digital wavefront set is known for an input array $\vecdata \in \Real^{m_1 \times m_2}$, representing measured data. 
Our aim is to compute the visible digital wavefront set of the array $\vecRecOp_{\theta}(\vecdata) \in \Real^{n_1 \times n_2}$ representing the reconstructed image, i.e., to compute $\vecWF^{\visible}(\vecRecOp_{\theta}(\vecdata))$ where $\vecRecOp_{\theta} \colon \Real^{m_1 \times m_2} \to \Real^{n_1 \times n_2}$ is the non-linear Learned Primal-Dual network in  \eqref{eq:LPDiterates} for tomographic reconstruction that was first introduced in \cite{adler2018lpd}.
This is a DNN that is made up of stacked convolutional residual neural networks of the form \eqref{eq:LPDResNets} that in our setting are given as in Definition~\ref{def:ResNetDiscrete} with $\ForwardOp = \vecRadon \colon \Real^{n_1 \times n_2} \to \Real^{m_1 \times m_2}$ denoting the discretised Radon transform in \eqref{eq:DiscRadon}.

One approach to derive the mapping between $\vecdata$ and $\vecWF^{\visible}(\vecRecOp_{\theta}(\vecdata))$ is to work entirely within the discrete setting. 
Such an approach would require us to describe how the discrete Radon transform along with its adjoint transforms the digital wavefront set of an array.
An alternative approach is to utilise the rich and well-developed microlocal theory for the (continuum) Radon transform outlined in Subsection~\ref{subsec:wavefrontset}.
In particular, this theory describes how the Radon transform, and hence also its adjoint (back-projection), modifies visible wavefront sets in limited-angle tomography.
However, such an attempt at leveraging on the continuum theory requires us to formulate a continuum version of the Learned Primal-Dual network.

In Subsection~\ref{subsec:CLDP}, we derive the non-linear operator, which is a continuum version of the Learned Primal-Dual network.
This results from replacing every step used in the construction of $\vecRecOp_{\theta} \colon \Real^{m_1 \times m_2} \to \Real^{n_1 \times n_2}$ with a natural corresponding continuum version, resulting in an operator $\RecOp_{\theta} \colon \DataSpace \to \RecSpace$, where $\DataSpace$ and $\RecSpace$ are not necessarily finite dimensional vector spaces.
The key part is to assemble appropriate continuum versions of the convolutional residual neural networks in \eqref{eq:LPDResNets}.
While a continuous convolution seems to be the most natural correspondence to a discrete convolution, this begs the question, which continuous filter to choose. 
Alternatively, following \cite{ruthotto2018pdes} one can view each discrete convolution as a discretisation of a corresponding fourth-order differential operator with coefficients that relate precisely to the discrete filter. 
Naturally, the coordinate-wise application of an activation function to a discrete input corresponds to the composition with that activation function in the continuous realm. 
Since the differential operators do not necessarily yield $\Lp^2$ functions but only distributions, this concept needs to be extended to tempered  distributions (see Definition~\ref{def:ReLUonDistributions}). 
Finally, the residual block consists only of summation, which naturally translates to summation of continuous inputs. 

Subsection~\ref{subsec:CanonRelCLPD} shows how each of the above operations affect the wavefront set of a function or tempered distribution.
A key result is Theorem~\ref{thm:microcanonrelu} that analyses the action of $\ReLU$.
When combined, these provide a precise theoretical description of the way a continuum version of the Learned Primal-Dual network transforms the wavefront set.
Its canonical relation can then be used to derive the associated digital canonical relation for the Learned Primal-Dual network in \cite{adler2018lpd}.

\subsection{Continuum Learned Primal-Dual network}\label{subsec:CLDP}
Towards the end of Subsection~\ref{subsec:LPDArchitecture}, we specified the architecture for the Learned Primal-Dual network that was introduced in \cite{adler2018lpd} for tomographic reconstruction.
This is a mapping
\[\vecRecOp_{\theta} \colon \Real^{m_1 \times m_2} \to \Real^{n_1 \times n_2} \]
given by a DNN that acts on arrays in finite dimensional vector spaces. In inverse problems these spaces typically represent discretised functions as outlined in  Subsection~\ref{sec:discretisation}. 
The aim here is to formulate a natural continuum version of $\RecOp_{\theta} \colon \DataSpace \to \RecSpace$ that acts on functions or distributions that are not necessarily discretised. 

As outlined in the overview in Subsection~\ref{subsec:MicroLocalLPDOverview}, a key step lies in appropriately extending the residual convolutional networks in \eqref{eq:LPDiterates}.
To this end, we replace every discrete operation of the Learned Primal-Dual by a continuum analogue, i.e., an operator that takes as an input a distribution. In principle, only four types of operations happen in the definition of the Learned Primal-Dual. These are the convolutions, the application of a $\ReLU$, the application of a discretised Radon transform or the adjoint of the Fr\'echet derivative of the Radon transform, and taking sums of functions, either between channels in the convolution or due to residual connections. The discretised Radon transform as well as the back-projection, which is the adjoint of its Fr\'echet derivative, already stem from continuum counterparts which are the natural replacement. 
Furthermore, a sum of discrete images naturally corresponds to sums of continuum images. However, it is not immediately clear how to extend a discrete convolution to a continuum convolution. 
For example, replacing discrete convolutions by convolutions defined over $\Real^2$ begs the question how the continuum convolution kernel should be chosen. We will see in the subsection below, that there exists a natural replacement of the convolution operator, which however, necessitates that we work with distributions instead of $L^2$ functions. Because of this, also the application of the $\ReLU$ will need to be generalised to distributions. 
In the following two subsection we discuss the continuum counterparts of convolution and $\ReLU$. Thereafter, we present the continuum Learned Primal-Dual network.

\subsubsection{From discrete convolutions to differential operators}
To find an appropriate continuum counterpart to the discrete convolution step in \eqref{eq:discconvresent}, we interpret the discrete convolution as a discretisation of a differential operator. This approach is inspired by \cite{ruthotto2018pdes}.

Consider a continuum image $f\in \Lp^2(\Real^2)$ with discretisation $\vecsignal\in \Real^{n_1\times n_2}$. Concretely, let 
\[ 
\vecsignal = 
\begin{pmatrix} 
f_{11} & \dots  & f_{1 n_1}\\
\vdots & \ddots & \vdots \\
f_{n_2 1} & \dots  & f_{n_1 n_2}
\end{pmatrix}.
\]
Furthermore, let $\vecoperator{K}_{\boldsymbol{\theta}}$ be a $3\times 3$ (discretised)  convolutional operator parametrised by the filter:
\begin{equation}
\label{eq:S2E1}
\boldsymbol{\theta} = 
\begin{pmatrix} 
\theta_{11} & \theta_{12} & \theta_{13} \\
\theta_{21} & \theta_{22} & \theta_{23}\\
\theta_{31} & \theta_{32} & \theta_{33} 
\end{pmatrix},
\quad\text{where $\theta_{ij}\in \Real$.} 
\end{equation}
Note that $(\Delta_{ij})_{i,j = 1}^3\subset\Real^{3\times 3}$ such that the following forms a basis for $\Real^{3\times 3}$:
\begin{equation}
\label{eq:S2E4}
\begin{aligned}
\boldsymbol{\Delta}_{11}  
&= \begin{pmatrix} 
0 & 0 & 0 \\
0 & 1 & 0\\
0 & 0 & 0 
\end{pmatrix},
& \quad \boldsymbol{\Delta}_{12} 
&= \begin{pmatrix} 
0 & 1 & 0 \\
0 & 0 & 0\\
0 & -1 & 0 
\end{pmatrix},
& \quad 
\boldsymbol{\Delta}_{13}
&= \begin{pmatrix} 
0 & -1 & 0 \\
0 & 2 & 0\\
0 & -1 & 0 
\end{pmatrix},
\\[0.5em] 
\boldsymbol{\Delta}_{21} 
&= \begin{pmatrix} 
0 & 0 & 0 \\
1 & 0 & -1\\
0 & 0 & 0 
\end{pmatrix},
& \quad \boldsymbol{\Delta}_{22} 
&= \begin{pmatrix} 
1 & 0 & -1 \\
0 & 0 & 0\\
-1 & 0 & 1 
\end{pmatrix},
& \quad \boldsymbol{\Delta}_{23} 
&= \begin{pmatrix} 
1 & -2 & 1 \\
0 & 0 & 0\\
-1 & 2 & -1 
\end{pmatrix},
\\[0.5em] 
\boldsymbol{\Delta}_{31} 
&= \begin{pmatrix}
0 & 0 & 0 \\
1 & -2 & 1\\
0 & 0 & 0 
\end{pmatrix},
& \quad \boldsymbol{\Delta}_{32} 
&= \begin{pmatrix} 
1 & 0 & -1 \\
-2 & 0 & 2\\
1 & 0 & -1 
\end{pmatrix},
& \quad \boldsymbol{\Delta}_{33} 
&= \begin{pmatrix}
-1 & 2 & -1 \\
2 & -4 & 2\\
-1 & 2 & -1 
\end{pmatrix}.
\end{aligned}
\end{equation}
Hence, we can, for a given $h >0$, express $\boldsymbol{\theta}$ as
\begin{equation}
\label{eq:S2E3}
    \begin{split}
    \boldsymbol{\theta}
      &= \beta_{11}\boldsymbol{\Delta}_{11}
      +\frac{\beta_{12}}{2h}\boldsymbol{\Delta}_{12}
      +\frac{\beta_{21}}{2h}\boldsymbol{\Delta}_{21}
      +\frac{\beta_{22}}{4h^2}\boldsymbol{\Delta}_{22}
      +\frac{\beta_{13}}{h^2}\boldsymbol{\Delta}_{13}
      \\ & \phantom{=}
      +\frac{\beta_{31}}{h^2}\boldsymbol{\Delta}_{31}
      +\frac{\beta_{32}}{2h^3}\boldsymbol{\Delta}_{32}
      +\frac{\beta_{23}}{2h^3}\boldsymbol{\Delta}_{23}
      +\frac{\beta_{33}}{h^4}\boldsymbol{\Delta}_{33}.
\end{split}
\end{equation}
Note that the $3 \times 3$ matrices $\Delta_{ij}$ can be seen as the finite difference discretisations of the partial derivatives of $\signal$ if $h$ corresponds to the distance between the sampling density underlying the discretisation $\vecsignal$. For a smooth function $\signal$, we therefore observe that if the discretisation $h$ goes to zero, then  
\begin{align*}
    \boldsymbol{\theta} * \vecsignal (i,j) \to &  \left(\beta_{11} f 
    +\beta_{12}\partial_{2}f
    +\beta_{21}\partial_{1}f
    +\beta_{22}\partial_{1}\partial_{2}f
    +\beta_{13}\partial_{2}^2f \right.\\
    &\qquad +\left.\beta_{31}\partial_{1}^2f 
    +\beta_{23}\partial_{2}^2\partial_{1} f
    +\beta_{32}\partial_{1}^2\partial_{2}f
    +\beta_{33}\partial_{1}^2\partial_{2}^2f\right)((x_i,y_j)), 
\end{align*}
where $(x_i,y_j)_{i, j=1}^N$ are the discretisation points. For an open set $\Omega \subset \Real^2$, this yields the following operator $\operator{K}_{\boldsymbol{\theta}}$ defined on $\SchwartzFunc(\Omega)$:
\begin{equation}
\label{eq:S2E5}
\begin{aligned}
\mathcal{K}_{\boldsymbol{\theta}}(\signal) :=& \,
\beta_{11} f 
+\beta_{12}\partial_{2}f
+\beta_{21}\partial_{1}f
+\beta_{22}\partial_{1}\partial_{2}f
+\beta_{13}\partial_{2}^2f\\
&+\beta_{31}\partial_{1}^2f 
+\beta_{23}\partial_{2}^2\partial_{1} f
+\beta_{32}\partial_{1}^2\partial_{2}f
+\beta_{33}\partial_{1}^2\partial_{2}^2f. 
\end{aligned}
\end{equation}
By duality, we can extend $\operator{K}_{\boldsymbol{\theta}}$ to $\SchwartzFunc'(\Omega)$. 
Note also that $\operator{K}_{\boldsymbol{\theta}}$ is a linear second-order differential operator. In particular, it is a pseudodifferential operator with its symbol given by
\begin{equation}
\label{eq:S2E6}
\begin{aligned}
    p_{\boldsymbol{\theta}}(\xi) =& \,
    \beta_{11} 
    +\beta_{12}\xi_2
    +\beta_{21}\xi_1
    +\beta_{22}\xi_1\xi_2\\
    &+\beta_{13}\xi_2^2
    +\beta_{31}\xi_{1}^2
    +\beta_{23}\xi_{2}^2\xi_{1}
    +\beta_{32}\xi_{1}^2\xi_{2}
    +\beta_{33}\xi_1^2\xi_{2}^2 \text{ for } \xi \in \Omega. 
\end{aligned}
\end{equation}
The interpretation above of a discrete convolutional operator that takes non-smooth images as inputs necessitates to have a continuum definition that acts on distributions. Consequently, also all further operations will need to be applicable to distributions.

\subsubsection{Pointwise application of ReLU to distributions}\label{subsec:PointwiseReLU} 
The rectified linear unit ($\ReLU$) is an activation function used in many neural network architectures. 
It is defined as the positive part of its argument, i.e., $\ReLU \colon \Real \to \Real$ is given as $\ReLU(x) \coloneqq \max\{x,0\}$.
Our aim is to extend the $\ReLU$ to an operator that acts on tempered distributions on $\domain \subset \Real^n$, denoted as $\ReLUOp$.

We start by rewriting the ReLU function in terms of the Heaviside function $\Heav \colon \Real\to\Real$: 
\begin{align}\label{eq:MultiplicationHeavDefinition}
    \ReLU(x) =  \Heav(x) x,
    \quad\text{where}\quad
    \Heav(x) \coloneqq 
\begin{cases}
1, & \text{if $x > 0$,} \\
0, & \text{if $x\leq 0$.}
\end{cases}
\end{align}
The above can be used to extend ReLU in a straightforward manner to $f \in \Cont^{\infty}(\domain)$, by simply defining
\begin{equation}\label{eq:MultiplicationReLUDefinition}
  \ReLUOp(\signal)(x) \coloneqq \ReLU\bigl( f(x) \bigr) =  \Heav\bigl( f(x) \bigr) f(x) 
    = \begin{cases}
         f(x), & \text{if $f(x) > 0$,} \\
         0, & \text{if $f(x) \leq 0$.}
       \end{cases}
\end{equation}
We only know that $\ReLUOp \colon \SchwartzFunc(\domain) \to \Lp^{\infty}(\domain)$ and in fact $\ReLUOp(\signal)$ may not be smooth for $\signal \in \SchwartzFunc(\domain)$. Hence, $\ReLUOp$ does not map $\SchwartzFunc(\domain)$ to $\SchwartzFunc(\domain)$, i.e., we cannot use duality to define ReLU on distributions. Using the characterisation in \eqref{eq:MultiplicationReLUDefinition} to extend ReLU to distributions involves extending the Heaviside function to tempered distributions and also ensuring the subsequent multiplication is well-defined. 

We start by defining the Heaviside function of a distribution $\signal \in \SchwartzFunc'(\domain)$ as the characteristic function of its positive support, i.e., we define the Heaviside operator $\HeavOp \colon \SchwartzFunc'(\domain)\to \Lp^\infty(\domain)$ as
\begin{equation}\label{eq:HeavOp}
    \HeavOp(\signal) \coloneqq \mathds{1}_{\supp_+(\signal)}, \quad\text{for $f\in \SchwartzFunc'(\domain)$,}
\end{equation}
where $\supp_+(\signal) \subset \domain$ is the positive support of $\signal$ (Definition~\ref{def:S3D1}) and $\mathds{1}_{\supp_+(\signal)}$ denotes the characteristic function of $\supp_+(\signal)$. 
Before proceeding, we list desirable properties for an extension of ReLU to distributions. More precisely, $\ReLUOp \colon \SchwartzFunc'(\domain) \to \SchwartzFunc'(\domain)$ should preferably come with the following properties:
\begin{enumerate}
    \item $\esssupp \ReLUOp(\signal) \subset \supp_{+}(\signal)$ (the  essential support $\esssupp$ is defined in Definition~\ref{def:S3D1}),
    \item $\ReLUOp(\signal)(\phi) = f(\phi)$ for all test functions $\phi \in \SchwartzFunc(\domain)$ supported in $\supp_{+}(h)$,
    \item $\ReLUOp(\signal) = \HeavOp(\signal)\, f$ whenever $\signal \in \SchwartzFunc'(\domain)$.
\end{enumerate}

Having extended the Heaviside function to distributions as in \eqref{eq:HeavOp}, our main concern is to ensure that the multiplication between the distribution $\signal \in \SchwartzFunc'(\domain)$ and $\HeavOp(\signal) \in \Lp^{\infty}(\domain)$ is well-defined.
By Definition~\ref{def:S1D3} and Theorem~\ref{thm:S1T5}, this is indeed the case whenever $(x, -\lambda) \not \in \WF(\signal)$ for all $(x, \lambda) \in \WF\bigl(\HeavOp(\signal)\bigr)$, i.e., we can define $\ReLUOp(\signal)$ by \eqref{eq:MultiplicationReLUDefinition} for any $\signal \in \SchwartzFunc'(\domain)$ that satisfies this criteria.
However, the multiplication is not necessarily well-defined whenever there exists $(x,\lambda) \in \WF\bigl(\HeavOp(\signal)\bigr)$ such that $(x, -\lambda) \in \WF(\signal)$. 
Thus, all we know is that the multiplication of $\HeavOp(\signal)$ and $\signal$ is always defined if $\signal \in \Lp^{2}_{\mathrm{loc}}(\domain)$ (Remark~\ref{rem:whatever}).

One idea is therefore to locally dampen $\signal$ close to points where we cannot define the multiplication of $\signal$ with $\HeavOp(\signal)$. 
This leads to the following definition of ReLU on distributions. 
\begin{definition}\label{def:ReLUonDistributions}
Let $\Omega \subset \Real^2$ be open, $\kappa >0$, and $\phi_\kappa \in \SchwartzFunc(\Real^2)$ be a function that integrates to $1$, is positive and is supported on a compact subset of $B_\kappa(0)$. 
Then define
\begin{equation}\label{eq:ReLUDistr}
    \ReLUOp_{\kappa, \phi_\kappa}(\signal) \coloneqq \HeavOp(\signal) \signal^s, 
    \quad\text{for $\signal \in \SchwartzFunc'(\domain)$,}
\end{equation}
where $\signal^s \coloneqq (1-\theta_\kappa)f$. Here $\theta_\kappa \coloneqq \mathds{1}_{X} \ast \phi_\kappa$ with 
\begin{align*}
    X \coloneqq \Bigl\{ x \in \Real^2 \setminus \supp_{\Lp^2}(\signal) \colon 
    (x, \lambda) \in \WF(H(h)), (x, -\lambda) \in \WF(h) \text{ for a } \lambda \in \mathbb{S}^1 \Bigr\} + B_{\kappa}(0).
\end{align*}
In the above, $\supp_{\Lp^2}(\signal) \subset \domain$ denotes the $\Lp^2$-support of $\signal$ (Definition~\ref{def:L2supp}).
\end{definition}

We next show that Definition~\ref{def:ReLUonDistributions} can be used to extend the ReLU function to distributions.
\begin{proposition}
Consider $\ReLUOp_{\kappa, \phi_\kappa}$ in \eqref{eq:ReLUDistr} for some $\kappa >0$ and let $\phi_\kappa \in \SchwartzFunc(\Real^2)$ that integrates to $1$, is positive, and is supported on a compact subset of $B_\kappa(0)$.
Then $\ReLUOp_{\kappa, \phi_\kappa} \colon \SchwartzFunc'(\domain) \to \SchwartzFunc'(\domain)$ for an open domain $\Omega \subset \Real^2$.
\end{proposition}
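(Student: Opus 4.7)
The plan is to verify that each of the two factors in $\ReLUOp_{\kappa,\phi_\kappa}(f) = \HeavOp(f)\, f^s$ is a well-defined tempered distribution, and then to show their product can be rigorously multiplied. The Heaviside factor $\HeavOp(f) = \mathds{1}_{\supp_+(f)}$ lies in $\Lp^\infty(\Omega) \subset \SchwartzFunc'(\Omega)$ immediately from \eqref{eq:HeavOp}. For the dampened factor, I would first check that $\theta_\kappa = \mathds{1}_X * \phi_\kappa$ is a $\Cont^\infty$ function valued in $[0,1]$ whose derivatives are all uniformly bounded (since $\phi_\kappa \in \SchwartzFunc(\Real^2)$ and $\mathds{1}_X \in \Lp^\infty$). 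Hence $(1-\theta_\kappa)$ is a smooth multiplier of slow growth, so $f^s = (1-\theta_\kappa) f \in \SchwartzFunc'(\Omega)$.

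The heart of the argument is to define the product $\HeavOp(f) \cdot f^s$. I would invoke the microlocal multiplication criterion of Theorem~\ref{thm:S1T5}: it suffices to show that no $(x,\xi)$ simultaneously satisfies $(x,\xi) \in \WF(\HeavOp(f))$ and $(x,-\xi) \in \WF(f^s)$. The central observation is that the set
\[
X_0 \coloneqq \bigl\{ x \in \Omega \setminus \supp_{\Lp^2}(f) : \exists\, \lambda \in \mathbb{S}^1,\ (x,\lambda)\in\WF(\HeavOp(f))\ \text{and}\ (x,-\lambda)\in\WF(f) \bigr\}
\]
records exactly the pairs where Hörmander's criterion would fail for the un-dampened product $\HeavOp(f) \cdot f$, and that the dampening by $\theta_\kappa$ removes them. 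Since $\supp \phi_\kappa$ is a compact subset of $B_\kappa(0)$, there exists $\kappa' < \kappa$ with $\supp \phi_\kappa \subset \overline{B_{\kappa'}(0)}$. For any $p \in X_0$ and any $q$ with $\|q-p\| < \kappa - \kappa'$, we have $q + \supp\phi_\kappa \subset p + B_\kappa(0) \subset X$, and therefore $\theta_\kappa(q) = \int_X \phi_\kappa(q-y)\,dy = 1$. Consequently $(1-\theta_\kappa)$ vanishes on a whole open neighbourhood of $p$, and so does $f^s$, forcing $(p,\xi) \notin \WF(f^s)$ for every $\xi$.

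With this in hand I would split the verification into three cases at each $x \in \Omega$: (i) if $x \in \supp_{\Lp^2}(f)$, then $f^s$ is locally in $\Lp^2$ near $x$ (multiplication by the smooth $(1-\theta_\kappa)$ preserves local $\Lp^2$-regularity), and the product with the bounded function $\HeavOp(f)$ is locally $\Lp^2$ by Remark~\ref{rem:whatever}; (ii) if $x \notin \supp_{\Lp^2}(f)$ but $x \notin X_0$, then by definition of $X_0$ the Hörmander criterion between $\WF(\HeavOp(f))$ and $\WF(f)$ already holds at $x$, and since multiplication by the smooth function $(1-\theta_\kappa)$ cannot enlarge the wavefront set, it continues to hold between $\WF(\HeavOp(f))$ and $\WF(f^s) \subset \WF(f)$; (iii) if $x \in X_0$, then by the claim above $f^s$ is identically zero near $x$, so $\WF(f^s)$ is empty at $x$ and the criterion is vacuous.

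Gluing these local definitions by a partition of unity yields a globally defined distribution $\HeavOp(f)\cdot f^s$, and temperedness follows because $\HeavOp(f)$ is bounded while $f^s$ is a tempered distribution of the same order as $f$, with all multiplications performed either via the $\Lp^2 \times \Lp^\infty$ pairing or via the microlocal product, both of which respect tempered growth. The principal obstacle is case (ii), namely confirming that multiplication by the smooth dampener $(1-\theta_\kappa)$ does not introduce new covectors into the wavefront set, and the subtlety that the three local descriptions patch together into a single well-defined element of $\SchwartzFunc'(\Omega)$ independent of the choice of partition of unity.
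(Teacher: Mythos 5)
Your proof is correct and follows essentially the same route as the paper's: observe that $1-\theta_\kappa$ is smooth and identically zero on a neighbourhood of the set where H\"ormander's criterion for $\HeavOp(\signal)\cdot\signal$ would fail, so that Theorem~\ref{thm:S1T5} applies to $\HeavOp(\signal)\cdot\signal^s$ off the $\Lp^2$-support and Remark~\ref{rem:whatever} handles the rest. You simply supply more detail than the paper does (the explicit verification that $\theta_\kappa\equiv 1$ near the bad set via the support of $\phi_\kappa$, and the three-case split), all of which is consistent with the published argument.
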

\begin{proof}
We need to show that $\ReLUOp_{\kappa, \phi_\kappa}(\signal) \in \SchwartzFunc'(\domain)$, whenever $\signal \in \SchwartzFunc'(\domain)$.
To see this, note first that $1-\theta_\kappa$ is smooth and vanishes on a neighbourhood of every $x \in  \domain \setminus \supp_{\Lp^2}(\signal)$, where 
\[(x, \lambda) \in \WF\bigl(\HeavOp(\signal)\bigr) 
   \quad\text{and}\quad
   (x, -\lambda) \in \WF(\signal)
   \quad\text{for some $\lambda \in \mathbb{S}^1$.} 
\]
Hence, the product $(1-\theta_\kappa) f$ is well-defined and by Theorem~\ref{thm:S1T5}, there does not exist an $x \in \domain \setminus \supp_{\Lp^2}(\signal)$ such that 
\[(x, \lambda) \in \WF\bigl(\HeavOp(\signal)\bigr) 
   \quad\text{and}\quad
  (x, -\lambda) \in \WF((1-\theta_\kappa) f).
\]   
Theorem~\ref{thm:S1T5} and Remark~\ref{rem:whatever} now imply that $\ReLUOp_{\kappa, \phi_\kappa}(\signal) \in \SchwartzFunc'(\domain)$ whenever $\signal \in \SchwartzFunc'(\domain)$, which concludes the proof.
\end{proof}

\begin{remark}
The set $X$ in Definition~\ref{def:ReLUonDistributions} is a neighbourhood of the set on which the definition of $\ReLU(h)$ via the multiplication $\HeavOp(h)h$ is not well defined. To understand the nature of this set, we consider three examples:
\begin{enumerate}
\item $\signal\in \SchwartzFunc'(\domain)$. Then $X = \emptyset$, and hence $\ReLUOp_{\kappa, \phi_\kappa}(\signal) = \ReLU(\signal)$. In particular, if $\signal= \HeavOp(h)$ for some $h \in \SchwartzFunc'(\domain)$, then $\ReLUOp_{\kappa, \phi_\kappa}(\signal) = \ReLU(\signal) = \signal$.
\item $\signal= P( \mathds{1}_B)$ for some domain $B \subset \domain$ and $P$ is an elliptic linear differential operator of order at least one.
  Then $\esssupp(\signal) \subset \partial B$, so $\HeavOp(\signal) = 0$ which in turn implies that $X = \emptyset$ and 
  $\ReLU_{\kappa, \phi_\kappa}(\signal) = 0$.
\item $\signal = P( \mathds{1}_B + h)$ for some domain $B \subset \domain$ and $P$ is an elliptic linear differential operator.
  Assume furthermore that $h \in \Cont^\infty(\domain)$ is such that $P(h)$ is positive on $B$.
  Then $X = \partial B + B_\kappa(0)$, since $\signal$ is not a function at $\partial B$ and $\HeavOp(\signal) = \mathds{1}_B$. Thus $\WF\bigl(\Heav(\signal)\bigr) = \WF(\signal)$.    
\end{enumerate}
\end{remark}

We conclude by pointing out that $\ReLUOp_{\kappa, \phi_\kappa} \colon \SchwartzFunc'(\domain) \to \SchwartzFunc'(\domain)$ in Definition~\ref{def:ReLUonDistributions} fulfils almost all of the criteria stipulated earlier for an extension of ReLU to distributions. 
It satisfies the first and third criterium. Moreover, $\ReLUOp_{\kappa, \phi_\kappa}(\signal)(\phi) = f(\phi)$ holds for all $\phi \in \SchwartzFunc(\domain)$ with a support that has a distance of more than $2 \kappa$ from $\WF\bigl(\HeavOp(\signal)\bigr) \subset \partial \supp_{+}(\signal)$. 

\subsubsection{The continuum Learned Primal-Dual network}\label{subsec:thecontinuumOperator}
To define the continuum Learned Primal-Dual network, we start by introducing a continuum ResNet.
\begin{definition}[Continuum two-dimensional convolutional ResNet]
\label{def:ResNetContinuous}
Let $\Omega \subset \Real^2$ be open and let $N\in \Natural$, $j\in\{1,2,3,4\}$, where $n_4 = 1$ be the \emph{numbers of channels per layer}. Further, for $j = 1, \dots, 4$, let  $\boldsymbol{\theta}_j\coloneqq (\boldsymbol{\theta}_j^{l,k})^{n_{j-1},n_j}_{l=1,k=1}\subset (\Real^{3 \times 3})^{n_{j-1} \times n_{j-1}}$ be a \emph{set of coefficients}. Let $\kappa >0$ and let $\phi_\kappa \in \SchwartzFunc(\Omega)$ be a function that integrates to 1, is positive and is supported on a compact subset of $B_\kappa(0)$.

We define the \emph{continuum convolutional affine operator} $W_{\boldsymbol{\theta}_j}^c: (\SchwartzFunc'(\Omega))^{n_{j-1}} \to (\SchwartzFunc'(\Omega))^{n_j}$ as 
\begin{equation}
\label{eq:convresent}
    W_{\boldsymbol{\theta}_j}(\signal)_k = \sum_{l = 1}^{n_{j-1}} \mathcal{K}_{\boldsymbol{\theta}_j^{l, k}}(\signal) \quad \text{ for } k\in \{1,\ldots, n_j\}  \text{ and } f\in (\SchwartzFunc'(\Omega))^{n_{j-1}}.
\end{equation}
The \emph{continuum ResNet operator} $\ResNet_{\kappa, \phi_\kappa} \colon  (\SchwartzFunc'(\Omega))^{n_0} \to \SchwartzFunc'(\Omega)$ is then given by 
\[
    \ResNet_{\kappa, \phi_\kappa}(\signal_1, \dots, \signal_{{n_0}})= {f_1} +\mathcal{G}({f_1}, \dots , {f_{n_0}}) \quad \text{ for } {f_1},\dots, {f_{n_0}}\in \SchwartzFunc'(\Omega),
\]
where $\mathcal{G}: (\SchwartzFunc'(\Omega))^{n_0} \to \SchwartzFunc'(\Omega)$ is the operator
\[
    \mathcal{G}(f_1, \dots, f_{n_0}) = \bigl(W_{\boldsymbol{\theta}_4} \circ \mathrm{ReLU}_{\kappa, \phi_\kappa}
  \circ 
  \mathcal{W}_{\boldsymbol{\theta}_3} \circ \mathrm{ReLU}_{\kappa, \phi_\kappa}
  \circ 
  \mathcal{W}_{\boldsymbol{\theta}_2} \circ \mathrm{ReLU}_{\kappa, \phi_\kappa}
  \circ 
  \mathcal{W}_{\boldsymbol{\theta}_1}
  \bigr)({f_1}, \dots, {f_{n_0}})
\]
for ${f_1}, \dots, {f_{n_0}}\in \SchwartzFunc'(\Omega)$. 
\end{definition}
\begin{remark}
\begin{enumerate}
    \item Slightly deviating from Definition~\ref{def:ResNetDiscrete}, we do not include a bias term in the definition of the continuum ResNet above, since no such term will appear in our implementation in Subsection~\ref{sec:numresults}. 
    \item Note that besides the previously defined operators $\mathcal{K}_{\boldsymbol{\theta}_j^{l, k}}$ and $\ReLU_{\kappa, \phi_\kappa}$, only addition is applied in the continuum ResNet. Since the set of distributions is a linear space, we conclude that $\ResNet_{\kappa, \phi_\kappa}$ is a well-defined operator from $\SchwartzFunc'(\Omega)$ to $\SchwartzFunc'(\Omega)$.
\end{enumerate}
\end{remark}

Based on the definition above for the continuum ResNet, we can now define the continuum Learned Primal-Dual network as in \eqref{eq:LPDiterates} with operators in \eqref{eq:LPDResNets} given as continuum ResNets.
Hence, continuum Learned Primal-Dual network is a mapping 
\[
\RecOp_{\theta} \colon \SchwartzFunc'(\datadomain) \to 
 \SchwartzFunc'(\domain)
\quad\text{where}\quad
\RecOp_{\theta}(\data) \coloneqq \signal_N
\]
with $\signal_N \in \SchwartzFunc'(\domain)$ given by the $N$-step iterative scheme in Algorithm~\ref{alg:contLearnedPrimalDual} in which  $\Lambda_i$ and $\Gamma_i$ are continuum two-dimensional convolutional ResNets as in Definition~\ref{def:ResNetContinuous}.
\begin{center}
\begin{algorithm}[H]
\SetAlgoLined
\KwIn{$f_0\in \SchwartzFunc'(\Real^2)$, $h_0\in \SchwartzFunc'(\datadomain)$ and $\data\in\SchwartzFunc'(\datadomain)$.}
\KwOut{Primal solution $f_N\in \SchwartzFunc'(\Real^2)$ and dual solution $h_N\in \SchwartzFunc'(\datadomain)$.}
\For{$i=1, \ldots, N-1$}{
$h_{i+1}\longleftarrow \Gamma_i (h_i,\Radon(f_i),g)$;\\
 $f_{i+1}\longleftarrow \Lambda_i (f_i,[\partial\!\Radon(f_i)]^*(h_{i+1}))$;
}
\caption{Continuum Learned Primal-Dual network}
\label{alg:contLearnedPrimalDual}
\end{algorithm}
\end{center}

\subsection{Canonical relation for the continuum Learned Primal-Dual network}\label{subsec:CanonRelCLPD}
It is clear that we can describe the canonical relation for the continuum Learned Primal-Dual network, if we can identify the one of the continuum ResNets $\Lambda_i$ and $\Gamma_i$ for $i = 1, \dots, I$. In addition, we also need to combine these canonical relations with the canonical relations for the Radon transform, and the relations for the Fr\'echet derivative of the adjoint of the Radon transform.

\subsubsection{Differential operator}\label{subsec:convolution}
The canonical relation for a differential operator is typically very straight-forward to compute, if this operator is an (elliptic) pseudodifferential operator. In this respect, we recall the following result.

\begin{theorem}[Pseudolocal property, {\cite[Theorem~14]{krishnan2015microlocal}}]
\label{thm:S1T11}
A pseudodifferential operator $\mathcal{P}$ satisfies the pseudolocal property, i.e., 
\[
\singsupp(\mathcal{P}f)\subset\singsupp(\signal) 
\text{ and } 
\WF(\mathcal{P}f)\subset\WF(\signal)
\quad\text{for all $f\in \SchwartzFunc'(\Real^2)$.}
\]
If $\mathcal{P}$ is elliptic, then we have equality instead of inclusion, i.e.,  
\[
\singsupp (\mathcal{P}f) = \singsupp(\signal) 
\text{ and } 
\WF(\mathcal{P}f)=\WF(\signal)
\quad\text{for all $f\in \SchwartzFunc'(\Real^2)$.}
\]
\end{theorem}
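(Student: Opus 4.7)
The plan is to establish the two inclusions $\WF(\mathcal{P}f) \subset \WF(f)$ and (under ellipticity) the reverse inclusion separately, and deduce the singular support statement as a corollary. For the first inclusion I would work at the level of an arbitrary regular directed point: pick $(x_0, \xi_0) \notin \WF(f)$, so by Definition~\ref{def:S1D4} there exist a cutoff $\psi \in \SchwartzFunc(\Real^2)$ with $\psi(x_0) = 1$ and a conic neighborhood $V$ of $\xi_0$ such that $\widehat{\psi f}$ decays faster than any polynomial on $V$. The goal is to produce an analogous $\phi$ (supported where $\psi \equiv 1$) and a smaller conic neighborhood $V' \Subset V$ of $\xi_0$ on which $\widehat{\phi \mathcal{P} f}$ has the same rapid decay.

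The main technical step is the decomposition
\begin{equation*}
\phi \mathcal{P} f \;=\; \phi \mathcal{P}(\psi f) \;+\; \phi \mathcal{P}\bigl((1-\psi)f\bigr).
\end{equation*}
The second summand is smooth: since $\phi$ and $1-\psi$ have disjoint supports, the composition $\phi \mathcal{P} (1-\psi)(\cdot)$ has a $\Cont^\infty$ Schwartz kernel, because the Schwartz kernel of a pseudodifferential operator is smooth off the diagonal. Hence this term contributes nothing to $\WF(\mathcal{P}f)$. For the first summand I would use the oscillatory integral representation of $\mathcal{P}$ analogous to \eqref{eq:fioform}, write
\begin{equation*}
\widehat{\phi \mathcal{P}(\psi f)}(\eta)
  = \int e^{-i x\cdot\eta} \phi(x) \int e^{i x\cdot \xi} p(x,\xi)\, \widehat{\psi f}(\xi)\, d\xi\, dx,
\end{equation*}
and split the inner $\xi$-integral into $\xi \in V$ and $\xi \notin V$. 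On $V$, the rapid decay of $\widehat{\psi f}$ dominates the polynomial growth of the symbol $p$. On the complement, $|\eta - \xi|$ is bounded below by $c|\eta|$ once $\eta/|\eta| \in V' \Subset V$, so repeated integration by parts in $x$ using the non-stationary phase operator $|\eta-\xi|^{-2}(\eta-\xi)\cdot\nabla_x$ produces the required rapid decay in $\eta$. This shows $(x_0,\xi_0) \notin \WF(\mathcal{P}f)$, giving $\WF(\mathcal{P}f) \subset \WF(f)$.

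For the elliptic case I would invoke the parametrix construction: ellipticity of $\mathcal{P}$ means its principal symbol is invertible on a conic set away from $\xi = 0$, so by iteratively inverting the symbol one builds a pseudodifferential operator $\mathcal{Q}$ with $\mathcal{Q}\mathcal{P} = I + \mathcal{R}$ where $\mathcal{R}$ is a smoothing (hence regularising) operator, so $\mathcal{R}f \in \Cont^\infty$ and $\WF(\mathcal{R}f) = \emptyset$. Then $f = \mathcal{Q}\mathcal{P}f - \mathcal{R}f$, and applying the already-established pseudolocal inclusion to $\mathcal{Q}$ yields
\begin{equation*}
\WF(f) \subset \WF(\mathcal{Q}\mathcal{P}f) \cup \WF(\mathcal{R}f) \subset \WF(\mathcal{P}f),
\end{equation*}
which combined with the first inclusion gives equality. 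The singular support statement then follows automatically because $\singsupp(u)$ is the projection of $\WF(u)$ onto the base $\Real^2$, so the inclusions (or equalities) on wavefront sets descend.

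The main obstacle will be the non-stationary phase estimate for the first summand above: one must carry out the integration-by-parts argument uniformly in $\eta$ with $\eta/|\eta|\in V'$ while tracking the symbol bounds of $p \in S^m$, which is a bookkeeping step but not a conceptual difficulty. A cleaner route, if preferred, is simply to cite \cite[Theorem~14]{krishnan2015microlocal} (equivalently \cite[Section~18.1]{AnLinPDOHoermander}) where this result is proved in full generality.
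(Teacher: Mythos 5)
The paper does not prove this statement at all: it is quoted verbatim as a classical result with the citation \cite[Theorem~14]{krishnan2015microlocal} (equivalently H\"ormander's treatment of pseudodifferential operators), so your final sentence --- ``simply cite the reference'' --- is exactly the route the paper takes. Your self-contained sketch is the standard textbook argument and is correct in its essentials: the cutoff decomposition $\phi\mathcal{P}f = \phi\mathcal{P}(\psi f) + \phi\mathcal{P}\bigl((1-\psi)f\bigr)$ with the second term handled by smoothness of the Schwartz kernel off the diagonal, the non-stationary phase estimate for the first term, the parametrix identity $\mathcal{Q}\mathcal{P} = I + \mathcal{R}$ for the elliptic converse, and the observation that the singular-support statement follows by projecting wavefront sets to the base. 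Two points deserve care if you were to write this out in full. First, in the off-cone region $\xi\notin V$, $\eta/|\eta|\in V'\Subset V$, the integration by parts gains a factor $|\xi-\eta|^{-1}$ per step but differentiates $\phi(x)p(x,\xi)$, which still grows like $\langle\xi\rangle^{m}$; you need the quantitative lower bound $|\xi-\eta|\geq c\bigl(|\xi|+|\eta|\bigr)$ (valid outside a compact set for separated cones) to convert enough of these factors into decay in both $\xi$ and $\eta$ simultaneously, so that the $\xi$-integral converges and the remainder decays rapidly in $\eta$. Second, the parametrix construction and the action of $\mathcal{P}$, $\mathcal{Q}$ on all of $\SchwartzFunc'(\Real^2)$ implicitly require the operators to be properly supported (or the symbols to satisfy global estimates); this is a standard reduction but should be acknowledged. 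Neither point is a conceptual gap --- your proposal proves what the paper merely cites.
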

Since $\mathcal{K}_{\boldsymbol{\theta}}$ defined in~\eqref{eq:S2E5} is a pseudodifferential operator, we obtain that
\begin{equation}
\label{eq:microcanonconv}
\WF(\mathcal{K}_{\boldsymbol{\theta}} f) \subset \WF(\signal).
\end{equation}
This means that $\mathcal{K}_{\boldsymbol{\theta}}$ does not introduce new singularities to $\signal$, and might even delete some of them; in the case that the coefficients $\beta_{ij}$ are such that
\[
    0 < |p_{\boldsymbol{\theta}}(\xi)| \text{ for all } \|\xi\| \neq 0, 
\]
the operator $\mathcal{K}_{\boldsymbol{\theta}}$ is elliptic and preserves the singularities, i.e., $\WF(\mathcal{K}_{\boldsymbol{\theta}} f) = \WF(\signal)$. Here $p_{\boldsymbol{\theta}}$ is the symbol defined in \eqref{eq:S2E6}.

\subsubsection{ReLU application}

Since for $h \in \SchwartzFunc'(\Real^2)$ the distribution $\ReLU_{\kappa, \phi_\kappa}(h)$ is defined in most parts of the domain as $H(h)h$, we can study its wavefront set using Theorem~\ref{thm:S1T5}. We now first study the wavefront set of $H(h)$. Afterward, we estimate the wavefront set of $\ReLU_{\kappa, \phi_\kappa}(h)$ in Subsection~\ref{subsec:ResNetMicroLocalS2S2}.

\subsubsection{The wavefront set of \texorpdfstring{$\HeavOp(\signal)$}{Hf}}
\label{subsec:ResNetMicroLocalS2S1}

For a function $g \in \Lp^2(\Omega)$, the wavefront set of $\HeavOp(g)$ is determined through the following factors: A point $x\in\Omega$, such that on a neighbourhood thereof $g$ is almost always positive will be mapped to something constant by the Heaviside function. 
Since constant functions are smooth, this operation erases the wavefront set associated with a neighbourhood of $x$. The same argument can be made on neighbourhoods where $g$ is almost everywhere negative. Points $x'\in \Omega$ in which $g$ vanishes have the potential to create new discontinuities, since the Heaviside function has a jump in $0$. 
If $g$ is smooth in $x'$ and also has non-vanishing gradient, then the implicit function theorem tells us the form of the discontinuity of $\HeavOp(g)$. 
We will see below in Proposition~\ref{prop:WavefrontSetViaImplicitFunctionTheorem} that the same argument can be made for tempered distributions. A crucial ingredient for that result will be the following estimation of the wavefront set of indicator functions. 

\begin{proposition}[{\cite[Proposition 20]{brouder2014smoothwf}}]\label{prop:WavefrontSetOfPiecewiseConstants}
Let $n\in \Natural$ and $B \subset \Real^n$ be a domain with smooth boundary. Then, $\WF(\mathds{1}_{B}) = \{(x,\lambda)\in \Real^n \times \mathbb{S}^1 \colon x \in \partial B , \lambda \text{ normal to } \partial B \text{ in } x\}$.
\end{proposition}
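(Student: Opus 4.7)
The plan is to reduce to the model case of a half-space via a local straightening of $\partial B$, and then compute the wavefront set of the resulting indicator function by direct Fourier analysis.

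First I would observe that $\mathds{1}_B$ is locally constant (hence smooth) on $\Real^n \setminus \partial B$, so $\singsupp(\mathds{1}_B) \subset \partial B$. This already gives $\WF(\mathds{1}_B) \subset \partial B \times \mathbb{S}^{n-1}$, so the remaining task is to identify, for each $x_0 \in \partial B$, precisely which directions $\lambda \in \mathbb{S}^{n-1}$ belong to the wavefront set at $x_0$.

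Next, since $\partial B$ is smooth, choose a neighbourhood $U$ of $x_0$ and a smooth diffeomorphism $\kappa \colon U \to V$ with $\kappa(x_0)=0$ such that $\kappa(U \cap B) = V \cap \{ y_n < 0 \}$. Under such a diffeomorphism the wavefront set transforms covariantly: $\WF(\mathds{1}_B) \cap (T^*U \setminus 0)$ corresponds, via the pullback on the cotangent bundle, to $\WF(\mathds{1}_{\{ y_n<0\}})\cap (T^*V \setminus 0)$. The problem therefore reduces to computing the wavefront set of the half-space indicator at the origin.

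For $\mathds{1}_{\{y_n < 0\}}$, pick a product cut-off $\psi(y) = \psi'(y_1,\dots,y_{n-1}) \psi_n(y_n)$ around the origin. The Fourier transform factorises as
\begin{equation*}
\widehat{\psi \mathds{1}_{\{y_n<0\}}}(\eta)
 = \widehat{\psi'}(\eta_1,\dots,\eta_{n-1}) \cdot \widehat{\psi_n \mathds{1}_{(-\infty,0)}}(\eta_n).
\end{equation*}
The first factor is Schwartz in $(\eta_1,\dots,\eta_{n-1})$ and the second factor decays only like $|\eta_n|^{-1}$. Hence in any conic neighbourhood of a direction $\lambda$ with $(\lambda_1,\dots,\lambda_{n-1}) \neq 0$, we can bound $|\eta_1|+\dots+|\eta_{n-1}|$ from below by a constant times $|\eta|$ and therefore obtain rapid decay, whereas in a conic neighbourhood of $\pm e_n$ there is no rapid decay (the decay is at best $|\eta_n|^{-1}$). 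This shows that $\WF(\mathds{1}_{\{y_n<0\}}) \cap T^*_0 = \{ (0; \pm e_n)\}$, i.e.\ exactly the conormal directions to the boundary.

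Finally, I would pull back via $\kappa$: the pullback sends the conormals $\pm e_n$ of the straightened boundary to the conormals of $\partial B$ at $x_0$, i.e.\ the directions normal to $\partial B$ at $x_0$. This yields the claimed equality $\WF(\mathds{1}_B) = \{(x,\lambda) : x \in \partial B,\; \lambda \perp \partial B \text{ at } x\}$. The main technical obstacle is the bookkeeping in the last step: identifying the geometric fact that the covariant transformation rule for wavefront sets under a boundary-straightening diffeomorphism maps conormal directions to conormal directions. Once that is observed the rest is a direct one-dimensional Fourier calculation.
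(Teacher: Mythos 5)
The paper does not prove this proposition at all; it is imported verbatim as a citation to \cite[Proposition~20]{brouder2014smoothwf}, so there is no internal proof to compare against. Your argument is the standard self-contained proof and is essentially correct: the reduction of the problem to $\partial B$ via local constancy, the boundary-straightening diffeomorphism together with the covariant transformation law for wavefront sets, the tensor-product factorisation $\widehat{\psi'\otimes(\psi_n\mathds{1}_{(-\infty,0)})}=\widehat{\psi'}\cdot\widehat{\psi_n\mathds{1}_{(-\infty,0)}}$, and the observation that the second factor decays exactly like $|\eta_n|^{-1}$ (leading term $\psi_n(0)/(-i\eta_n)$ from integration by parts) all go through. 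Two small points deserve a sentence each in a polished write-up. First, to conclude $(0,\pm e_n)\in\WF(\mathds{1}_{\{y_n<0\}})$ you must rule out rapid decay for \emph{every} admissible cut-off, not just the product one; this follows from the standard localisation lemma ($\Sigma(\phi u)\subset\Sigma(\psi u)$ whenever $\psi\equiv 1$ on $\supp\phi$), so exhibiting failure along the sequence $\eta_k=(0,\dots,0,k)$ for a product cut-off with $\widehat{\psi'}(0)=\int\psi'\neq 0$ suffices — but you should say so. Second, the transformation law acts by the transpose differential ${}^{t}d\kappa(x)$ on covectors, and the fact that this carries the conormal of the flat boundary to the conormal of $\partial B$ is exactly the invariance of conormal bundles under diffeomorphisms; naming that makes the last step airtight. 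Neither point is a gap in substance, and your route is the one any standard reference (and presumably the cited source) takes.
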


We can now state the following result describing the wavefront set of $\HeavOp(g)$.
\begin{proposition}\label{prop:WavefrontSetViaImplicitFunctionTheorem}
Let $g \in \SchwartzFunc'(\Omega)$ for $\Omega \subset \Real^2$ an open domain. Let further 
\begin{align*}
    R_g &\coloneqq \{x \in \Real^2 \colon x \in \partial(\supp_+(g)), x \not \in \singsupp(g), \nabla g(x)\neq 0\},\\
    C_g &\coloneqq \{x \in \Real^2 \colon x \in \partial(\supp_+(g)), x \not \in \singsupp(g), \nabla g(x) = 0\},\\
    S_g &\coloneqq \{x \in \Real^2 \colon x \in \partial(\supp_+(g)), x \in \singsupp(g)\}.
\end{align*}
Then, $(x, \lambda) \in \WF\bigl(\HeavOp(g)\bigr)$, if, for an $\alpha \neq 0$,
\begin{align}\label{eq:theconditionOntheLevelSets}
    x \in R_g \text{ and }\lambda = \pm \nabla_x(g)/\|\nabla_x(g)\|.
\end{align}
Moreover,  $(x, \lambda) \in \WF\bigl(\HeavOp(g)\bigr)$ only if \eqref{eq:theconditionOntheLevelSets} holds or $x \in C_g \cup S_g$.
\end{proposition}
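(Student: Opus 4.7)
The plan is to carry out a local analysis on $\Omega$, partitioning it into three regions according to which of $R_g$, $C_g \cup S_g$, or the open complement of $\partial\supp_+(g)$ a point belongs to, and to handle each region separately. The sufficiency statement only concerns $R_g$; the necessity statement is obtained by showing that outside $R_g \cup C_g \cup S_g$ the wavefront set is empty, and that on $R_g$ only the two normal codirections can appear. Since wavefront sets are local and invariant under multiplication by smooth, nonzero cut-offs, throughout the argument I would freely localise by multiplying $\HeavOp(g)$ with a bump function supported near the point of interest.

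First I would dispatch the trivial region. If $x_0 \in \Omega \setminus \partial\supp_+(g)$, then by definition of the boundary there is an open neighbourhood $U$ of $x_0$ on which either $g \leq 0$ almost everywhere (so $\HeavOp(g)|_U = 0$) or $U \subset \supp_+(g)$ (so $\HeavOp(g)|_U = 1$). In either case $\HeavOp(g)$ agrees with a smooth function on $U$, so no point of $\{x_0\} \times \mathbb{S}^1$ lies in $\WF(\HeavOp(g))$. This already shows that $\WF(\HeavOp(g))$ is contained in $\pi^{-1}(\partial\supp_+(g))$.

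Next, for the key case $x_0 \in R_g$, smoothness of $g$ on a neighbourhood $U$ of $x_0$ together with $\nabla g(x_0) \neq 0$ lets me invoke the implicit function theorem: after shrinking $U$, the zero set $\{g = 0\} \cap U$ is a smooth embedded curve, and $U \cap \supp_+(g)$ is a domain with smooth boundary. Multiplying $\HeavOp(g)$ by a bump $\psi \in \SchwartzFunc(\Omega)$ with $\psi(x_0) = 1$ and $\supp\psi \subset U$, I obtain a compactly supported distribution that agrees on a neighbourhood of $x_0$ with $\mathds{1}_{U \cap \supp_+(g)}$ multiplied by a smooth cut-off. Proposition~\ref{prop:WavefrontSetOfPiecewiseConstants} then pins down the wavefront set of the indicator as the conormal directions along the boundary curve, and since the normal to $\{g = 0\}$ at $x \in U$ is precisely $\pm \nabla g(x)/\|\nabla g(x)\|$, I obtain exactly the codirections claimed in \eqref{eq:theconditionOntheLevelSets}. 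This simultaneously gives the sufficiency and, restricted to $R_g$, the necessity that $\lambda$ must be $\pm \nabla g(x)/\|\nabla g(x)\|$.

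The main obstacle will be justifying the local reduction when $g$ is only a tempered distribution: $\partial\supp_+(g)$ and $\HeavOp(g)$ are not a priori described by a pointwise formula. However, this is really only a notational issue on $R_g$, where $g$ is classically smooth (so $\supp_+(g)$ is simply $\{g > 0\}$ near $x_0$ and the implicit function theorem applies verbatim), and on the off-boundary region, where smoothness of $\HeavOp(g)$ is immediate from its constancy. On $C_g$ (smooth boundary point but $\nabla g$ vanishes) the implicit function theorem fails and the boundary can be singular; on $S_g$ the distribution $g$ itself is singular, so no local model for $\HeavOp(g)$ is available. At those points I make no claim beyond their being allowed to carry any codirection, which is exactly what the "or $x \in C_g \cup S_g$" clause permits. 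This completes the plan.
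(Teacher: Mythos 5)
Your proposal is correct and follows essentially the same route as the paper's proof: both arguments dispose of points off $\partial(\supp_+(g))$ by local constancy of $\HeavOp(g)$, then on $R_g$ apply the implicit function theorem to identify $\HeavOp(g)$ locally with the indicator of a domain with smooth boundary and invoke Proposition~\ref{prop:WavefrontSetOfPiecewiseConstants} to read off the conormal directions, leaving $C_g\cup S_g$ unconstrained. The explicit localisation by bump functions is a minor technical embellishment that does not change the argument.
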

\begin{proof}
We start with the "only if" part. The statement is clear if $\WF\bigl(\HeavOp(g)\bigr) = \emptyset$. Otherwise, let $(x, \lambda) \in \WF\bigl(\HeavOp(g)\bigr)$. 

Assume first that $x \in \partial(\supp_+(g))^c$. Then either $x \in \supp_{-,0}(g)$ or $x \in \supp_+(\signal)^\circ$. Since both $\supp_{-,0}(g)$ and $\supp_+(g)^\circ$ are open sets, we have that there exists an open neighbourhood $U$ of $x$ such that $U \subset \supp_{-,0}(g)$ or $U \subset \supp_+(g)^\circ$. As a result $\HeavOp(g)$ is constant on $U$. Therefore, $(x,\lambda)$ cannot be in $\WF\bigl(\HeavOp(g)\bigr)$, which produces a contradiction. 

Hence, we can assume that $(x, \lambda) \in \WF\bigl(\HeavOp(g)\bigr)$ and $x \in \partial(\supp_+(g))$. In addition, we assume that $x \not \in C_g \cup S_g$. Then, $x \not \in  \singsupp(g)$. Therefore, there exists a neighbourhood $U'$ of $x$, on which $g$ is smooth and $\nabla g$ does not vanish. 

We wish to show now that on $U'$ the set $\{g = 0\}$ is a smooth curve with normal $\nabla_x g$ at $x$. For this, we invoke a \emph{smooth version of the implicit function theorem} \cite[Theorem 2.1]{lang2012real}. In this form, the theorem considers a smooth function $\tilde{g}\colon \Omega \to \Real$ such that 
\[
    0 = \tilde{g}(x_1^*,x_2^*), \quad\text{for $(x_1^*, x_2^*) \in \Omega$.}
\]
Assuming that $\frac{\partial \tilde{g}}{\partial x_2} \neq 0$, then there exists a smooth $\kappa$ defined on a neighbourhood of $x_1^*$ such that locally, i.e., for $x_1$ in an open neighbourhood of $x_1^*$,
\[
\tilde{g}(x_1,\kappa(x_1)) = 0
\quad\text{and}\quad
\kappa'(x_1) = 
\frac{\partial \tilde{g}}{\partial x_1}(x_1)
\bigg/
\frac{\partial \tilde{g}}{\partial x_2}(x_1).
\]
Moreover, in an open neighbourhood of $x_1^*,x_2^*$ every $(x_1,x_2)$ such that $\tilde{g}(x_1,x_2) = 0$ is of the form $(x_1,\kappa(x_1))$. 
Applying the implicit function theorem to $g$ if $\frac{\partial {g}}{\partial x_2} \neq 0$ yields that $\eta_x = \nabla g(x) /\|\nabla g\|$ is a normal at the zero level set of $g$ at $x$. By swapping variables, the same argument can be made if $\frac{\partial g}{\partial x_1} \neq 0$. We obtain that locally $\HeavOp(g) = \mathds{1}_{B}$ with $\partial B$ being a smooth curve that has normal $\eta_x$ at $x$. By Proposition~\ref{prop:WavefrontSetOfPiecewiseConstants}, this implies that $(x,\lambda) \in \WF(\mathds{1}_{B})$ if $\lambda = \pm \eta_x$ for an $\alpha \neq 0$. This concludes the proof of the "only if" part. 

For the "if" part, we notice again that if $x \in R_g$, then $x \in \partial(\supp_+(g))$. Thus, $x \not \in S_g$ which implies that $x \not \in \singsupp(g)$. Therefore, and since $x \not \in C_g$, the implicit function theorem is applicable. The same argument as before yields \eqref{eq:theconditionOntheLevelSets}.
\end{proof}

\begin{remark}
We may ask ourselves whether or not the statement in Proposition~\ref{prop:WavefrontSetViaImplicitFunctionTheorem} is tight. To improve our intuition in this regard, we provide an example for each of the cases of  Proposition~\ref{prop:WavefrontSetViaImplicitFunctionTheorem} that may lead to a wavefront set.
\begin{enumerate}
    \item \emph{Creation of wavefront set according to \eqref{eq:theconditionOntheLevelSets}:} 
    
    Let $g(x) = 1-\|x\|^2$. The squared Euclidean norm is a smooth function. We easily observe that 
    \[
    \{g = 0 \} = \{x \colon \|x\| = 1\}
    \]
    is the unit circle. Moreover, $\HeavOp(g) = \mathds{1}_{B_1}$ is the indicator of the unit ball. It is not hard to see that the wavefront set of this function is $\{(x,x) \colon x \in \mathbb{S}^1\}$. Also 
    \[
    \nabla_x (1-\|x\|^2) = \binom{2 x_1}{2 x_2} = 2x.
    \]
    \item \emph{$x \in C_g$ and $x \in \singsupp \bigl(\HeavOp(g)\bigr)$:} 
    
    Let $g_1$ be a positive $\Cont^\infty$ function, supported on a set $D_1$ that contains $(0,0)$. Let $g_2$ be another such function, however, with $D_1\cap D_2 = \{(0,0)\}$. If $D_1\cup D_2$ is not an open neighbourhood of $(0,0)$, which is possible, then $H(g_1+g_2)$ is discontinuous at $(0,0)$ implying that $(0,0)$ is a singular point of $H(g_1+g_2)$. One concrete example, would be given by $D_1 = [-1,0]^2$, $D_2 = [0,1]^2$. In this case, $\partial(\supp_+(g_1+  g_2))$ is not given by a single curve in the neighbourhood of $(0,0)$. Note that, necessarily by the smoothness of $g_1,g_2$ it holds that $\nabla_x(g_1 + g_2) = 0$ for $x = (0,0)$.
    
    \item \emph{$x \in C_g$ and $x \not \in \singsupp \bigl(\HeavOp(g)\bigr)$:} 
        Let $g$ be a smooth compactly supported positive function. Then every $x \in \partial \supp(g)$ satisfies that $x \in C_g$ and $x \not \in \singsupp \bigl(\HeavOp(g)\bigr) = \singsupp (g)$.
        
    \item \emph{$x \in S_g$ and $x \in \singsupp \bigl(\HeavOp(g)\bigr)$:} 
    
    Let $g = \mathds{1}_{\Real^+ \times \Real}$, then $x \in S_g$ and $g = \HeavOp(g)$. 
    
    \item \emph{$x \in S_g$ and $x \not \in \singsupp \bigl(\HeavOp(g)\bigr)$:} 
    
    Let $\phi \colon \Real \to \Real$ be a $\Cont^\infty$ function with compact support on $\Real^+$. The function $g(x) = \phi(x_1)  + \mathds{1}_{\Real^-}(x_1) x_1^3$ is not smooth since it has a jump in its third derivative at the $x_1 = 0$ axis. At the same time $\{x_1 = 0\} = \partial(\supp_+(g))$. Finally, we observe that $\HeavOp(g)(x) = \phi(x)$ and hence the wavefront set of $\HeavOp(g)$ is empty.
        
\end{enumerate}
\end{remark}

Notice that Proposition~\ref{prop:WavefrontSetViaImplicitFunctionTheorem} stays short of a precise characterisation of the wavefront set of $\HeavOp(\signal)$. It implies that all singularities must be in one of the sets $R_g, C_g$, or $S_g$. However, there is a closed-form of the orientations of the singularities only if $x\in R_g$.

\subsubsection{Wavefront set of \texorpdfstring{$\ReLU(\signal)= \HeavOp(\signal)\signal$}{Reluf}} 
\label{subsec:ResNetMicroLocalS2S2}

In this subsection, we chose a fixed $\kappa >0$ and $\phi_\kappa \in \SchwartzFunc(\Real^2)$ that integrates to 1, is positive, and is supported on a compact subset of $B_\kappa(0)$. To reduce the computation of $\WF(\ReLU_{\kappa, \phi_\kappa}(\signal))$ to that of $\WF(\HeavOp(\signal))$, we will make use of the following version of the product theorem:

\begin{theorem}[{\cite[Theorem 13]{brouder2014smoothwf}}]
\label{thm:S2P4}
Let $u$ and $v$ be distributions in $\SchwartzFunc'(U)$ for an open domain $U$. Assume that for no point $(x,\lambda)$ in $\WF(u)$ we have $(x,-\lambda) \in \WF(v)$. Then, $uv\in \SchwartzFunc'(U)$ and 
\[
\WF(uv) = S_+ \cup S _u \cup S_v,
\]
where $S_+ \coloneqq \{(x, \lambda +  \mu  ) \colon (x,\lambda) \in \WF(u), (x,\mu) \in \WF(v) \}$, $S_u \coloneqq \{(x, \lambda ) \colon (x,\lambda) \in \WF(u), x \in \esssuppp(v)\}$, and $S_v \coloneqq \{(x, \lambda ) \colon (x,\lambda) \in \WF(v), x \in \esssuppp(u)\}$.

In particular, for $\data\in\SchwartzFunc'(\Real^2)$ and $f\in \Cont^\infty(\Real^2)$ where $\supp(\signal)$ is compact, we have that $\WF(fg) \subset \WF(\data)\cap (\supp(\signal)\times \Real^2)$.
\end{theorem}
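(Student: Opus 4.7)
The plan is to follow the classical Hörmander approach to multiplying distributions, adapted to recover both inclusions for $\WF(uv)$. First, I would establish that $uv$ is a well-defined element of $\SchwartzFunc'(U)$ under the stated compatibility hypothesis. The standard technique is to realise the product as the pullback by the diagonal map $\Delta \colon U \to U \times U$ of the tensor product $u \otimes v$. The tensor product always exists, and its wavefront set is contained in $(\WF(u) \times \{0\}) \cup (\{0\} \times \WF(v)) \cup (\WF(u) \times \WF(v))$. The pullback by $\Delta$ is well-defined exactly when the conormal of the diagonal $\{(x, x; \lambda, -\lambda) : \lambda \in \Real^n \setminus 0\}$ meets none of these bad directions, which translates precisely to the assumption that no $(x,\lambda) \in \WF(u)$ satisfies $(x,-\lambda) \in \WF(v)$.

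For the wavefront set computation, I would localise in both position and frequency. Given $x_0 \in U$ and a direction $\xi_0$ not lying in $S_+ \cup S_u \cup S_v$, pick a compactly supported smooth cutoff $\psi$ with $\psi \equiv 1$ near $x_0$. Then a neighbourhood of $\xi_0$ can be analysed via the convolution identity
\[
\widehat{\psi^2 uv}(\eta) = (2\pi)^{-n} \bigl(\widehat{\psi u} * \widehat{\psi v}\bigr)(\eta),
\]
which reduces the task to estimating a convolution of two tempered functions with known conic decay properties inherited from the definition of the wavefront set.

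The crux is a careful splitting of the convolution integral. I would write $\eta = \eta' + (\eta - \eta')$ and decompose the $\eta'$-integration into four regions according to whether $\eta'$ lies in a conic neighbourhood of the "bad" cone of $u$ at $x_0$ and whether $\eta - \eta'$ lies in a "bad" cone of $v$ at $x_0$. In each region, either one factor has rapid polynomial decay while the other remains tempered, yielding arbitrary decay of the integral, or the slow-decay directions combine additively into a direction $\lambda + \mu$ with $(x_0,\lambda) \in \WF(u)$ and $(x_0,\mu) \in \WF(v)$; the hypothesis that $\xi_0 \notin S_+$ ensures such $\lambda + \mu$ stays away from $\xi_0$ in a cone. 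The main obstacle is obtaining these estimates \emph{uniformly} on a full conic neighbourhood of $\xi_0$, which requires a compactness argument on directions in the bad cones together with a quantitative use of the decay exponents. The essential-support refinement in $S_u$ and $S_v$ enters because, if $x_0 \notin \esssuppp(v)$, one may choose $\psi$ so that $\psi v = 0$ almost everywhere, eliminating the contribution from $u$-only singularities at $x_0$; the symmetric argument applies to $S_v$.

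The "in particular" consequence for $f \in \Cont^\infty(\Real^2)$ with compact support and $g \in \SchwartzFunc'(\Real^2)$ is then immediate: $\WF(f) = \emptyset$, so the compatibility hypothesis is vacuous, $S_+$ and $S_f$ are empty, and $S_g = \{(x,\lambda) \in \WF(g) : x \in \esssuppp(f)\} \subset \WF(g) \cap (\supp(f) \times \Real^2)$, giving the claimed inclusion $\WF(fg) \subset \WF(g) \cap (\supp(f) \times \Real^2)$.
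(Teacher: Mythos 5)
The paper does not prove this statement at all: it is quoted from \cite[Theorem~13]{brouder2014smoothwf} (and restated, with $\subset$ in place of $=$, as Theorem~\ref{thm:S1T5} in the appendix), so the only meaningful comparison is with the proof in that reference. Your strategy is exactly the classical H\"ormander argument used there: existence of $uv$ via the pullback of $u\otimes v$ along the diagonal, with the transversality condition on the conormal of the diagonal translating into the hypothesis on $\WF(u)$ and $\WF(v)$; and the wavefront bound via the identity $\widehat{\psi^2 uv}=(2\pi)^{-n}\widehat{\psi u}*\widehat{\psi v}$ together with a four-way conic splitting of the convolution integral. The uniformity-on-cones issue you flag is the genuine technical content, and your treatment of the essential-support refinement and of the ``in particular'' clause (where $\WF(f)=\emptyset$ forces $S_+=S_f=\emptyset$ and $S_g\subset\WF(g)\cap(\supp(f)\times\Real^2)$) is correct.

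There is, however, a gap between what you promise and what you deliver. You announce that the argument will ``recover both inclusions,'' but everything you describe establishes only the upper bound: showing that every $(x_0,\xi_0)\notin S_+\cup S_u\cup S_v$ is a regular directed point proves $\WF(uv)\subset S_+\cup S_u\cup S_v$ and nothing more. The reverse inclusion is never addressed, and it cannot be proved because the equality asserted in the statement is false: take $U=\Real$, $u(x)=x$ and $v=\delta_0$. The hypothesis holds vacuously since $\WF(u)=\emptyset$, and $\esssupp(u)=\Real$, so $S_v=\{0\}\times\{\pm1\}\neq\emptyset$, yet $uv=x\,\delta_0=0$ has empty wavefront set. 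The correct conclusion --- and the one actually proved in \cite{brouder2014smoothwf} and recorded in Theorem~\ref{thm:S1T5} --- is the inclusion $\WF(uv)\subset S_+\cup S_u\cup S_v$. So your proof is a sound proof of the inclusion, but you should either flag the equality in the statement as a misquotation or explicitly weaken your claim to $\subset$; as written, the opening sentence commits you to something the argument does not (and cannot) establish.
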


Let $\Omega \subset \Real^2$, $\signal \in \SchwartzFunc'(\Omega)$, and let us assume that $\WF(\signal)$ is known. In addition, using the results of Subsection~\ref{subsec:ResNetMicroLocalS2S1} we have also access to $\WF(\HeavOp(\signal))$. 
We denote as in Definition~\ref{def:ReLUonDistributions}:
\begin{align}
    X_h &\coloneqq \{x \in \Real^2 \setminus \supp_{\Lp^2}(h) \colon (x, \lambda) \in \WF(H(h)), (x, -\lambda) \in \WF(h) \text{ for a } \lambda \in \mathbb{S}^1\} + B_{\kappa}(0)
\end{align}
and 
\begin{align}
    X_h^{3\kappa} &\coloneqq \{x \in \Real^2 \setminus \supp_{\Lp^2}(h) \colon (x, \lambda) \in \WF(H(h)), (x, -\lambda) \in \WF(h) \text{ for a } \lambda \in \mathbb{S}^1\} + B_{3\kappa}(0).
\end{align}
Note that per Definition~\ref{def:ReLUonDistributions} we have that $\theta_\kappa = 0$ on $(X_h^{3\kappa})^c$ and hence $h^s = h$ on $(X_h^{3\kappa})^c$. 


Now we have collected all necessary ingredients to be able to compute the wavefront set of $\ReLU_{\kappa, \phi_\kappa}(\signal)=\HeavOp(\signal)f$, which is the goal of the following result.

\begin{theorem}
\label{thm:microcanonrelu}
Let $\Omega \subset \Real^2$ be open and let $f\in \SchwartzFunc'(\Omega)$. In addition, let
\begin{align}
\mathcal{A}_f&\coloneqq \WF(\signal)\cap (\interior{\supp_+(\signal)}\times \mathbb{S}^1),\\
\mathcal{R}_f &\coloneqq \{(x,\lambda)\in R_f\times \mathbb{S}^1: \text{$(x,\lambda)$ follows~\eqref{eq:theconditionOntheLevelSets}}\},
\end{align}
where $R_f$ is defined as in Proposition~\ref{prop:WavefrontSetViaImplicitFunctionTheorem}. In addition, $\mathcal{CS}_f$ is given by
\begin{align}
\mathcal{CS}_f \coloneqq& \{(x,\xi)\in (S_f \cup C_f)\times \mathbb{S}^1 \colon (x,\xi)\in \WF(\ReLU_{\kappa, \phi_\kappa}(\signal))\}, \label{eq:FormulaForSpecialWavefrontSetComponent}    
\end{align}
where $C_f$ and $S_f$ are defined as in Proposition~\ref{prop:WavefrontSetViaImplicitFunctionTheorem}.
Then $\WF(\ReLU_{\kappa, \phi_\kappa}(\signal))$ is given by:
\begin{align}\label{eq:PreciseDescriptionOfWavefrontSet}
    \WF(\ReLU_{\kappa, \phi_\kappa}(\signal)) \cap (X_h^{3\kappa} \times \mathbb{S}^1)^c &= (\mathcal{A}_f\cup \mathcal{R}_f\cup \mathcal{CS}_f) \cap (X_h^{3\kappa} \times \mathbb{S}^1)^c,\\
    \WF(\ReLU_{\kappa, \phi_\kappa}(\signal)) \cap (X_h^{3\kappa} \times \mathbb{S}^1) &\subset \left(\mathcal{A}_f\cup \mathcal{R}_f \cup \mathcal{CS}_f\right) \cap (X_h^{3\kappa} \times \mathbb{S}^1),\label{eq:PreciseDescriptionOfWavefrontSetThepartThatWeThrowAway}
\end{align}

In particular, we have that 
\begin{equation}\label{eq:forFastAlgorithm}
\WF(\ReLU_{\kappa, \phi_\kappa}(\signal))\subset\mathcal{A}_f\cup \mathcal{R}_f\cup \left((C_f\cup S_f)\times \mathbb{S}^1\right).
\end{equation}
\end{theorem}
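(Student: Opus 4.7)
The argument splits cleanly along the decomposition $\Omega = (X_h^{3\kappa})^c \sqcup X_h^{3\kappa}$, since by construction the smoothing factor $\theta_\kappa = \mathds{1}_{X_h} * \phi_\kappa$ is supported inside $X_h + B_\kappa \subset X_h^{3\kappa}$. Consequently, on $(X_h^{3\kappa})^c$ we have $\signal^s = (1-\theta_\kappa)\signal = \signal$, so $\ReLUOp_{\kappa,\phi_\kappa}(\signal) = \HeavOp(\signal)\,\signal$ there, and moreover $(X_h^{3\kappa})^c \subset X_h^c$ means the product hypothesis of Theorem~\ref{thm:S2P4} is satisfied in this region by the very definition of $X_h$. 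The plan is to obtain \eqref{eq:PreciseDescriptionOfWavefrontSet} as an equality on $(X_h^{3\kappa})^c$ via the product theorem, derive the inclusion \eqref{eq:PreciseDescriptionOfWavefrontSetThepartThatWeThrowAway} on $X_h^{3\kappa}$ via pseudolocality of the smoothing factor, and then read off \eqref{eq:forFastAlgorithm} from $\mathcal{CS}_f \subset (C_f\cup S_f)\times \mathbb{S}^1$.

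First I would apply Theorem~\ref{thm:S2P4} with $u = \HeavOp(\signal)$ and $v = \signal$ on $(X_h^{3\kappa})^c$, obtaining $\WF(\HeavOp(\signal)\signal) = S_+ \cup S_{\HeavOp(\signal)} \cup S_{\signal}$ restricted to this region. I would then partition the base points according to whether $x$ lies in $\interior{\supp_+(\signal)}$, in $R_f$, in $C_f \cup S_f$, or in $\supp_{-,0}(\signal)^\circ$ (which contributes nothing since $\HeavOp(\signal)=0$ there gives a constant and the product vanishes). On $\interior{\supp_+(\signal)}$, $\HeavOp(\signal)\equiv 1$ is smooth, so $S_+ = S_{\HeavOp(\signal)} = \emptyset$ locally and only $S_\signal$ contributes, yielding precisely $\mathcal{A}_f$. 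On $R_f$, Proposition~\ref{prop:WavefrontSetViaImplicitFunctionTheorem} gives an \emph{exact} description of $\WF(\HeavOp(\signal))$ via condition \eqref{eq:theconditionOntheLevelSets}; combining this with $R_f \cap \singsupp(\signal) = \emptyset$ (so $S_+$ and $S_\signal$ add nothing new above $R_f$) gives exactly $\mathcal{R}_f$. On $C_f \cup S_f$ neither $\WF(\HeavOp(\signal))$ nor $\WF(\signal)$ is explicitly known, so by its very definition \eqref{eq:FormulaForSpecialWavefrontSetComponent} the set $\mathcal{CS}_f$ is precisely what appears. This produces the equality \eqref{eq:PreciseDescriptionOfWavefrontSet}.

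For the inclusion \eqref{eq:PreciseDescriptionOfWavefrontSetThepartThatWeThrowAway} on $X_h^{3\kappa}$, I would use that $1-\theta_\kappa \in \Cont^\infty(\Omega)$ and so multiplication by it is pseudolocal; by Theorem~\ref{thm:S2P4} applied to the smooth function $1-\theta_\kappa$ and $\signal$, one gets $\WF(\signal^s) \subset \WF(\signal)$. The smoothing in $X_h^{3\kappa}$ also ensures the hypothesis of Theorem~\ref{thm:S2P4} is now satisfied everywhere for the product $\HeavOp(\signal)\,\signal^s$ (since points where $(x,\lambda)\in\WF(\HeavOp(\signal))$ and $(x,-\lambda)\in\WF(\signal)$ lie in a subset of $X_h$ where $\signal^s$ has been zeroed out or smoothed). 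Repeating the case analysis above, but now with $\signal$ possibly replaced by $\signal^s$ whose wavefront set is a subset of $\WF(\signal)$, yields only an inclusion, because $\signal^s$ may fail to recover all the singularities of $\signal$ inside $X_h^{3\kappa}$; this produces the right-hand side of \eqref{eq:PreciseDescriptionOfWavefrontSetThepartThatWeThrowAway}. Finally, \eqref{eq:forFastAlgorithm} follows by taking the union of the two regimes and using $\mathcal{CS}_f \subset (C_f \cup S_f)\times \mathbb{S}^1$.

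The main obstacle I anticipate is the bookkeeping between $\esssuppp$, $\supp_+$, $\supp_{\Lp^2}$, and the singular support around the boundary $\partial\supp_+(\signal)$: one must check that the $S_+$ term in the product theorem does not introduce unexpected orientations outside the union $\mathcal{A}_f \cup \mathcal{R}_f \cup \mathcal{CS}_f$. The cleanest way is probably to argue point-by-point that if $(x,\lambda+\mu) \in S_+$ then $x$ must lie in $\partial\supp_+(\signal) \cap \singsupp(\signal) \subset S_f$, so any such contribution is absorbed into $\mathcal{CS}_f$ by definition. A secondary subtlety is verifying, on $R_f$, that the sum $\lambda+\mu$ cannot vanish (so $S_+$ does not erase singularities there) — this is exactly what the exclusion condition in the definition of $X_h$ guarantees on $(X_h)^c$, and hence on $(X_h^{3\kappa})^c$.
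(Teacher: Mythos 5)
Your proposal is correct and follows essentially the same route as the paper's proof: the same spatial decomposition into $\interior{\supp_+(\signal)}$, $\supp_{-,0}(\signal)$, and $\partial(\supp_+(\signal)) = R_f \cup C_f \cup S_f$, combined with the split along $X_h^{3\kappa}$ versus its complement, with Theorem~\ref{thm:S2P4} and Proposition~\ref{prop:WavefrontSetViaImplicitFunctionTheorem} doing the same work in each case. The only difference is cosmetic (you order the two decompositions the other way around), and your closing remarks on the $S_+$ term and on $R_f\cap\singsupp(\signal)=\emptyset$ address exactly the points the paper's argument relies on.
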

\begin{proof}
Since $\Real^2$ can be decomposed as
\[
\Real^2 = \interior{\supp_+(\signal)}\cup\partial(\supp_+(\signal))\cup\supp_{-,0}(\signal),
\]
we have that $\WF(\ReLU_{\kappa, \phi_\kappa}(\signal))$ can be decomposed as
\begin{align}\label{eq:TheDecompositionOfTheWavefrontSetForMainProp}
\WF(\ReLU_{\kappa, \phi_\kappa}(\signal))= \mathrm{A}_{\signal, \kappa}\cup\mathrm{B}_{\signal, \kappa}\cup\mathrm{D}_{\signal, \kappa},
\end{align}
where 
\begin{align*}
\mathrm{A}_{\signal, \kappa} &\coloneqq \WF(\ReLU_{\kappa, \phi_\kappa}(\signal))\cap (\interior{\supp_+(\signal)}\times\mathbb{S}^1),\\
\mathrm{B}_{\signal, \kappa} &\coloneqq \WF(\ReLU_{\kappa, \phi_\kappa}(\signal))\cap(\supp_{-,0}(\signal)\times\mathbb{S}^1),\\
\mathrm{D}_{\signal, \kappa} &\coloneqq \WF(\ReLU_{\kappa, \phi_\kappa}(\signal))\cap(\partial(\supp_+(\signal))\times\mathbb{S}^1).
\end{align*}
Notice in addition that $\mathrm{A}_{\signal, \kappa}$, $\mathrm{B}_{\signal, \kappa}$ and $\mathrm{D}_{\signal, \kappa}$ are disjoint. Now, since $\interior{\supp_+(\signal)}$ is open, we find for every $x\in\interior{\supp_+(\signal)}$ with $x \not \in X_h^{3\kappa}$ an open neighbourhood $U$ of $x$ such that $\ReLU_{\kappa, \phi_\kappa}(\signal)|_U=\HeavOp(\signal)|_U \signal^s|_U= \signal|_U$, since $\HeavOp(\signal)(x)=1$ for every $x\in U$ and $h^s = h$ on $(X_h^{3\kappa})^c$. Thus
\begin{align}
\mathrm{A}_{\signal, \kappa} \cap (X_h^{3\kappa} \times \mathbb{S}^1)^c= \WF(\signal)\cap (\interior{\supp_+(\signal)}\times\mathbb{S}^1) \cap (X_h^{3\kappa} \times \mathbb{S}^1)^c = \mathcal{A}_{\signal} \cap (X_h^{3\kappa} \times \mathbb{S}^1)^c, \label{eq:TheEstimateOfTheAfTerm1}
\end{align}
where $\mathcal{A}_f$ is as in the statement of the proposition. 
Moreover, for every $x\in\interior{\supp_+(\signal)}$ with $x \in X_h^{3\kappa}$ there is an open neighbourhood $U'$ of $x$ such that $\ReLU_{\kappa, \phi_\kappa}(\signal)|_{U'}=\HeavOp(\signal)|_{U'} \signal^s|_{U'} = \signal^s|_{U'}$. Therefore,
\begin{equation}
\label{eq:TheEstimateOfTheAfTerm2}
\begin{aligned}
    \mathrm{A}_{\signal, \kappa} \cap (X_h^{3\kappa} \times \mathbb{S}^1)
    &= \WF(\signal^s)\cap (\interior{\supp_+(\signal)}\times\mathbb{S}^1) \cap (X_h^{3\kappa} \times \mathbb{S}^1) 
    \\
    &\subset \WF(\signal)\cap (X_h^{3\kappa} \times \mathbb{S}^1) = \mathcal{A}_f \cap (X_h^{3\kappa} \times \mathbb{S}^1).
\end{aligned}
\end{equation}

On the other hand, since $\esssuppp(\HeavOp(\signal)) = \supp_+(\signal)$, by Theorem~\ref{thm:S2P4}, we can conclude that
\[
\supp_{-,0}(\signal)\subset (\singsupp(\HeavOp(\signal)\signal^s))^c= (\singsupp(\ReLU_{\kappa, \phi_\kappa}(\signal)))^c.
\]
Then, we have
\begin{align}
    \mathrm{B}_{\signal, \kappa}=\WF(\ReLU_{\kappa, \phi_\kappa}(\signal))\cap(\supp_{-,0}(\signal)\times\mathbb{S}^1) = \emptyset. \label{eq:TheBTermIsEmpty}
\end{align}
Let us now study the set $\mathrm{D}_{\signal, \kappa} \coloneqq \WF(\ReLU_{\kappa, \phi_\kappa}(\signal))\cap(\partial(\supp_+(\signal))\times\mathbb{S}^1)$. Following the notation of Proposition~\ref{prop:WavefrontSetViaImplicitFunctionTheorem}, we can decompose the set $\partial(\supp_+(\signal))$ as

\begin{align}\label{eq:LetsCallThisThingTHEDECOMPOSITION}
\partial(\supp_+(\signal))=R_f\cup C_f\cup S_f.
\end{align}
Using this decomposition, we can write $\mathrm{D}_{\signal, \kappa}$ as

\[
\mathrm{D}_{\signal, \kappa} = \mathrm{R}_{\signal, \kappa}\cup \mathrm{CS}_{\signal, \kappa},
\]
where 
\begin{align*}
    \mathrm{R}_{\signal, \kappa} &\coloneqq \WF(\ReLU_{\kappa, \phi_\kappa}(\signal))\cap(R_f\times\mathbb{S}^1),\\
    \mathrm{CS}_{\signal, \kappa} &\coloneqq \WF(\ReLU_{\kappa, \phi_\kappa}(\signal))\cap((C_f\cup S_f)\times\mathbb{S}^1).
\end{align*}
Next, we would like to show that 
\begin{align}\label{eq:LetsShowThis}
    \mathrm{R}_{\signal, \kappa} \cap (X_h^{3\kappa} \times \mathbb{S}^1)^c  &= \{(x,\lambda)\in R_f\times \mathbb{S}^1 \colon (x,\lambda) \text{ follows~\eqref{eq:theconditionOntheLevelSets}}\} \cap (X_h^{3\kappa} \times \mathbb{S}^1)^c= \mathcal{R}_f \cap (X_h^{3\kappa} \times \mathbb{S}^1)^c, \\
    \mathrm{R}_{\signal, \kappa} \cap (X_h^{3\kappa} \times \mathbb{S}^1)  &\subset \{(x,\lambda)\in R_f\times \mathbb{S}^1 \colon (x,\lambda) \text{ follows~\eqref{eq:theconditionOntheLevelSets}}\}\cap (X_h^{3\kappa} \times \mathbb{S}^1) = \mathcal{R}_f \cap (X_h^{3\kappa} \times \mathbb{S}^1).\label{eq:LetsShowThis2}
\end{align}
Let us start with \eqref{eq:LetsShowThis}. Consider first $(x,\lambda) \in \mathcal{R}_f$, $x \not \in X_h^{3\kappa}$. Then, $x \in R_f$ and thus $x \not \in \singsupp f$. In particular, $x \not \in \singsupp \signal^s$. Moreover, since $\nabla_x f \neq 0$, we conclude that $x \in \esssuppp(\signal)$ and therefore $x \in \esssuppp(\signal^s)$. 
Using Theorem~\ref{thm:S2P4}, we conclude that $(x, \lambda) \in \WF(\ReLU_{\kappa, \phi_\kappa}(\signal)) = \WF(\signal^s \HeavOp(\signal))$ if and only if $(x, \lambda) \in \WF(\HeavOp(\signal))$. Since $x \not \in C_f\cup S_f$, we conclude from  Proposition~\ref{prop:WavefrontSetViaImplicitFunctionTheorem} that $(x,\lambda)$ satisfies \eqref{eq:theconditionOntheLevelSets}. To show the converse embedding, assume that $(x,\lambda)$ is such that $x \in R_f$ and $(x,\lambda)$ satisfies \eqref{eq:theconditionOntheLevelSets}. By Proposition~\ref{prop:WavefrontSetViaImplicitFunctionTheorem}, we have that $(x,\lambda) \in \WF(\HeavOp(\signal))$. Furthermore, $x \not \in \singsupp f$ and $\nabla_x f \neq 0$, which implies that $x \in \esssuppp f$. We conclude by Theorem~\ref{thm:S2P4} that $(x,\lambda) \in \WF(\signal^s \HeavOp(\signal)) = \WF(\ReLU_{\kappa, \phi_\kappa}(\signal))$. This shows \eqref{eq:LetsShowThis}.

To show \eqref{eq:LetsShowThis2} it suffices to observe that $x \in R_f$ implies that $x \not \in \singsupp f$ and hence $x \not \in \singsupp (1-\theta_\kappa) f$. Therefore, we conclude that 
\begin{equation*}
\mathrm{R}_{\signal, \kappa}\cap (X_h^{3\kappa} \times \mathbb{S}^1) \subset \WF(\HeavOp(\signal))\cap(R_f\times\mathbb{S}^1) \cap (X_h^{3\kappa} \times \mathbb{S}^1)
= \mathcal{R}_f \cap (X_h^{3\kappa} \times \mathbb{S}^1),
\end{equation*}
where the last equality follows from Proposition~\ref{prop:WavefrontSetViaImplicitFunctionTheorem}. This yields \eqref{eq:LetsShowThis2}. 

The full result now follows by considering the decomposition \eqref{eq:TheDecompositionOfTheWavefrontSetForMainProp}. The part associated with $\mathrm{A}_{\signal, \kappa}$ is estimated via \eqref{eq:TheEstimateOfTheAfTerm1} and \eqref{eq:TheEstimateOfTheAfTerm2}. The part associated with $\mathrm{B}_{\signal, \kappa}$ vanishes due to \eqref{eq:TheBTermIsEmpty}. Finally, the part associated with $\mathrm{D}_{\kappa, \phi_\kappa}$ is estimated via the decomposition \eqref{eq:LetsCallThisThingTHEDECOMPOSITION}, where the $\mathrm{R}_{f,\kappa}$ part is estimated via \eqref{eq:LetsShowThis} and \eqref{eq:LetsShowThis2} and $\mathcal{CS}_{f} = \mathrm{CS}_{\signal, \kappa}$ holds per definition. 
\end{proof}

\begin{remark}
Theorem~\ref{thm:microcanonrelu} only yields an estimate for the wavefront set associated with the set $(X^\kappa_h)^c$. In the sequel, since we are using the continuum relations to infer certain properties of digital relations, we will assume that $\kappa$ is chosen very small and $X^\kappa_h$ is not seen by the discretisation. In other words, in practice, we compute the wavefront set only via \eqref{eq:PreciseDescriptionOfWavefrontSet}.
\end{remark}

Even on $(X^\kappa_h)^c$, Theorem~\ref{thm:microcanonrelu} does not entirely save us from computing $\WF(\ReLU_{\kappa, \phi_\kappa}(\signal))$, but restricts the necessity for such a computation to the cases where $x \in  S_f \cup C_f$. The set 
$\mathcal{CS}_f$ can also be further split up: For $x \in \Real^2 \setminus X^\kappa_h$, we denote by $\WF(\signal)_x$ the $x$-slice of the wavefront set of $\signal$ defined as $\Lambda \subset \Real^2$ such that $(x,\lambda) \in \{x\}\times \Lambda$ for all $(x,\lambda) \in \WF(\signal)$.

\begin{proposition}\label{prop:computationofCf}
Let $\Omega \in \Real^2$ be open, $f\in \SchwartzFunc'(\Omega)$ be a distribution and $\mathcal{CS}_f$ be as in Theorem~\ref{thm:microcanonrelu}. Then
\begin{equation}\label{eq:decompositionofCS}
\begin{split}
\mathcal{CS}_f &\bigcap (X^\kappa_h \times \mathbb{S}^1)^c
= \Bigl\{ (x, \lambda) : 
         x\in C_g \setminus X^\kappa_h 
         \text{ and } 
         (x,\lambda) \in \WF\bigl(\HeavOp(\signal)\bigr) 
       \Bigr\} 
\\
&\quad \bigcup \Bigl\{ (x, \lambda) :
      x\in S_g \setminus X^\kappa_h, 
      \WF(\signal)_x \cap -\WF\bigl(\HeavOp(\signal)\bigr)_x = \emptyset, 
      \lambda \in (\WF(\signal)_x + \WF\bigl(\HeavOp(\signal)\bigr)_x) \setminus \{0\} 
    \Bigr\}
\\
&\quad \bigcup \Bigl\{ (x, \lambda) : 
      x\in S_g \setminus X^\kappa_h, 
      \WF(\signal)_x \cap -\WF\bigl(\HeavOp(\signal)\bigr)_x \neq \emptyset, 
      (x,\lambda) \in \WF\bigl(\ReLU_{\kappa, \phi_\kappa}(\signal)\bigr)
    \Bigr\}. 
\end{split}
\end{equation}
\end{proposition}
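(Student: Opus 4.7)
The plan is to exploit the fact that on $(X_h^\kappa \times \mathbb{S}^1)^c$ the distribution $\ReLU_{\kappa, \phi_\kappa}(f)$ coincides with the pointwise product $\HeavOp(f) \cdot f$, since $f^s = f$ off the support of $\theta_\kappa$ by construction. This reduces the wavefront set calculation to an application of the product theorem, Theorem~\ref{thm:S2P4}. I would then partition $(S_f \cup C_f) \setminus X_h^\kappa$ into the two disjoint pieces $C_f \setminus X_h^\kappa$ and $S_f \setminus X_h^\kappa$---disjoint because $C_f$ requires $x \notin \singsupp f$ whereas $S_f$ requires $x \in \singsupp f$---and treat each case separately.

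For $x \in C_f \setminus X_h^\kappa$, the key observation is that $f$ is $\Cont^\infty$ in a neighbourhood of $x$, so $\WF(f)_x = \emptyset$. In Theorem~\ref{thm:S2P4} applied with $u = \HeavOp(f)$ and $v = f$, the $S_+$ and $S_v$ contributions therefore vanish and only the $S_u$ piece can survive, which requires $x \in \esssuppp(f)$. This condition holds automatically: because $x \in \partial \supp_+(f)$, every neighbourhood of $x$ meets $\supp_+(f)$ on a set of positive Lebesgue measure, so $x \in \esssuppp(f)$. Hence $(x,\lambda) \in \WF\bigl(\ReLU_{\kappa,\phi_\kappa}(f)\bigr)$ if and only if $(x,\lambda) \in \WF\bigl(\HeavOp(f)\bigr)$, which is precisely the first summand of \eqref{eq:decompositionofCS}.

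For $x \in S_f \setminus X_h^\kappa$, I would split further according to whether $\WF(f)_x \cap -\WF\bigl(\HeavOp(f)\bigr)_x$ is empty. In the empty case the hypotheses of Theorem~\ref{thm:S2P4} hold in a sufficiently small neighbourhood of $x$, so the theorem gives $\WF\bigl(\HeavOp(f) \cdot f\bigr)_x \subset S_+ \cup S_u \cup S_v$ at $x$; using $x \in \esssuppp(f)$ as before and $x \in \esssuppp\bigl(\HeavOp(f)\bigr)$ (since the essential support of $\HeavOp(f)$ agrees with $\supp_+(f)$ up to a null set and contains its boundary point $x$), all three contributions assemble into the term $\WF(f)_x + \WF\bigl(\HeavOp(f)\bigr)_x$ of the second summand, under the convention that each wavefront set slice is implicitly augmented with the zero covector so that $S_u$ and $S_v$ are recovered from the Minkowski sum. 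In the non-empty case Theorem~\ref{thm:S2P4} does not yield a useful bound, so one retains the tautological description $(x,\lambda) \in \WF\bigl(\ReLU_{\kappa, \phi_\kappa}(f)\bigr)$, producing the third summand.

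The main subtlety will lie in reading the Minkowski-sum notation in \eqref{eq:decompositionofCS} as also encompassing the single-wavefront-set contributions $S_u$ and $S_v$ from Theorem~\ref{thm:S2P4}, and in verifying that the essential-support hypotheses of that theorem are automatic consequences of $x \in \partial \supp_+(f)$. Once these two points are settled, the proof is a direct case analysis driven by Theorem~\ref{thm:S2P4} together with the local description of $\WF\bigl(\HeavOp(f)\bigr)$ supplied by Proposition~\ref{prop:WavefrontSetViaImplicitFunctionTheorem}.
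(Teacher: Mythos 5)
Your proof is correct and takes essentially the same route as the paper, whose own proof is the single sentence that the result follows immediately from Theorem~\ref{thm:S2P4}; your case analysis over $C_f$ and $S_f$, the observation that $x \in \partial\supp_{+}(\signal)$ automatically yields the essential-support hypotheses, and the reading of the Minkowski sum as implicitly absorbing the $S_u$ and $S_v$ contributions are precisely the details that one-liner suppresses. The only caveat, inherited from the paper's own statement rather than introduced by you, is the mismatch between $X^\kappa_h$ in the proposition and $X^{3\kappa}_h$ in Theorem~\ref{thm:microcanonrelu}: the identity $\signal^s=\signal$ is only guaranteed off $X^{3\kappa}_h$, not off $X^{\kappa}_h$, so your opening reduction to the product $\HeavOp(\signal)\,\signal$ should be stated on $(X^{3\kappa}_h\times\mathbb{S}^1)^c$.
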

\begin{proof}
 The result follows immediately from Theorem~\ref{thm:S2P4}.
\end{proof}

Theorem~\ref{thm:microcanonrelu} yields two ways to estimate the wavefront set of $\ReLU_{\kappa, \phi_\kappa}(\signal)$ on $(X^\kappa_h)^c$. First, it can be precisely computed by \eqref{eq:PreciseDescriptionOfWavefrontSet}. This, however, may require us to compute $\mathcal{CS}_f$ via Proposition~\ref{prop:computationofCf}. This computation could be performed according to \eqref{eq:decompositionofCS}, by using a method such as DeNSE to find $\WF(\ReLU_{\kappa, \phi_\kappa}(\signal))$ if required.

Second, the wavefront set on $(X^\kappa_h)^c$ can be estimated using \eqref{eq:forFastAlgorithm}. Since we expect that it is not problematic to overestimate the wavefront set slightly, we opt for the second option and cast this wavefront set extraction algorithm as Algorithm~\ref{alg:2}.

\begin{algorithm}[htb]
\SetAlgoLined
\KwIn{Distribution $\signal \in \SchwartzFunc'(\Omega)$, $\WF(\signal)$, $x \in \Omega$.}
\KwOut{Estimate $\WF(\ReLU_{\kappa, \phi_\kappa}(\signal))_x \subset \Omega$.}
 initialisation\;
 \If{$x \in\interior{\supp_+(\signal)}$}{
   \textbf{return}  $\Lambda_x = \WF(\signal)_x$\;
   }
  \If{$x \in R_f$}{
   \textbf{return}  $\Lambda_x = \{  \pm \nabla_x(\signal)/\|\nabla_x(\signal)\|\}$\;
   }
  \If{$x \in C_f\cup S_f$}{
   \textbf{return}  $\Lambda_x = \Omega$\;
   }  
 \caption{Wavefront set classifier of $\ReLU_{\kappa, \phi_\kappa}(\signal)$.}
 \label{alg:2}
\end{algorithm}

\subsubsection{Microlocal analysis of the residual layer and sum-taking}
\label{subsec:ResNetMicroLocalS3}

In the continuous setting, residual neural networks are operators $\mathcal{P} \colon \SchwartzFunc'(\Omega)\to \SchwartzFunc'(\Omega)$ of the form
\[
    \mathcal{P}(\signal)=f+\mathcal{C}(\signal),
\]
where $\mathcal{C} \colon \SchwartzFunc'(\Omega)\to \SchwartzFunc'(\Omega)$ is 
continuum two-dimensional convolutional ResNet according to Definition~\ref{def:ResNetContinuous}. In addition, in Subsection~\ref{subsec:thecontinuumOperator}, we allow summing over the channels of a convolutional block. 

Because of this, we would like to identify $\WF(f+g)$ for two distributions of which we know the wavefront set. The following two results yield a complete description thereof.

\begin{theorem}[{\cite[Page 93]{reed1975methods}}]
\label{thm:regularsum}
Let $\Omega \subset \Real^2$ be open, let $f,g\in \SchwartzFunc'(\Omega)$ and $(x;\lambda)$ be a regular directed point of $\signal$ and $g$, then $(x;\lambda)$ is a regular directed point of $f+g$. 

In particular, if $(x,\lambda) \in \WF(\signal)$ and $(x;\lambda)$ is a regular directed point of $g$, then $(x,\lambda) \in \WF(f+g)$.
\end{theorem}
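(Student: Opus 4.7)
The plan is to prove the statement directly from Definition~\ref{def:S1D4} of a regular directed point, using the linearity of the Fourier transform. The content is that being a regular directed point is a property closed under addition, which is essentially a matter of reconciling the "data" (neighbourhood, conical neighbourhood, cutoff) witnessing regularity for $f$ and $g$ separately.

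First, I would fix $N \in \Natural$ and unpack regularity at $(x;\lambda)$ for each of $f$ and $g$ via Definition~\ref{def:S1D4}: there exist open neighbourhoods $U_x^f, U_x^g$ of $x$, conical neighbourhoods $V_\lambda^f, V_\lambda^g$ of $\lambda$, cutoffs $\psi_f, \psi_g \in \SchwartzFunc(\Real^2)$ with $\psi_f(x) = \psi_g(x) = 1$, $\supp \psi_f \subset U_x^f$, $\supp \psi_g \subset U_x^g$, and constants $C_N^f, C_N^g > 0$ realising the stated Fourier decay. The first step is then to harmonise these. I would choose a single open neighbourhood $U_x \subset U_x^f \cap U_x^g$ of $x$, a conical neighbourhood $V_\lambda \subset V_\lambda^f \cap V_\lambda^g$, and a single cutoff $\psi \in \SchwartzFunc(\Real^2)$ with $\supp \psi \subset U_x$ and $\psi(x) = 1$.

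Next, I would exploit the fact that $\psi(f+g) = \psi f + \psi g$ as tempered distributions, so by linearity of the Fourier transform $\widehat{\psi(f+g)} = \widehat{\psi f} + \widehat{\psi g}$. Since $\supp \psi \subset U_x^f$ and $\psi = \psi_f \psi$ on $U_x^f$ (replacing $\psi_f$ by $\psi$ works provided one argues that the decay estimate transfers to any smaller cutoff; this is standard and follows either by writing $\psi = \psi \psi_f$ and noting that multiplication by a Schwartz function preserves the rapid-decay estimate on the conical neighbourhood, or directly by rerunning the definition with $\psi$). Applying the triangle inequality along any $\eta$ with $\eta/|\eta| \in V_\lambda$,
\begin{equation*}
\bigl|\widehat{\psi(f+g)}(\eta)\bigr|
\le \bigl|\widehat{\psi f}(\eta)\bigr| + \bigl|\widehat{\psi g}(\eta)\bigr|
\le (C_N^f + C_N^g)(1+|\eta|)^{-N}.
\end{equation*}
Setting $C_N \coloneqq C_N^f + C_N^g$ verifies Definition~\ref{def:S1D4} for $f+g$, proving $(x;\lambda)$ is a regular directed point of $f+g$.

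For the "in particular" statement, I would argue by contradiction. Suppose $(x,\lambda) \in \WF(f)$, that $(x;\lambda)$ is regular for $g$, but $(x,\lambda) \notin \WF(f+g)$, i.e., $(x;\lambda)$ is regular for $f+g$. Since regularity is preserved under scalar multiplication by $-1$ (trivially, as $|\widehat{\psi(-g)}| = |\widehat{\psi g}|$), $(x;\lambda)$ is regular for $-g$. Writing $f = (f+g) + (-g)$ and applying the first part of the theorem to this sum yields that $(x;\lambda)$ is a regular directed point of $f$, contradicting $(x,\lambda) \in \WF(f)$.

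The only mild subtlety, and the step I would want to be careful about, is the cutoff reconciliation: one must justify that shrinking the cutoff from $\psi_f$ to a smaller $\psi$ preserves the polynomial decay estimate in the given conical direction. This is standard (use that $\psi/\psi_f$ is well-defined and smooth only near $x$; the cleaner route is to observe that $\widehat{\psi f} = \widehat{\psi} * \widehat{\psi_f f}$ and that convolution with a Schwartz function preserves rapid decay in any fixed closed cone slightly smaller than $V_\lambda^f$, or equivalently to invoke that the definition of regular directed point is independent of the specific cutoff, as established in \cite{AnLinPDOHoermander}). Everything else is just linearity of the Fourier transform and the triangle inequality.
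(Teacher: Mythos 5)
Your proposal is correct. Note that the paper does not actually prove this statement---it is quoted as a known result with a citation to Reed and Simon---so there is no internal proof to compare against; your argument is the standard one and is essentially what the cited source does: localise with a common cutoff, use linearity of the Fourier transform and the triangle inequality for the direct implication, and obtain the ``in particular'' clause by writing $f=(f+g)+(-g)$ and applying the first part. The one step you rightly flag as needing care---that the rapid-decay estimate survives replacing the cutoff $\psi_f$ by a smaller $\psi$ supported where $\psi_f\neq 0$, possibly at the cost of shrinking the cone to a closed subcone---is handled exactly as you indicate, via $\widehat{\psi f}=(2\pi)^{-n}\,\widehat{\psi/\psi_f}\ast\widehat{\psi_f f}$ together with the a priori polynomial bound on $\widehat{\psi_f f}$ (valid since $\psi_f f$ has compact support); with that lemma in hand the rest is immediate.
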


\begin{corollary}
\label{cor:sumwfset}
Let $\Omega \subset \Real^2$ be open and let $f,g\in \SchwartzFunc'(\Omega)$, then $\WF(f+g)$ is given by
\begin{equation}
\label{eq:microcanonresidual}
\WF(f + g) = \mathcal{A}_f\cup \mathcal{A}_g\cup \mathcal{A}_{f+g},
\end{equation}
where 
\begin{alignat*}{2}
\mathcal{A}_f &\coloneqq \{(x;\lambda)\in \WF(\signal):x\not\in \WF(\data)\},
&\qquad
\mathcal{A}_{f+g} &\coloneqq ((\WF(\signal)\cap\WF(\data)))\cap \WF(f+g),
\\
\mathcal{A}_g &\coloneqq \{(x;\lambda)\in \WF(\data):x\not\in \WF(\signal)\}
 & &
\end{alignat*}
In particular, we have that 
\begin{align}\label{eq:superset}
\mathcal{A}_{f+g}\subset \WF(\signal)\cap\WF(\data).
\end{align}
\end{corollary}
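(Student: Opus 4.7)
The plan is to reduce everything to Theorem~\ref{thm:regularsum}, using it once for an upper bound on $\WF(f+g)$ and once (via its second assertion) for a lower bound. First I would establish the containment $\WF(f+g)\subset \WF(f)\cup\WF(g)$. This is the contrapositive of the first assertion of Theorem~\ref{thm:regularsum}: if $(x;\lambda)$ is a regular directed point of both $f$ and $g$, then it is regular for $f+g$, so any element of $\WF(f+g)$ must already appear in $\WF(f)$ or in $\WF(g)$.

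Next, to prove the identity \eqref{eq:microcanonresidual}, I would decompose $\WF(f)\cup\WF(g)$ according to the behaviour of the other summand at the base point. If $(x;\lambda)\in\WF(f)$ and $x\notin\singsupp(g)$, then $g$ is smooth on an open neighbourhood of $x$, so in particular $(x;\lambda)$ is a regular directed point of $g$; the second assertion of Theorem~\ref{thm:regularsum} then forces $(x;\lambda)\in\WF(f+g)$, giving $\mathcal{A}_f\subset\WF(f+g)$. The symmetric argument yields $\mathcal{A}_g\subset\WF(f+g)$. The only remaining candidates are pairs lying in $\WF(f)\cap\WF(g)$, where genuine cancellation between $f$ and $g$ can occur; these are collected by definition into $\mathcal{A}_{f+g}$. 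Combining the three inclusions with the upper bound from Step~1 and partitioning accordingly yields \eqref{eq:microcanonresidual}. The containment \eqref{eq:superset} is then immediate from the definition $\mathcal{A}_{f+g}=(\WF(f)\cap\WF(g))\cap\WF(f+g)$.

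The one place that requires some care is the bookkeeping in the intermediate zone where $x\in\singsupp(f)\cap\singsupp(g)$ but only one of the two wavefront sets contains the specific covector $\lambda$: such a $(x;\lambda)$ is again regular for one summand and singular for the other, so Theorem~\ref{thm:regularsum} places it in $\WF(f+g)$, and I would interpret "$x\notin\WF(g)$" in the definition of $\mathcal{A}_f$ (resp.\ $\mathcal{A}_g$) as the directed condition "$(x;\lambda)\notin\WF(g)$" so that the three sets $\mathcal{A}_f,\mathcal{A}_g,\mathcal{A}_{f+g}$ indeed exhaust $\WF(f+g)$. This is the only genuinely delicate step; everything else is a direct application of Theorem~\ref{thm:regularsum} combined with elementary set algebra.
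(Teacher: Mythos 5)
Your proof is correct and follows the same (and essentially the only) route as the paper, which disposes of the corollary by simply citing Theorem~\ref{thm:regularsum}; your two-sided application of that theorem --- the contrapositive of its first assertion for the inclusion $\WF(f+g)\subset\WF(f)\cup\WF(g)$, and its second assertion for $\mathcal{A}_f,\mathcal{A}_g\subset\WF(f+g)$ --- is exactly the intended argument. Your closing remark is also well taken: the condition ``$x\notin\WF(g)$'' in the definition of $\mathcal{A}_f$ must indeed be read as the directed condition $(x;\lambda)\notin\WF(g)$ rather than $x\notin\singsupp(g)$, since otherwise a pair $(x;\lambda)\in\WF(f)$ with $x\in\singsupp(g)$ but $(x;\lambda)\notin\WF(g)$ would belong to $\WF(f+g)$ by Theorem~\ref{thm:regularsum} yet to none of the three sets on the right-hand side of \eqref{eq:microcanonresidual}.
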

\begin{proof}
The result follows immediately from Theorem~\ref{thm:regularsum}.
\end{proof}

In a similar fashion as we did for the wavefront set of $\ReLU(\signal)$, using Corollary~\ref{cor:sumwfset} we can find two ways to extract the wavefront set of $f+g$ via Corollary~\ref{cor:sumwfset}. We cast the one that yields a superset of the wavefront set of $f+g$ via \eqref{eq:superset} as Algorithm~\ref{alg:4}.

\begin{algorithm}[H]
\SetAlgoLined
\KwIn{Distribution $f,g \in \SchwartzFunc'(\Omega)$, $\WF(\signal),\WF(\data)$, $x \in \Omega$.}
\KwOut{Estimate $\WF(f+g)_x \subset \Omega$.}
 initialisation\;
 \If{$x \in \WF(\signal)\cap \WF(\data)^c$}{
   \textbf{return}  $\Lambda_x = \WF(\signal)_x$\;
   }
  \If{$x \in \WF(\signal)^c\cap \WF(\data)$}{
   \textbf{return}   $\Lambda_x = \WF(\data)_x$\;
   }
    \If{$x \in \WF(\signal)\cap \WF(\data)$}{
   \textbf{return}   $\Lambda_x = \WF(\data)_x \cup \WF(\signal)_x$;
   }
 \caption{Wavefront set classifier of $f+g$.}
 \label{alg:4}
\end{algorithm}

\subsubsection{Microlocal analysis of continuous Learned Primal-Dual networks}
\label{subsec:ResNetMicroLocalS5}

Let us first notice that the continuum residual neural network operator as stated in Definition~\ref{def:ResNetContinuous} has four basic components, the differential layers, summation over channels, application of the ReLU, and the residual connection. 
We have seen in Subsection~\ref{subsec:ResNetMicroLocalS3} that the effect on the wavefront set through summation over channels and the residual connection can be described as the output of Algorithm~\ref{alg:4}. In addition, the effect of the differential layers is described by \eqref{eq:microcanonconv} and the wavefront set after an application of the ReLU can be found through an application of Algorithm~\ref{alg:2}. Overall, there is an algorithm that produces for every continuum convolutional ResNet and every input function of which the wavefront set is known, an estimate of the wavefront set of the output. 
The microlocal behaviour of the full continuum Learned Primal-Dual network (Algorithm~\ref{alg:contLearnedPrimalDual}) can now be found by iteratively applying the canonical relation for the continuum residual neural network operator as well as the canonical relation for the Radon transform \eqref{eq:microcanonradon} and the adjoint of its Fr\'echet derivative (back-projection) in \eqref{eq:microcanonbackproj}. 

\section{The joint reconstruction and wavefront set extraction algorithm}
\label{sec:DigitalMicrolocal}

In this section, we present our numerical algorithm for the digital reconstruction problem associated with the inverse tomography problem. We first introduce the algorithm in the framework of statistical learning theory as an empirical risk minimisation problem over a special set of deep neural networks with a specific loss term. Then, we evaluate our approach on a test set and compare its results with various benchmarks. 

\begin{figure}[htb]
\centering
\includegraphics[width=\linewidth]{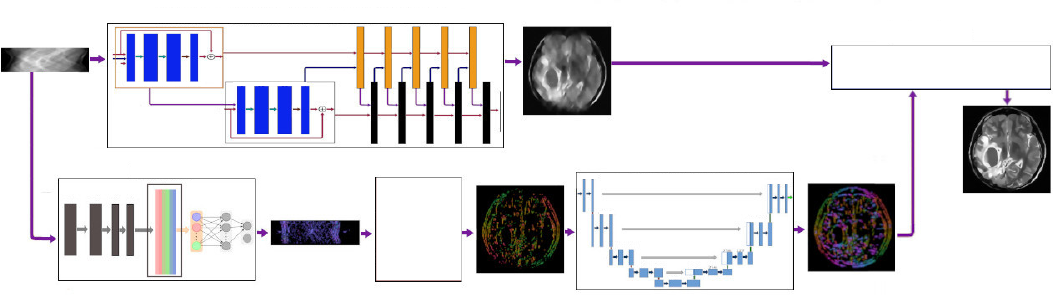}
\begin{tikzpicture}[remember picture,overlay]
  \node[anchor=north west, text width=0.35\linewidth, align=left, draw=none, fill=none] at (-6.3,5.15)
    {{\small Learned Primal-Dual (LPD) network}};
  \node[anchor=north west, text width=0.3\linewidth, align=left, draw=none, fill=none] at (-6.5,2.7)
    {{\small DeNSE}};
  \node[anchor=north west, text width=0.07\linewidth, align=left, draw=none, fill=none] at (-2.4,2.2)
    {{\scriptsize Canonical \par relation for LPD \par}};
  \node[anchor=north west, text width=0.3\linewidth, align=left, draw=none, fill=none] at (1,2.8)
    {{\small U-Net inpainting}};         
  \node[anchor=north west, text width=0.1\linewidth, align=left, draw=none, fill=none] at (5.25,4.75)
    {{\small Joint loss}};
  \node[anchor=north west, text width=0.1\linewidth, align=left, draw=none, fill=none] at (4.75,4.35)
    {{\scalebox{0.5}{%
         $\loss_{\mathrm{joint}}\bigl((\data_1,\dataother_1),(\data_1,\dataother_1)\bigr)$}
       \vskip-0.3\baselineskip
       \scalebox{0.5}{%
         \qquad$\coloneqq C \loss_{\mathrm{rec}}(\data_1,\data_2) 
         + (1-C)\loss_{\mathrm{imp}}(\dataother_1,\dataother_2)$}
     }};
\end{tikzpicture}
\caption{Outline of the joint reconstruction and wavefront set inpainting algorithm. The input is partial sinogram data. In the top row first a Learned Primal-Dual architecture is applied. In the bottom row we first apply DeNSE to extract the wavefront set, then we apply the canonical relation for the Learned Primal-Dual network of Subsection~\ref{subsec:ResNetMicroLocalS5}. To the output thereof we apply the U-Net for inpainting. This together with the output from the Learned Primal-Dual is then input into the joint loss function. }
\label{fig:task-adapt}
\end{figure}

\subsection{Outline of our algorithm}
\label{subsec:algorithm}

The setting for our data driven approach is to have a supervised tomographic training dataset of the form $(\vecsignal_i, \vecdata_i)_{i=1}^N \in \Real^n \times \Real^m$. 
Here, $\vecsignal_i \in \Real^n$ is an array that is a discretisation of a function $\signal \in \RecSpace = \Lp^2(\Real^2)$ that represents the true image (signal).
Likewise, $\vecdata_i \in \Real^m$ is an array corresponding to a discretisation of continuum data, which here is a function $\data_i \in \DataSpace = \Lp^2(\datadomain)$ representing a noisy limited-angle sinogram with $\datadomain$ denoting the corresponding manifold of lines.
In addition, we also know the complete wavefront set for data $\data_i$ and signal $\vecsignal_i$. Subsection~\ref{subapp:TaskAdaptedS1} explores this statistical framework approach in more detail.

From the above, in absence of observation noise we have $\data_i = \Radon(\signal_i)$ where $\Radon \colon \RecSpace \to \DataSpace$ is the Radon transform (forward operator) restricted to $\datadomain$ (limited-angle data).
The elements $(\signal_i, \data_i) \in \RecSpace \times \DataSpace$ are assumed to be independent samples generated by some $(\RecSpace \times \DataSpace)$-valued random variable $(\stsignal, \stdata)$.
The reconstruction operator $\RecOp_{\theta} \colon \Real^m \to \Real^n$ is now a DNN given by the Learned Primal-Dual network \cite{adler2018lpd} and its parameter $\theta=\hat{\theta}$ is set by training this DNN to achieve two tasks simultaneously:
\begin{enumerate}
\item Applying $\RecOp_{\hat{\theta}} \colon \Real^m \to \Real^n$ (trained reconstruction operator) to an input sinogram $\vecdata \in \Real^m$ that is a discretisation of $\data \in \DataSpace$ drawn from $\stdata$, should produce $\vecsignal \in \Real^n$ that is a discretisation of a function $\signal \in \RecSpace$ such that $\Radon(\signal) = \data$.
\item The trained reconstruction operator $\RecOp_{\hat{\theta}} \colon \Real^m \to \Real^n$ should output a discretisation of the wavefront set of $\signal$, which we denote by $\vecWF(\signal)$, i.e., $\vecWF\bigl(\RecOp_{\hat{\theta}}(\vecdata)\bigr) = \vecWF(\signal)$
\end{enumerate}
While we are, in the end, only interested in an accurate reconstruction of $\signal$ (see Subsection~\ref{sec:DigRec}), we believe that requiring a neural network to perform both tasks at the same time constitutes a strong prior. Indeed, we will see below that this joint approach yields vastly superior reconstruction results to a neural network that only reconstructs $\signal$. As a key ingredient towards the solution of the aforementioned task, we can rely on a digital wavefront set extraction operator DeNSE \cite{andrade2019wfset} which allows us to extract the digital wavefront set of a digital image, this task is briefly introduced in Subsection~\ref{sec:WFExtract}. Furthermore, it is reasonable to construct an appropriate loss function $\Loss$ that measures the discrepancy between $\Phi(\vecdata)$ and $\vecsignal$ and between $\vecWF(\Phi(\vecdata))$ and $\vecWF(\signal)$. 

However, it turns out that for the training of the neural network it is computationally prohibitive to compute $\vecWF(\Phi(\vecdata))$ in every training iteration with the shearlet-based wavefront set extractor DeNSE \cite{andrade2019wfset}. Because of this, we introduced a heuristic based on the mapping properties of a continuum operator, the continuum Learned Primal-Dual network, to estimate $\vecWF(\Phi(\vecdata))$ in Subsection~\ref{subsec:CanonRelCLPD}. 


Through the canonical relation for the continuous Learned Primal-Dual operator, we obtain an estimate of a discretisation of the visible wavefront set $\WFvis(\Phi(\data))$ based on the weights of the neural network $\Phi$. 
To relate this estimate to $\vecWF(\signal)$, we apply wavefront set inpainting with a neural network $\Psi$ of U-Net architecture \cite{ronnenberger2015unet}, this is explained in detail in Subsection~\ref{sec:WFInpaint}. Overall, this leads to an operator $\operator{Q}$ that takes as an input $\vecdata$ and the neural networks $\Phi$ and $\Psi$ and outputs an estimate of $\vecsignal$ and $\vecWF(\signal)$. 
We therefore use the loss function
\begin{align}\label{eq:lossFunction}
    \Loss(\Phi, \Psi, \vecdata, \vecsignal, \vecWF(\signal)) \coloneqq 
    C \loss_1(\vecsignal ,\Phi(\vecdata)) + (1-C) \loss_2(\vecWF(\signal), \Psi(\operator{Q}(\vecdata,\Phi))), 
\end{align} 
where $C\in (0,1]$ and $\loss_1,\loss_2$ are appropriate distance measures on discretised images and discretised wavefront sets that will be discussed in Subsection~\ref{sec:discretisation} below. 
This strategy that jointly trains a reconstruction and a task (wavefront set inpainting) falls in the framework of \emph{task-adapted reconstruction} was introduced in~\cite{adler2018taskadapt}. 
Based on the loss function of \eqref{eq:lossFunction}, we now train the neural networks $\Phi$ and $\Psi$ to minimise the objective 
\begin{align}
    \frac{1}{N}\sum_{i=1}^N \Loss(\Phi, \Psi, \vecdata_i, \vecsignal_i, \vecWF(\signal_i)).
\end{align}
To find a minimiser, we use stochastic gradient descent-based optimisation over the neural network's weights as is standard in deep learning. Finally, Subsection~\ref{sec:jointreconinpaint} explores in detail the joint reconstruction in the digital setting.

\subsection{Statistical learning framework}
\label{subapp:TaskAdaptedS1}

We would like to frame the problem of reconstruction and joint wavefront set extraction as a statistical learning problem \cite{shalev2014understanding, Cucker02onthe}. For this, we first introduce some relevant notions:

\begin{definition}[Setting of statistical learning theory]
\label{def:statdec}
Let $\SampleSpace, \LabelSpace$ be two sets. We call $\SampleSpace$ the \emph{sample space}, and $\LabelSpace$ the \emph{label space}.  Further, let $\mathcal{D}$ be a distribution on $\SampleSpace \times \LabelSpace$, and let $\ell \colon \LabelSpace \times \LabelSpace \to \Real$ be a measurable function. We refer to $\ell$ as \emph{loss function}. Then the \emph{risk} $\mathrm{R}$ of a measurable function $h : X \to Y$ is defined as 
\[
	\mathrm{R}(h) \coloneqq \mathbb{E}_{(x,y) \sim \mathcal{D}}( \ell(h(x), y)). 
\]
For $m \in \Natural$, a set of samples $(x_i,y_i)_{i=1}^m \sim \mathcal{D}^m$ and for a hypothesis class $\mathcal{H}$, we define the \emph{empirical risk minimiser} $h^*$ as 
\[
	h^* \coloneqq \argmin_{h \in \mathcal{H}} \frac{1}{m} \sum_{i=1}^m  \ell(h(x_i), y_i)). 
\]
\end{definition}
Under certain assumptions on the complexity of the hypothesis set $\mathcal{H}$, it can be shown that for sufficiently large $m$, the function $h^*$ will also achieve a small risk \cite{shalev2014understanding, Cucker02onthe}. This means that the empirical risk minimiser $h^*$ resolves the unknown relation between input and output described by $\mathcal{D}$. Because of this, we treat the empirical risk minimisation problem in this work as the central problem of interest. 

Next we would like to phrase the problems of digital tomographic reconstruction, of digital visible wavefront set extraction, of full digital wavefront set extraction via inpainting, and of the joint reconstruction and digital wavefront set extraction and inpainting in the framework of statistical learning theory.

\subsection{Digital tomographic reconstruction}
\label{sec:DigRec}
A joint probability distribution on function spaces as described through the statistical formulation of the tomographic reconstruction problem in Definition~\ref{def:tomrecon} implies, via the discretisation procedure of Subsection~\ref{sec:discretisation}, a joint probability distribution $\mathcal{D}_{\mathrm{rec}}$ on discretised image-sinogram data. 
In the discrete problem, the sample space is $\Real^{I_s} \eqqcolon \SampleSpace$ for $I_s \subset [m_1] \times [m_2]$, where $[M] \coloneqq \{1, \dots, M\}$, with the label space being defined by $\Real^{n_1 \times n_2}  \eqqcolon \LabelSpace$. 
As loss function we choose $\ell_{\mathrm{rec}}$ to be the squared Euclidean distance between two elements in $\LabelSpace$, i.e., 
\begin{align}
 \ell_{\mathrm{rec}}(y_1,y_2) \coloneqq \sum_{i,j=1}^{n_1,n_2} |(y_1)_{i,j} - (y_2)_{i,j}|^2.
\end{align}
Moreover, we choose as a hypothesis set $\mathcal{H}_{\mathrm{rec}} \coloneqq (h_\vartheta)_{\vartheta \in \Theta_{\mathrm{res}}}$ the set of functions which perform the mapping $f_0 \mapsto f_I$ according to Algorithm~\ref{alg:contLearnedPrimalDual} parametrised via the weights of the discrete two-dimensional convolutional ResNets introduced in Definition~\ref{def:ResNetDiscrete}.

\subsection{Digital visible wavefront set extraction}
\label{sec:WFExtract}
Similarly to the previous Subsection~\ref{sec:DigRec}, the discretisation procedure of Subsection~\ref{sec:discretisation} implies a joint distribution between the digital wavefront set of a measured sinogram and the digital visible wavefront set of the image, precisely defined below: 

\begin{definition}
Let $I_s \subset [m_1] \times [m_2]$ be a digital grid in the sinogram domain and $I_d \subset [180]$. The \emph{digital visible wavefront set of the sinogram data} is given by $\SampleSpace_{\visible} \coloneqq \Real^{I_s} \times \{0, 1\}^{I_d}$, where $I_d$ represents the set of \emph{visible angles}. 
Moreover, let $J_s \subset [n_1] \times [n_2]$ be a digital grid in the image domain and $J_d \subset [180]$, determined through the canonical relation for the Radon transform. The \emph{digital visible wavefront set of the image data} is given by $\LabelSpace_{\visible} \coloneqq \Real^{J_s} \times \{0, 1\}^{J_d}$.
\end{definition}

The underlying loss function over $\LabelSpace_{\visible}$ is:
\[
\ell_{\visible}(y,y') \coloneqq 
  -\sum_{(i_1,i_2) \in J_s}\sum_{i_3 \in J_d} y_{i_1,i_2,i_3}\log \bigl( y_{i_1,i_2,i_3}' 
\bigr)
\quad\text{for $y, y' \in \LabelSpace_{\visible}$.}
\]
As a hypothesis set we choose $\mathcal{H}_{\visible} \coloneqq (\hat{h}_\vartheta)_{\vartheta \in \Theta_{\mathrm{res}}}$ as the set of maps resulting from the digital canonical relation described in Subsection~\ref{subsec:ResNetMicroLocalS5}. These maps are parametrised via the weights of the discrete two-dimensional convolutional ResNets introduced in Definition~\ref{def:ResNetDiscrete}. The important point to notice here is that \emph{$\mathcal{H}_{\visible}$ and $\mathcal{H}_{\mathrm{rec}}$ are parametrised by the same parameter set}. 

\subsection{Digital wavefront set extraction and inpainting}
\label{sec:WFInpaint}

To reconstruct the full digital wavefront set of an image from its sinogram data, we combine the visible wavefront set extraction of the previous subsection with a wavefront set inpainting step which will be performed using a U-Net.
In this case, the sample set is $\SampleSpace_{\mathrm{inp}} \coloneqq \Real^{I_s} \times \{0, 1\}^{I_d}$, where $I_s \subset [m_1] \times [m_2]$ and $I_d \subset [180]$, and label space has consequently to be chosen as $\LabelSpace_{\mathrm{inp}} \coloneqq \Real^{n_1 \times n_2} \times \{0, 1\}^{180}$. The loss function is then
\[
\ell_{\mathrm{inp}}(y,y') \coloneqq 
  -\sum_{i_1,i_2=1}^{n_1,n_2}\sum_{i_3 =1}^{180} y_{i_1,i_2,i_3}\log \bigl( y_{i_1,i_2,i_3}' 
\bigr)
\quad\text{ for } y, y' \in \LabelSpace_{\mathrm{inp}}. 
\]
As a hypothesis set $\mathcal{H}_{\mathrm{inp}}$, we use the set of maps resulting from the digital canonical relation described in Subsection~\ref{subsec:ResNetMicroLocalS5} composed with a U-Net. These maps are parametrised via the weights of the discrete two-dimensional convolutional ResNets introduced in Definition~\ref{def:ResNetDiscrete} as well as by the weights of the U-Net. Because of this, we can write $\mathcal{H}_{\mathrm{inp}} \coloneqq (\vecdualvar_{\vartheta, \theta})_{\vartheta \in \Theta_{\mathrm{res}}, \theta \in \Theta_{U}}$.

\subsection{Joint reconstruction and digital wavefront set extraction and inpainting}
\label{sec:jointreconinpaint}

We will now describe a joint approach in which the reconstruction problem as well as the wavefront set reconstruction and inpainting problem are solved simultaneously. This task certainly requires that the hypotheses classes underlying the risk minimisation problem are, at least partially, parametrised by the same parameter set. 

This joint approach leads to the sample space being chosen as $\SampleSpace_{\mathrm{joint}} \coloneqq \Real^{I_s} \times \Real^{I_s} \times \{0,1\}^{I_d} $ for $I_s \subset [m_1] \times [m_2]$ and $I_d \subset [180]$, and the label space being defined as $\LabelSpace_{\mathrm{inp}} \coloneqq \Real^{n_1 \times n_2} \times (\Real^{n_1 \times n_2} \times \{0, 1\}^{180})$. The loss function needs then to be chosen $\ell_{\mathrm{joint}}$ as a weighted average of $\ell_{\mathrm{rec}}$ and $\ell_{\mathrm{inp}}$ in the following sense: For a $\lambda \in (0,1]$ we set, for $g_1,g_2 \in \Real^{n_1\times n_2}, (h_1,h_2) \in \Real^{n_1\times n_2} \times \{0,1\}^{180}$
\begin{align}
\label{eq:jointloss}
	\ell_{\mathrm{joint}}((g_1,h_1), (g_2,h_2)) \coloneqq \lambda\ell_{\mathrm{rec}}(g_1,g_2) + (1-\lambda) \ell_{\mathrm{inp}}(h_1,h_2).
\end{align}
The task-adapted reconstruction architecture involving the joint loss function~\eqref{eq:jointloss} is depicted in Figure~\ref{fig:task-adapt}.

Finally, the hypothesis set for the joint approach is given by $\mathcal{H}_{\mathrm{joint}} \coloneqq ( (\dualvar_\vartheta, \vecdualvar_{\vartheta, \theta}))_{\vartheta \in \Theta_{\mathrm{res}}, \theta \in \Theta_{U}}$.

\section{Numerical results}
\label{sec:numresults}

We now provide numerical results for the joint reconstruction and wavefront set extraction algorithm of Section~\ref{sec:DigitalMicrolocal}. We first present the set-up, followed by a description of our results, and finally an interpretation of the outcome. 

\subsection{Set-up}
\label{subsubsec:setup}

The set-up follows our viewpoint taken in Subsection~\ref{sec:jointreconinpaint}, namely regarding the joint reconstruction problem from an empirical risk minimisation standpoint. Therefore, as a training set we use an artificial data set comprised of piecewise smooth functions, where the singularity curves are given by random splines of degree at most four and the smooth regions are polynomials of degree at most two. We call this data set the \emph{random cartoon-functions data set}. Some examples of functions in this data set were shown in Figure \ref{fig:dataSet}. Notice that for the images in this data set the digital wavefront set is known. We then study two types of partial data, namely  limited-angle data, where we assume that a wedge of $40^\circ$ is missing (a total angle interval of $80^\circ$), and sparse-view, where we only measure $40$ angles. This is followed by evaluating the trained task-adapted architecture (see Figure~\ref{fig:task-adapt}) on the OASIS dataset \cite{lamontagne2019oasis} formed by real brain scans (see \url{https://www.oasis-brains.org/}).

The results are then compared with classical approaches for the limited-angle and sparse-view tomography, including filtered back-projection \cite{krishnan2015microlocal}, Tikhonov \cite{tikhonov1995regul}, and total variation \cite{velikina2007TVCT}. In addition, we have also performed varitional regularisation using as regulariser the L2 and L1-norm of the shearlet coefficients of the reconstruction \cite{gitta2005shearlets, Kutyniok:2016:SFD:2888419.2740960}. We named these methods \emph{Shearlet-L2 sparse} and \emph{Shearlet-L1 sparse}. The deep-learning based benchmarks, namely the Learned Primal-Dual \cite{adler2018lpd} and the Phantom-Net architectures \cite{bubba2018learning}, were similarly trained using the random cartoon functions data set. The code to reproduce our experiments can be found in \url{http://shearlab.math.lmu.de/applications}.

\subsection{Results}

We now present some exemplary results for our algorithm in the case of limited-angle tomography ($80^\circ$ wedge) and sparse-view ($40$ measured angles). The visual results are depicted in Figures~\ref{fig:F10} and \ref{fig:F11}. The comparison with the benchmark approaches from Subsection~\ref{subsubsec:setup} is shown in Table~\ref{table:T3}.

In addition, we show in Figures~\ref{fig:F13} and \ref{fig:F12} reconstruction results of the limited-angle and sparse-view tomography for the benchmarks already mentioned in Subsection~\ref{subsubsec:setup}. In all cases, the joint approach outperforms all classical approaches. Table~\ref{table:T3} presents the performance measure of the benchmarks in terms of the average self similarity (SSIM) and peak signal-to-noise ratio (PSNR).

\begin{table}[H]

\flushleft
\hspace{1.4cm} \textbf{Sparse-view CT:} \hspace{4.7cm} \textbf{Limited-angle CT:}
 
 \

\centering

\begin{tabular}{l r r r r}
\textbf{Method} & \textbf{SSIM} & \textbf{PSNR}\\
\hline

FBP & 0.51 & 19.90 \\
\hline

Tikhonov & 0.73 & 24.77 \\
\hline

TV  & 0.88 & 26.59\\
\hline

Shearlet-L2 sparse  & 0.73 & 24.69\\
\hline

Shearlet-L1 sparse & 0.78 & 25.42\\
\hline

Learned Primal-Dual & 0.89 & 27.55\\
\hline

\emph{Joint approach}  & 0.92 & 28.46\\
\hline
\end{tabular}
\qquad \quad
\begin{tabular}{l r r r r}
\textbf{Method} & \textbf{SSIM} & \textbf{PSNR}\\
\hline

FBP & 0.44 & 14.53 \\
\hline

Tikhonov & 0.73 & 22.62 \\
\hline

TV  & 0.83 & 23.09\\
\hline

Shearlet-L2 sparse  & 0.70 & 22.20\\
\hline

Shearlet-L1 sparse & 0.76 & 22.29\\
\hline

PhantomNet & 0.87 & 25.50\\
\hline

Learned Primal-Dual & 0.86 & 25.55\\
\hline

\emph{Joint approach}  & 0.95 & 29.80\\
\hline
\end{tabular}
\caption{Realistic dataset performance for general benchmarks for sparse-view CT on the left and limited-angle CT on the right.}
\label{table:T3}
\end{table}

\begin{figure}[htb]
\centering
\begin{minipage}[t]{0.3\textwidth}
  \centering
  \vspace{0pt} 
  \includegraphics[width =\linewidth]{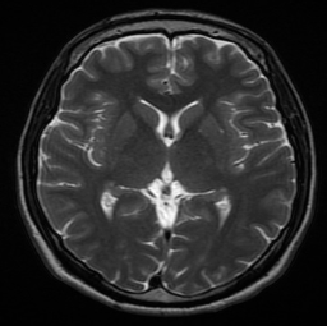}
  \\
  {\small Ground truth}
\end{minipage}
\hfill
\begin{minipage}[t]{0.3\textwidth}
  \vspace{0pt} 
  \includegraphics[width =\linewidth]{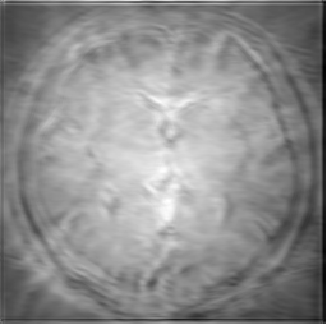}
  \\
  {\small Reconstruction using LPD without the wavefront set prior\\ (PSNR 24.90)}
\end{minipage}
\hfill
\begin{minipage}[t]{0.3\textwidth}
  \vspace{0pt} 
  \includegraphics[width =\linewidth]{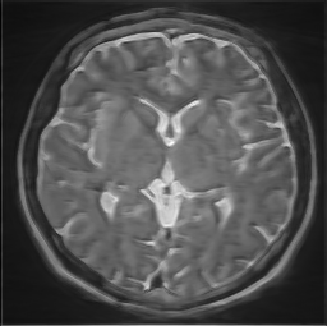}
  \\
  {\small Reconstruction using the joint approach introduced in this work \\ (PSNR 30.20)}
\end{minipage}  
\\[0.5em]
\begin{minipage}[t]{0.3\textwidth}
  \vspace{0pt} 
  \includegraphics[width =\linewidth]{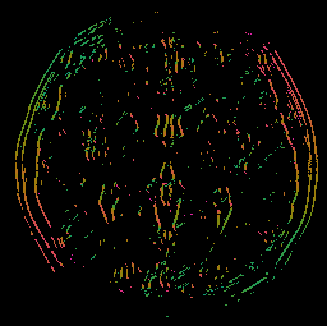}
  \\
  {\small Visible wavefront set of the ground truth extracted by DeNSE}
\end{minipage}
\hfill
\begin{minipage}[t]{0.3\textwidth}
  \vspace{0pt} 
  \includegraphics[width =\linewidth]{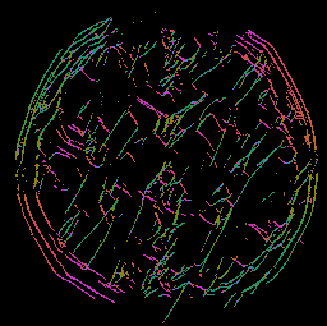}
  \\
  {\small Inpainted wavefront set using U-Net without simultaneous reconstruction}
\end{minipage}
\hfill
\begin{minipage}[t]{0.3\textwidth}
  \vspace{0pt} 
  \includegraphics[width =\linewidth]{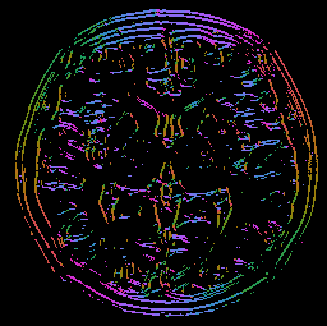}
  \\
  {\small Reconstructed wavefront set using our joint approach}
\end{minipage}  
\caption{Realistic data set results on the joint CT reconstruction and WFset inpainting for limited-angle case, missing wedge = $40^\circ$.}
\label{fig:F10}
\end{figure}

\begin{figure}[htb]
\begin{minipage}[t]{0.3\textwidth}
  \centering
  \vspace{0pt} 
  \includegraphics[width =\linewidth]{img/c8/RealData/brain-gt}
  \\
  {\small Ground truth}
\end{minipage}
\hfill
\begin{minipage}[t]{0.3\textwidth}
  \vspace{0pt} 
  \includegraphics[width =\linewidth]{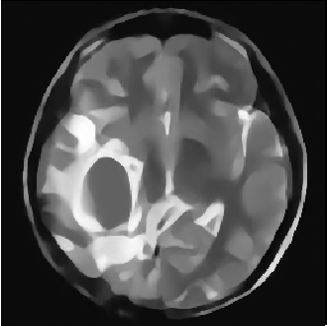}
  \\
  {\small Reconstruction using LPD without the wavefront set prior \\ (PSNR 26.91)}
\end{minipage}
\hfill
\begin{minipage}[t]{0.3\textwidth}
  \vspace{0pt} 
  \includegraphics[width =\linewidth]{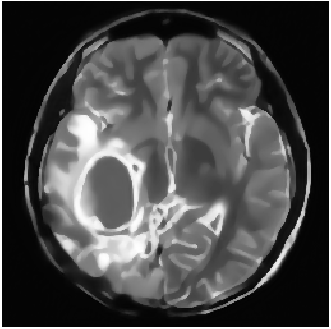}
  \\
  {\small Reconstruction using the joint approach introduced in this work\\ (PSNR 30.22)}
\end{minipage}
\\[0.5em]
\begin{minipage}[t]{0.3\textwidth}
  \vspace{0pt} 
  \includegraphics[width =\linewidth]{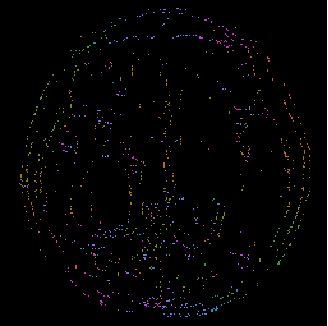}
  \\
  {\small Visible wavefront set of the ground truth extracted by DenSE}
\end{minipage}
\hfill
\begin{minipage}[t]{0.3\textwidth}
  \vspace{0pt} 
  \includegraphics[width =\linewidth]{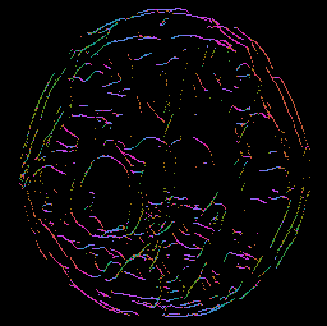}
  \\
  {\small Inpainted wavefront set using U-Net without simultaneous reconstruction}
\end{minipage}
\hfill
\begin{minipage}[t]{0.3\textwidth}
  \vspace{0pt} 
  \includegraphics[width =\linewidth]{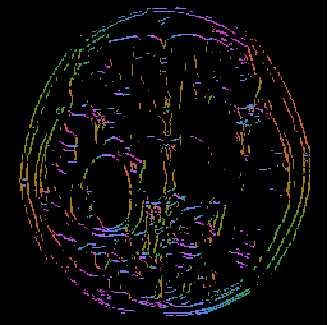}
  \\
  {\small Reconstructed wavefront set using our joint approach}
\end{minipage}
\caption{Realistic dataset results on the joint CT reconstruction and WFset inpainting for sparse-view case, number of angles = 40.}
\label{fig:F11}
\end{figure}

\clearpage

\subsection{Interpretation}

In Table \ref{table:T3}, we see that the joint approach significantly outperforms all competing approaches. The same observation can be made by observing Figures~\ref{fig:F12} and \ref{fig:F13} that demonstrate a strong improvement in the reconstruction accuracy in comparison with the other methods and in particular to the Learned Primal-Dual architecture. In this context, it is noteworthy that the performance gap in this application is much more pronounced than in the sparse-view application. Since the Learned Primal-Dual corresponds to one half of the pipeline of the joint approach, we observe that the additional wavefront set information is especially helpful in the limited-angle case. Since the gaps in the visible wavefront set are significantly smaller in the sparse-view set-up, we expect that, in this set-up, considerably more information on the wavefront set is already included in the observed data, whereas in the limited-angle set-up more prior knowledge needs to be invoked. It appears as if this necessary prior knowledge was very successfully identified in the training phase of the joint approach.

In Figures~\ref{fig:F10} and~\ref{fig:F11} we performed a type of ablation study by looking at the prowess of the inpainting algorithm alone. It transpires from those figures that the inpainting is less successful if it is not coupled with the reconstruction pipeline. This clearly shows that the strength of the joint approach does not solely lie in wavefront set identification and inpainting but in the interplay of inpainting and reconstruction jointly.

We wish to remark that \emph{our algorithm is not trained on real-world images} but instead on artificial phantoms from the random cartoon-functions data set. This shows that the joint prior that incorporates physical information is not based on memorisation. From the standpoint of applications, this could be a very desirable feature, as it prevents overfitting on specific biases in real-world data sets. 

In Figure~\ref{fig:F11}, we see that the learned primal dual, as well as the joint approach, appear to produce reconstructions that exhibit almost no texture-like areas but instead are piecewise smooth. This effect could potentially be an effect of the underlying data set. In this context, it may be worthwhile to expand the training data set by artificial images with more texture like features.


\section*{Acknowledgements}
GK acknowledges support from DFG-SFB/TR 109 Grant C09 and DFG-SPP 2298 Grant KU 1446/32-1. HA acknowledges support from DFG-SFB/TR 109 Grant C09.

\begin{figure}[H]
\begin{minipage}[t]{0.3\textwidth}
\centering
  \vspace{0pt} 
  \includegraphics[width =\linewidth]{img/c8/RealData/brain-gt}
  \\
  {\small Ground truth}
\end{minipage}
\hfill
\begin{minipage}[t]{0.3\textwidth}
\centering 
  \vspace{0pt} 
  \includegraphics[width =\linewidth]{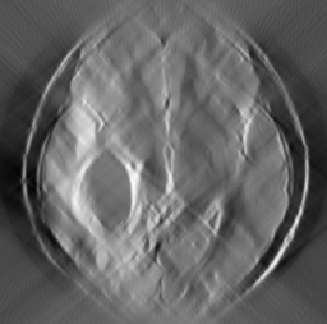}
  \\
  {\small Filtered Back-projection\\ (PSNR 14.40)}
\end{minipage}
\hfill
\begin{minipage}[t]{0.3\textwidth}
\centering 
  \vspace{0pt} 
  \includegraphics[width =\linewidth]{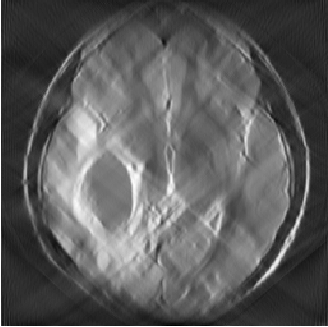}
  \\
  {\small Tikhonov regularisation\\ (PSNR 21.97)}
\end{minipage}
\\[0.5em]
\begin{minipage}[t]{0.3\textwidth}
\centering 
  \vspace{0pt} 
  \includegraphics[width =\linewidth]{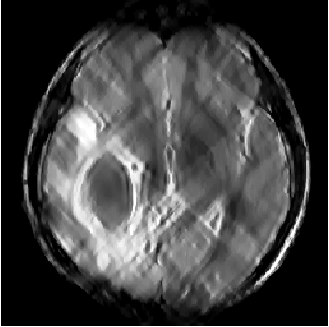}
  \\
  {\small Total variation regularisation\\ (PSNR 22.98)}
\end{minipage}
\hfill
\begin{minipage}[t]{0.3\textwidth}
\centering 
  \vspace{0pt} 
  \includegraphics[width =\linewidth]{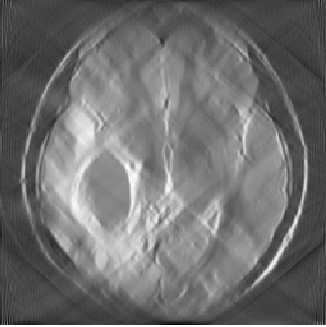}
  \\
  {\small Shearlet-L2 sparse regularisation\\ (PSNR 22.15)}
\end{minipage}
\hfill
\begin{minipage}[t]{0.3\textwidth}
\centering 
  \vspace{0pt} 
  \includegraphics[width =\linewidth]{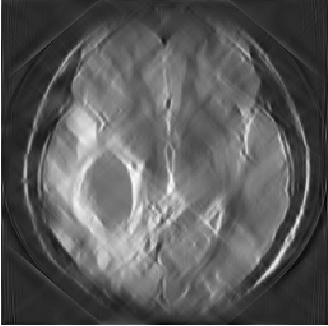}
  \\
  {\small Shearlet-L1 sparse regularisation\\ (PSNR 22.35)}
\end{minipage}
\\[0.5em]
\begin{minipage}[t]{0.3\textwidth}
\centering 
  \vspace{0pt} 
  \includegraphics[width =\linewidth]{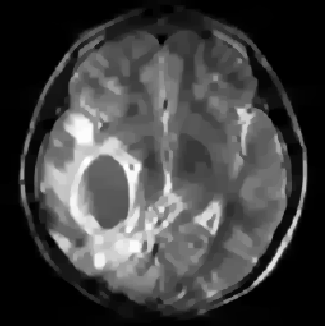}
  \\
  {\small PhantomNet\\ (PSNR 26.02)}
\end{minipage}
\hfill
\begin{minipage}[t]{0.3\textwidth}
\centering 
  \vspace{0pt} 
  \includegraphics[width =\linewidth]{img/c8/RealData/brain-recon-lpd_wedge40}
  \\
  {\small Learned primal-dual\\ (PSNR 25.02)}
\end{minipage}
\hfill
\begin{minipage}[t]{0.3\textwidth}
\centering 
  \vspace{0pt} 
  \includegraphics[width =\linewidth]{img/c8/RealData/brain-recon-joint_wedge40}
  \\
  {\small Joint approach\\ (PSNR 29.95)}
\end{minipage}
\caption{Realistic dataset results for general benchmarks for limited-angle CT, wedge = $40^\circ$.}
\label{fig:F13}
\end{figure}

\begin{figure}[H]
\centering
\begin{minipage}[t]{0.3\textwidth}
\centering 
  \vspace{0pt} 
  \includegraphics[width =\linewidth]{img/c8/RealData/brain-gt}
  \\
  {\small Ground truth}
\end{minipage}
\hfill
\begin{minipage}[t]{0.3\textwidth}
\centering
  \vspace{0pt} 
  \includegraphics[width =\linewidth]{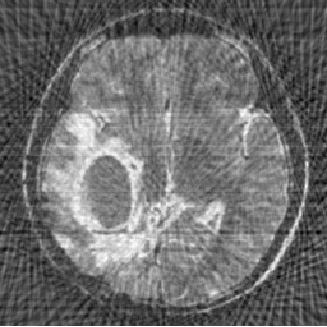}
  \\
  {\small Filtered back-projection\\ (PSNR 18.95)}
\end{minipage}
\hfill
\begin{minipage}[t]{0.3\textwidth}
\centering 
  \vspace{0pt} 
  \includegraphics[width =\linewidth]{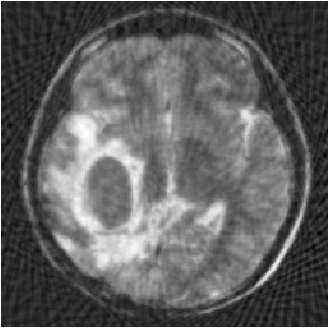}
  \\
  {\small Tikhonov regularisation\\ (PSNR 25.05)}
\end{minipage}
\\[0.5em]  
\begin{minipage}[t]{0.3\textwidth}
\centering 
  \vspace{0pt} 
  \includegraphics[width =\linewidth]{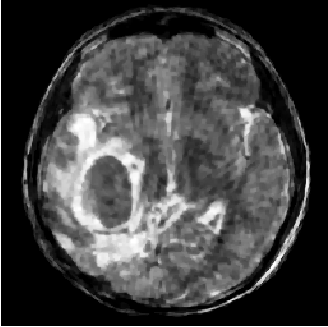}
  \\
  {\small Total variation regularisation\\ (PSNR 26.32)}
\end{minipage}
\hfill
\begin{minipage}[t]{0.3\textwidth}
\centering 
  \vspace{0pt} 
  \includegraphics[width =\linewidth]{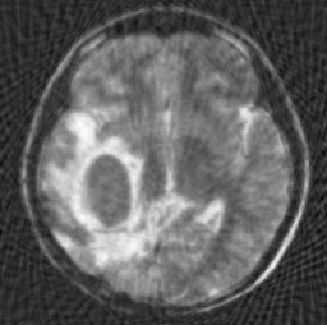}
  \\
  {\small Shearlet-L2 sparse regularisation\\ (PSNR 24.22)}
\end{minipage}
\hfill
\begin{minipage}[t]{0.3\textwidth}
\centering
  \vspace{0pt} 
  \includegraphics[width =\linewidth]{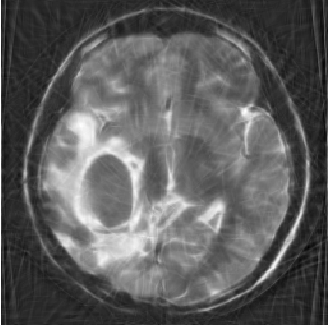}
  \\
  {\small Shearlet-L1 sparse regularisation\\ (PSNR 25.31)}
\end{minipage}
\\[0.5em]
\begin{minipage}[t]{0.3\textwidth}
\centering 
  \vspace{0pt} 
  \includegraphics[width =\linewidth]{img/c8/RealData/brain-recon-lpd_lowd40}
  \\
  {\small Learned Primal-Dual\\ (PSNR 26.91)}
\end{minipage}
\hfill
\begin{minipage}[t]{0.3\textwidth}
\centering 
  \vspace{0pt} 
  \includegraphics[width =\linewidth]{img/c8/RealData/brain-recon-joint_lowd40}
  \\
  {\small Joint approach\\ (PSNR 30.22)}
\end{minipage}
\hfill
\begin{minipage}[t]{0.3\textwidth}
  \vspace{0pt} 
  \,
\end{minipage}
\caption{Realistic data set results for general benchmarks for sparse-view CT using $40$ angles only.}
\label{fig:F12}
\end{figure}

\newpage

\appendix

\section{Notions and results from distribution theory}\label{app:DistrTheory}

Below, we collect some essential notions and results from distribution theory and the analysis of the wavefront set. These were used extensively in Subsection~\ref{subsec:ResNetMicroLocalS2S2}.

\begin{definition}[Essential support \cite{rudin1987anal}]
\label{def:S3D1bis}
Let $\domain \subset \Real^n$ be a domain. The \emph{essential support} of $f\in \SchwartzFunc'(\domain)$ is the defined as the set
\[
\esssupp(\signal) \coloneqq \Real^n \setminus \bigcup_{U \in \SetFam{U}} U,
\]
where $\SetFam{U} \coloneqq \bigl\{ U\subset\domain : U \text{ is open and } f \bigl\vert_{U}=0 \bigr\}$ 
with $f \bigl\vert_{U}$ denoting the restriction of $\signal$ to $U \subset \domain$.
\end{definition}
Note that if $f \in \Cont(\domain)$, then $\esssupp(\signal) = \supp(\signal)$, i.e., the essential support coincides with the usual support. 
We also introduce the notion of positive and negative supports of a distribution.
\begin{definition}[Positive support, \cite{rudin1987anal}]
\label{def:S3D1}
Let $\domain \subset \Real^n$ be a domain. The \emph{positive support} of $f\in \SchwartzFunc'(\domain)$ is the defined as the set
\[
\supp_{+}(\signal) \coloneqq \overline{ \bigcup_{U \in \SetFam{U}} U},
\]
where $\SetFam{U} \coloneqq
    \bigl\{ U \subset \domain :  f(\phi) > 0 \text{ for all } 
       \phi \in \SchwartzFunc(\domain) \setminus \{0\}, 
       \supp \phi \subset U, \phi \geq 0 
     \bigr\}$.
Finally, the \emph{negative support} of $\signal$ is defined as $\supp_{-}(\signal) \coloneqq (\overline{\supp_+(\signal)})^c$.
\end{definition}
From the definition, we see that  
\[ \supp_+(\signal) = \supp\bigl( | f | + f \bigr) = \overline{\{x\in \domain : f(x) > 0 \bigr\}}
    \quad\text{whenever $f \in \Cont^\infty(\domain)$.}
\]    

We now turn our attention to defining the product of two distributions. 
\begin{definition}[Product of distributions, {\cite[Definition~2]{brouder2014smoothwf}}]
\label{def:S1D3}
Let $\domain \subset \Real^n$ be a fixed domain and consider the distributions $u,v\in \SchwartzFunc'(\domain)$. 
We say that $w\in\SchwartzFunc'(\domain)$ is the \emph{product of $u$ and $v$} if an only if, for each $x\in\Real^n$, there exists a test function $\psi\in\SchwartzFunc(\domain)$, with $\psi=1$ on a neighbourhood of $x$, so that  
\begin{equation}
\label{eq:S1E2}
  \widehat{\psi^2w}(\xi) 
    = \bigl( \widehat{\psi u}\ast\widehat{\psi v} \bigr)(\xi) 
    = \int_{\Real^n} \widehat{\psi u}(\xi)\, \widehat{\psi v}(\xi-\eta)\,d\eta
\end{equation}
converges absolutely for each $\xi\in\Real^n$. In addition, under this conditions, we can define $w$ by \eqref{eq:S1E2}.
\end{definition}

\begin{theorem}[Product theorem/H\"ormander condition, {\cite[Section~3.2]{brouder2014smoothwf}}]
\label{thm:S1T5}
Let $\domain \subset \Real^n$ and $u,v\in \SchwartzFunc'(\domain)$. 
Assume there are no points $(x,\xi)\in \WF(u)$ such that $(x,-\xi)\in \WF(v)$.
Then, $w$ defined as in Definition~\ref{def:S1D3} is a well-defined unique distribution that is the product $u v$. 
Moreover, in this case we have
\[
    \WF(uv) \subset S_+\cup S_u \cup S_v,
\]
where 
\begin{align*}
S_+ &\coloneqq \Bigl\{ \bigl( x,(\xi+\omega)/\Vert \xi+\omega \Vert_2 \bigr) : (x,\xi) \in \WF(u)
\text{ and } (x,\omega)\in \WF(v) \Bigr\},
\\ 
S_u &\coloneqq \bigl\{ (x,\xi) : (x,\xi)\in\WF(u) 
  \text{ and } x\in\supp(v) \bigr\},
\\
S_v &\coloneqq \bigl\{ (x,\omega) : (x,\omega)\in\WF(v)
  \text{ and } x\in\supp(u)\bigr\}.
\end{align*}
\end{theorem}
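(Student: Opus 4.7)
The plan is to prove the two assertions separately: first that the product $uv$ is well-defined as a tempered distribution on $\domain$, and then to establish the wavefront set inclusion. The entire argument is local, so it suffices to work in a neighbourhood of each point $x_0 \in \domain$ with a cut-off $\psi \in \SchwartzFunc(\domain)$ that equals $1$ on a neighbourhood of $x_0$. The key analytic facts I will repeatedly invoke are: (i) since $\psi u$ and $\psi v$ have compact support, their Fourier transforms are continuous functions of polynomial growth, bounded by $C(1+|\eta|)^N$ for some $N,C>0$; and (ii) by the definition of $\WF$, for any closed cone $V \subset \Real^n \setminus \{0\}$ disjoint from the fibre $\WF(\psi u)_{x_0}$, the function $\widehat{\psi u}$ decays faster than any polynomial on $V$ (and similarly for $\psi v$).

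For well-definedness at a fixed $\xi \in \Real^n$, I would split the $\eta$-integration in \eqref{eq:S1E2} according to a chosen pair of closed conical neighbourhoods $V_u, V_v$ of $\WF(\psi u)_{x_0}$ and $\WF(\psi v)_{x_0}$, respectively. The H\"ormander hypothesis, together with a shrinking argument on the cones, guarantees that $V_u \cap (-V_v) = \emptyset$. Thus when $\eta \in V_u$, for $|\eta|$ sufficiently large compared with $|\xi|$, the vector $\xi-\eta$ lies in the good cone $V_v^c$, so $\widehat{\psi v}(\xi-\eta)$ decays faster than any polynomial and absorbs the $C(1+|\eta|)^N$ bound on $\widehat{\psi u}(\eta)$. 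The region $\{\xi-\eta \in V_v\}$ is handled symmetrically, and outside both bad cones both factors decay rapidly. This yields absolute and locally uniform convergence of the integral, so $\widehat{\psi^2 w}$ is a well-defined continuous function of polynomial growth; taking its inverse Fourier transform produces a tempered distribution whose independence of $\psi$ (up to the standard patching argument) gives $w \in \SchwartzFunc'(\domain)$, and the definition reduces to the ordinary pointwise product when $u,v$ are smooth.

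For the wavefront set inclusion, fix $(x_0,\zeta) \notin S_+ \cup S_u \cup S_v$. I must choose $\psi$ supported near $x_0$ and a conical neighbourhood $W$ of $\zeta$ on which $\widehat{\psi^2 w}$ decays rapidly. If $x_0 \notin \supp v$ (or $\supp u$) I choose $\psi$ so that $\psi v = 0$ locally and $w = 0$ near $x_0$. Otherwise, the conditions $(x_0,\zeta) \notin S_u, S_v$ give that $\zeta \notin \WF(\psi u)_{x_0} \cup \WF(\psi v)_{x_0}$, and $(x_0,\zeta) \notin S_+$ gives that no pair $(\xi,\omega) \in \WF(\psi u)_{x_0} \times \WF(\psi v)_{x_0}$ normalises to $\zeta$. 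By compactness of the direction sphere and continuity, I can therefore pick closed conical neighbourhoods $V_u, V_v, W$ of $\WF(\psi u)_{x_0}, \WF(\psi v)_{x_0}, \{\zeta\}$ such that $W$ is disjoint from the normalised sum set $\{(\xi+\omega)/\|\xi+\omega\| : \xi \in V_u, \omega \in V_v\}$, from $V_u$, and from $V_v$. To estimate $\widehat{\psi^2 w}(\xi)$ for $\xi \in W$ with $|\xi|$ large, decompose the integration in $\eta$ into (a) $\eta \in V_u$, (b) $\xi-\eta \in V_v$, and (c) the complement. In (a) the separation between $W$ and the sum set forces $\xi - \eta$ to lie in $V_v^c$, so $\widehat{\psi v}(\xi-\eta)$ decays rapidly; case (b) is symmetric; in (c) both factors decay rapidly. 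Combining these with the polynomial growth bounds and tracking the constants carefully produces the desired rapid decay of $\widehat{\psi^2 w}(\xi)$ uniformly for $\xi \in W$.

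The main obstacle is the quantitative cone geometry step: one must verify that the conical neighbourhoods $V_u$, $V_v$, and $W$ can be shrunk simultaneously so that all three separation properties hold, and then convert the qualitative rapid-decay statements of the wavefront set definition into estimates that, after integration against a factor of polynomial growth over a region whose volume grows polynomially in $|\xi|$, still yield faster-than-polynomial decay uniformly in the direction $\xi/|\xi| \in W$. This Paley--Wiener-style bookkeeping, together with the compactness argument producing uniform rates on closed subcones, constitutes the technical heart of the classical H\"ormander proof.
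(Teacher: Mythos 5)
The paper does not prove Theorem~\ref{thm:S1T5} at all: it is imported verbatim as a known result, cited to Brouder et al.\ (and ultimately to H\"ormander's product theorem), so there is no in-paper argument to compare against. Your proposal is a correct outline of exactly the classical proof that the cited source gives: polynomial growth of $\widehat{\psi u}$, $\widehat{\psi v}$ for compactly supported cut-offs, rapid decay on closed cones away from the singular directions, the splitting of the convolution integral into the two ``bad'' conical regions and their complement, and the observation that the hypothesis $V_u \cap (-V_v) = \emptyset$ forces the complementary factor to decay in each bad region. The one point worth tightening is your item (ii): rapid decay of $\widehat{\psi u}$ holds on closed cones disjoint from the cone of singular directions of $\psi u$, which is controlled by the union of the fibres $\WF(u)_x$ over all $x \in \supp\psi$, not only the fibre at $x_0$; this is repaired by the shrinking argument you allude to (upper semicontinuity of the wavefront set lets $\supp\psi$ be chosen small enough that this union stays inside any prescribed conical neighbourhood of $\WF(u)_{x_0}$), but it should be stated explicitly since the uniform-rate compactness step in your final paragraph depends on it. With that caveat, the sketch is sound and fills in a proof the paper deliberately omits.
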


\begin{remark}\label{rem:whatever}
If $u,v \in \Lp^2_{\mathrm{loc}}(\Real)$ and we define $uv(x) = u(x)v(x)$ almost everywhere, then the multiplication of $u$ and $v$ defined in \eqref{eq:S1E2} coincides with $uv$ almost everywhere. This holds even if there exist $(x, \lambda) \in \WF(u)$ such that $(x, -\lambda) \in \WF(v)$. To see this, let $x \in \Real^2$ and $\psi$ be as in Definition~\ref{def:S1D3}. Then 
\[
    \widehat{\psi^2(uv)}(\xi) = \int_{\Real^2} \widehat{\psi u}(\xi)\widehat{\psi v}(\nu-\xi)d\xi
\]
holds in an $L^2$ sense. Moreover, by Plancherel's identity, we have that $\widehat{\psi u}, \widehat{\psi v} \in \Lp^2(\Real^2)$, which yields with the Cauchy-Schwarz identity, that   
\[
    \int_{\Real^2} \bigl\vert \widehat{\psi u}(\xi)\widehat{\psi v}(\nu-\xi) \bigr\vert d\xi 
    \leq \Vert \widehat{\psi u} \Vert_{2} \, \bigl\Vert \widehat{\psi v}(\nu - \cdot) \bigr\Vert_{2} 
    = \Vert \widehat{\psi u} \Vert_{2} \, \Vert \widehat{\psi v}\Vert_{2} 
    < \infty.
\]
This yields absolute convergence in \eqref{eq:S1E2}.
\end{remark}

\begin{definition}\label{def:L2supp}
The $\Lp^2$-support of $\signal \in \SchwartzFunc'(\domain)$ is defined as the largest open set on $\domain$ where $\signal$ is given by an $\Lp^2$-function:
\[  \supp_{L^2}(h) \coloneqq 
    \bigcup \Bigl\{ U \subset \domain \text{ open } : 
      f\bigl\vert_{U} \in \SchwartzFunc'(U) 
    \Bigr\}.
\]
Thus, if $x \in \supp_{L^2}(h)$ then there is an open set $x \in U \subset \domain$ and $f_U \in \Lp^2(U)$ such that $f(\phi) = \int_{U} f_U(x) \phi(x) dx$ for all $\phi \in \SchwartzFunc(U)$.
\end{definition}

\bibliographystyle{abbrv}
\bibliography{references}
\end{document}